\PassOptionsToPackage{usenames,dvipsnames}{xcolor}
\documentclass{article}

\usepackage{microtype}
\usepackage{graphicx}
\usepackage{subcaption}
\usepackage{booktabs} %
\usepackage{minitoc}
\usepackage{hyperref}

\usepackage[preprint]{icml2026}

\usepackage{amsmath}
\usepackage{amssymb}
\usepackage{mathtools}
\usepackage{amsthm}

\usepackage[capitalize,noabbrev]{cleveref}

\theoremstyle{plain}
\newtheorem{theorem}{Theorem}[section]
\newtheorem{proposition}[theorem]{Proposition}

\theoremstyle{definition}
\newtheorem{definition}[theorem]{Definition}

\theoremstyle{remark}

\newtheorem{example}[theorem]{Example}

\usepackage[textsize=tiny]{todonotes}

\usepackage{graphicx}
\usepackage{amsthm}
\usepackage{enumitem}
\usepackage{placeins}

\newcommand{\av}[1]{}
\newcommand{\lk}[1]{}
\newcommand{\gm}[1]{}

\usepackage{amsmath,amssymb}
\usepackage{import} %
\usepackage{notation}
\usepackage{stmaryrd}

\usepackage{tikz}
\usepackage{multirow}
\usetikzlibrary{trees}
\usetikzlibrary{decorations.pathreplacing}
\usetikzlibrary{positioning}
\usepackage{researchpack}

\usepackage{algorithm}
\usepackage{algorithmic} %
\newcommand{\RETURN}{\STATE \textbf{return} }
\usepackage{xspace}

\newcommand{\smt}{SMT\xspace}
\newcommand{\lra}{\mathcal{LRA}}
\newcommand{\mapal}{{%
  \ifmmode%
\text{MAP}(\ensuremath{\lra})%
  \else%
MAP(\ensuremath{\lra})\xspace%
  \fi
}}
\newcommand{\smtlra}{%
  \ifmmode
    \text{SMT}(\lra)%
  \else
    SMT(\ensuremath{\lra})\xspace%
  \fi
}
\newcommand{\mpmapclass}{%
  \ifmmode
    \mathcal{MP{-}MAP}%
  \else
    $\mathcal{MP{-}MAP}$\xspace%
  \fi
}

\Crefname{equation}{Eq.}{Eqs.}
\Crefname{figure}{Fig.}{Figs.}
\Crefname{tabular}{Tab.}{Tabs.} 
\Crefname{section}{Sec.}{Secs.}
\Crefname{algorithm}{Alg.}{Algs.} 
\Crefname{example}{Ex.}{Exs.}
\Crefname{definition}{Def.}{Defs.}

\newcommand{\factor}[1]{F_{#1}}
\renewcommand{\neigh}{\ensuremath{\mathsf{neigh}}}
\newcommand{\parent}{\ensuremath{\mathsf{pa}}}
\newcommand{\msg}[3]{\mathsf{m}_{{#1} \rightarrow {#2}}^{#3}}
\DeclareMathOperator*{\argmaxc}{arg\,max\,coord}
\newcommand{\formula}{\ensuremath{\Delta}\xspace}
\newcommand{\formulaExampleOpt}{\ensuremath{\Phi}\xspace}
\newcommand{\formulaExampleMP}{\formula}%
\newcommand{\formulaExampleMPDetailFst}{\formula}%
\newcommand{\formulaExampleMPDetailSnd}{\formula}%

\newcommand{\wfamily}[1]{\ensuremath{\boldsymbol{\Omega}^{#1}}\xspace}
\newcommand{\wfun}{\ensuremath{p}}
\newcommand{\densExampleMP}{\ensuremath{p}}

\newcommand{\varscope}{\ensuremath{\mathcal{S}}}

\newcommand{\mpmap}{\textsc{MpMap}}

\renewcommand{\paragraph}[1]{\vspace{0pt}{\textbf{#1}}}

\newcommand{\append}{\ensuremath{\mathsf{append}}}
\newcommand{\bounds}{\ensuremath{\mathsf{bounds}}}
\newcommand{\breakpoints}{\ensuremath{\mathsf{breakpoints}}}
\newcommand{\computemsgs}{\ensuremath{\mathsf{compute\text{-}msgs}}}
\newcommand{\criticalpoints}{\ensuremath{\mathsf{critical\text{-}points}}}
\newcommand{\dom}{\ensuremath{\mathsf{dom}}}
\newcommand{\dominatingpoly}{\ensuremath{\mathsf{dominating\text{-}poly}}}
\newcommand{\emptypiecewise}{\ensuremath{\mathsf{empty\text{-}piecewise}}}
\newcommand{\exclusiveorinclusivestart}{\ensuremath{\mathsf{exclusive\text{-}or\text{-}inclusive\text{-}start}}}
\newcommand{\extractpolypiece}{\ensuremath{\mathsf{extract\text{-}poly\text{-}piece}}}
\newcommand{\extremepoints}{\ensuremath{\mathsf{extreme\text{-}points}}}
\newcommand{\findsimbolicboundsin}{\ensuremath{\mathsf{find\text{-}symbolic\text{-}bounds\text{-}in}}}
\newcommand{\gathermsgs}{\ensuremath{\mathsf{gather\text{-}msgs}}}
\newcommand{\gatherroot}{\ensuremath{\mathsf{gather\text{-}root}}}
\newcommand{\getfiniteelem}{\ensuremath{\mathsf{get\text{-}finite\text{-}elem}}}
\newcommand{\getmsgpieces}{\ensuremath{\mathsf{get\text{-}msg\text{-}pieces}}}
\newcommand{\highestnvderivativeat}{\ensuremath{\mathsf{highest\text{-}non\text{-}vanishing\text{-}derivative\text{-}at}}}
\newcommand{\innermax}{\ensuremath{\mathsf{inner\text{-}max}}}
\newcommand{\intersection}{\ensuremath{\mathsf{intersection}}}
\newcommand{\intervals}{\ensuremath{\mathsf{intervals}}}
\newcommand{\intervalsfrompoints}{\ensuremath{\mathsf{intervals\text{-}from\text{-}points}}}
\newcommand{\isempty}{\ensuremath{\mathsf{is\text{-}empty}}}
\newcommand{\ispiecewiseexponentianted}{\ensuremath{\mathsf{is\text{-}piecewise\text{-}exponentiated}}}
\newcommand{\maxout}{\ensuremath{\mathsf{max\text{-}out}}}
\newcommand{\maxpieces}{\ensuremath{\mathsf{max\text{-}pieces}}}
\newcommand{\overallbounds}{\ensuremath{\mathsf{overall\text{-}bounds}}}
\newcommand{\piece}{\ensuremath{\mathsf{piece}}}
\newcommand{\preparebreaks}{\ensuremath{\mathsf{prepare\text{-}breaks}}}
\newcommand{\randompolyunivar}{\ensuremath{\mathsf{random\text{-}poly\text{-}univar}}}
\newcommand{\roots}{\ensuremath{\mathsf{roots}}}
\newcommand{\simplify}{\ensuremath{\mathsf{simplify}}}
\newcommand{\topieces}{\ensuremath{\mathsf{to\text{-}pieces}}}

\hypersetup{
colorlinks=true,linkcolor=teal,citecolor=OliveGreen
}

\icmltitlerunning{The Theory and Practice of MAP Inference over Non-Convex Constraints}

\begin{document}

\twocolumn[
\icmltitle{The Theory and Practice of MAP Inference over Non-Convex Constraints}

\icmlsetsymbol{equal}{*}

\begin{icmlauthorlist}
\icmlauthor{Leander Kurscheidt}{equal,edi}
\icmlauthor{Gabriele Masina}{equal,trento}
\icmlauthor{Roberto Sebastiani}{trento}
\icmlauthor{Antonio Vergari}{edi}
\end{icmlauthorlist}

\icmlaffiliation{edi}{School of Informatics, University of Edinburgh, UK}
\icmlaffiliation{trento}{DISI, University of Trento, Trento, Italy}

\icmlcorrespondingauthor{Leander Kurscheidt}{l.kurscheidt@sms.ed.ac.uk}
\icmlcorrespondingauthor{Gabriele Masina}{gabriele.masina@unitn.it}

\icmlkeywords{Machine Learning, ICML}

\vskip 0.3in
]

\doparttoc %
\faketableofcontents %

\printAffiliationsAndNotice{\icmlEqualContribution} %

\begin{abstract}

    In many safety-critical settings,  probabilistic ML systems have to make predictions subject to algebraic constraints, e.g., predicting the most likely trajectory that does not cross obstacles.
    These real-world constraints are rarely convex, nor the densities considered are (log-)concave.
    This makes computing this constrained maximum a posteriori (MAP) prediction efficiently and reliably  extremely challenging.
    In this paper, we first investigate under which conditions we can perform constrained MAP inference over continuous variables exactly and efficiently and devise a scalable message-passing algorithm for this tractable fragment.
    Then, we devise a general constrained MAP strategy that interleaves partitioning the domain into convex feasible regions with numerical constrained optimization.
    We evaluate both methods on synthetic and real-world benchmarks, showing our %
    approaches outperform constraint-agnostic baselines, and scale to complex densities intractable for SoTA exact solvers.

\end{abstract}

\section{Intro}

Making predictions with probabilistic machine learning (ML) models can be mapped to performing \textit{maximum a posteriori} (MAP; \citealt{bishop2006pattern}) \textit{inference}, i.e., computing the output configuration with the highest likelihood according to the distribution learned by the model.
However, in several real-world scenarios, from 
physics applications~\cite{hansen2023learning, cheng2024gradient} to fair predictions \citep{ghandi2024probabilistic} and ``what-if'' time-series analysis~\cite{narasimhan2024time},
distributions are  \textit{constrained}, i.e., some configurations are infeasible and  should never be predicted nor sampled \citep{grivas2024taming}.
This is mandatory if ML models are deployed in safety-critical scenarios \citep{giunchiglia2023road,bortolotti2024neuro}.

\begin{figure}[t!]
    \centering
    \begin{subfigure}[T]{0.41\linewidth}
        \includegraphics[width=\linewidth]{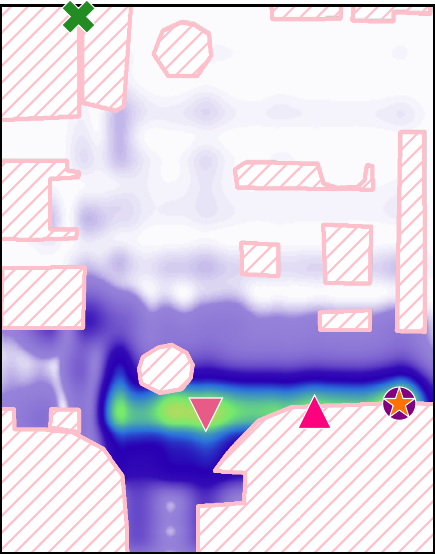}
    \end{subfigure}
    \hfill
    \begin{subfigure}[T]{0.57\linewidth}
        \includegraphics[width=\linewidth]{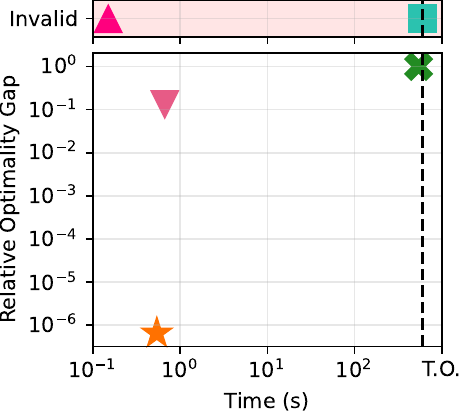}
    \end{subfigure}
    
    \vspace{1mm}
    \begin{subfigure}{\linewidth}
        \includegraphics[width=\linewidth]{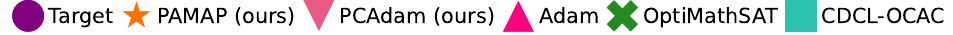}
    \end{subfigure}
    \caption{
        \textbf{{Our structure-aware solver \pamap{} is able to correctly and efficiently perform MAP inference over non-convex constraints and non-log-concave densities}} while classical optimizers (Adam), even when being constrained-aware (\pcadam{}; \cref{sec:experiments}), are imprecise and slower and when exact solvers  (OptiMathSAT \citep{sebastiani2020optimathsat} and CDCL-OCAC~\cite{jia2025complete}) timeout.
        If the density factorizes as a tree, our \mpmap{} solver (\cref{sec:mp-map}) can be exact and even faster (see \cref{fig:mp_map_timeout_overview}).
    }%
    \label{fig:intro-ex}
\end{figure}

Constrained MAP inference is  well understood from an optimization perspective when it comes to discrete variables \citep{marinescu2004and,martins2011augmented} or when  the problem has a simple form, i.e., constraints are convex \citep{dantzig2002linear,jaggi2013revisiting} and distributions are log-concave \citep{doss2019inference}, as we discuss in \cref{sec:related}.
However, these assumptions are generally not met in real-world applications \citep{de2023neural,kurscheidt2025probabilistic,stoianbeyond} and understanding how to perform efficient MAP inference over non-convex constraints and non-log-concave distributions is an open and challenging problem.
In this paper, we reduce this gap by advancing a number of contributions, discussed next.

First, \textbf{C1)} we theoretically trace a non-trivial fragment of tractable constrained MAP problems over non-convex constraints and distributions represented as tree-factorized piecewise (exponentiated) polynomials.
We then prove that it can be solved by an efficient message passing scheme (\mpmap; \cref{sec:mp-map}).
Our \mpmap{} is inspired by message passing schemes to compute the probability of non-convex constraints \citep{ZengICML20,zeng2020probabilistic}, but differently from them, performing MAP inference yields different challenges and different complexity results.

Second, \textbf{C2)} we investigate 
how to approximate constrained MAP via optimization for general constraints and distributions for which \mpmap{} is not applicable.
To this end, we design \pamap{}, a general scheme that decomposes global optimization into a series of MAP inference problems over convex constraints which can be efficiently solved by calling local optimizers and can provide approximation guarantees for polynomial densities \citep{powers1998algorithm,lasserre2001global}.
Lastly, \textbf{C3)} we rigorously evaluate \mpmap{} and \pamap{} over a set of synthetic and real-world benchmarks, reporting  that our custom optimizers, are able to outperform a number of SoTA optimizers \citep{sebastiani2020optimathsat,jia2025complete} both in terms of approximation quality and time to achieve it (see \cref{fig:intro-ex}).
As such, we set the first milestone to tackle the challenging problem of constrained optimization from both theory and practice.

\section{Maximum A Posteriori Inference Under Non-Convex Algebraic Constraints}

\paragraph{Notation.}
We denote random variables by uppercase letters (e.g., $X, Y$), and their assignments with lowercase ones (e.g., $x, y$). Bold symbols denote sets of variables (e.g., $\vX, \vY$), and their joint assignments (e.g., $\vx, \vy$). Greek letters such as $\formula, \formulaExampleOpt$ or $\formulaExampleMP$ denote logical formulas that map real values to binary values (false, true). We say that assignment $\vx$ satisfies the constraint $\formula$, and denote it as $\vx \models \formula$, if substituting $\vx$ into $\formula$ makes $\formula$ true. 
So, the indicator function $\id{\vx \models \formula}$ is 1 when $\vx$ satisfies $\formula$, 0 otherwise.

\begin{figure}[t!]
    \centering
    \begin{subfigure}{0.32\linewidth}
        \includegraphics[width=\linewidth]{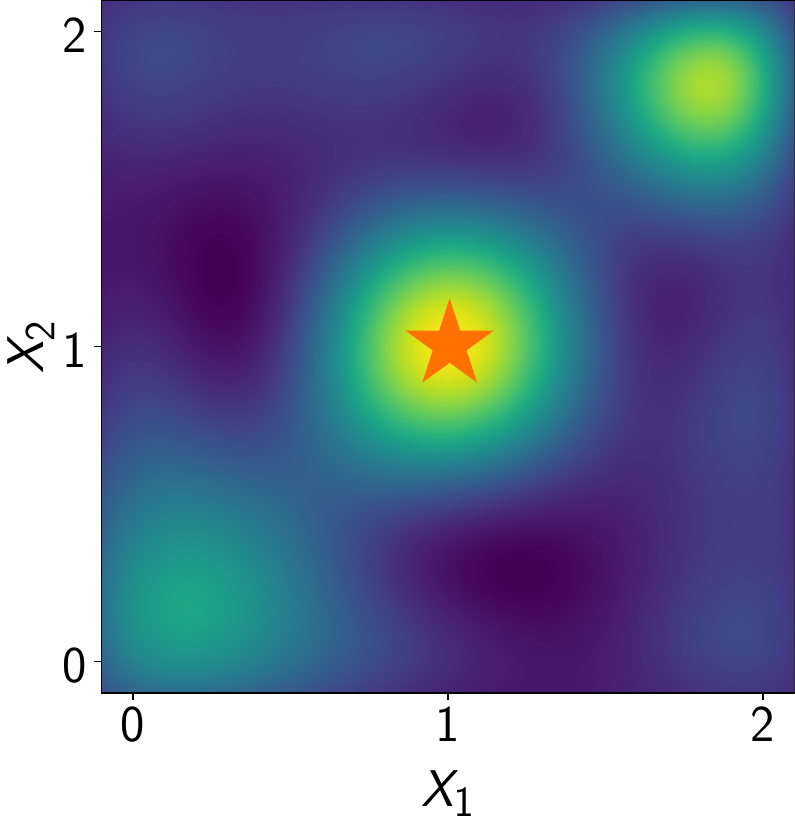}
    \end{subfigure}
    \begin{subfigure}{0.32\linewidth}
        \includegraphics[width=\linewidth]{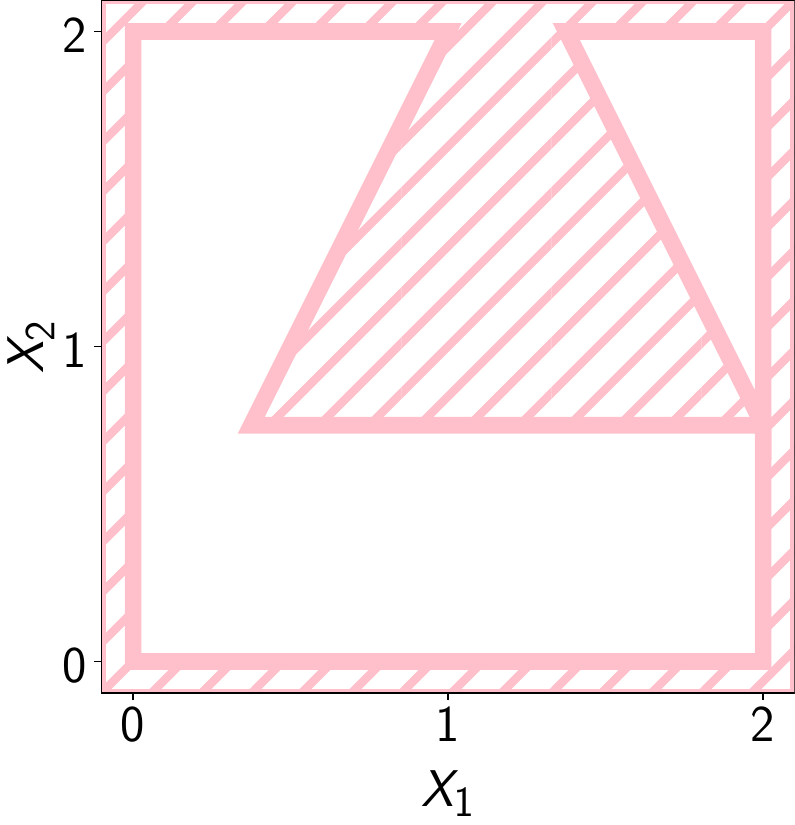}
    \end{subfigure}
    \begin{subfigure}{0.32\linewidth}
        \includegraphics[width=\linewidth]{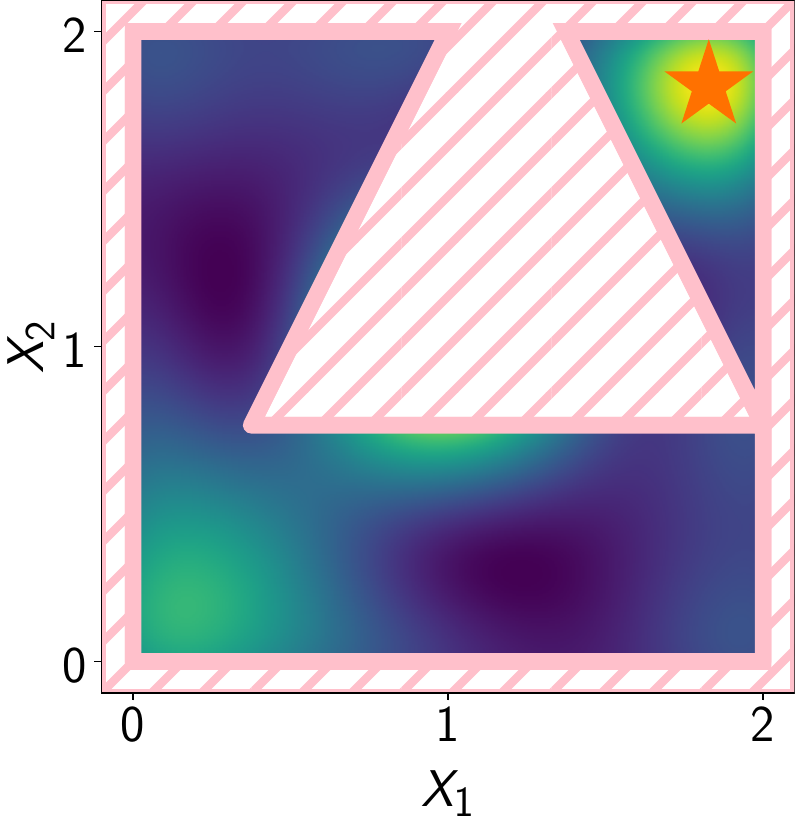}
    \end{subfigure}
    \caption{\textbf{An example of \ref{eq:constrained-map}{} inference} over non-convex constraints and non-log-concave density. Left: an unconstrained density in 2D.  Center: non-convex constraints. 
    Right: constrained density. Orange stars indicate the solutions for the MAP problem for the unconstrained and constrained densities.
    }
    \label{fig:intro-ex-cmap}
\end{figure}

\paragraph{SMT($\Tlarat$) Constraints.}
We consider algebraic constraints over $\vX$ representing a collection of non-convex polytopes.
We express them in the language of \textit{satisfiability modulo theory over linear real arithmetic} (SMT($\Tlarat$); \citealt{barrett2021satisfiability}), hereafter just SMT for short.
We consider quantifier-free SMT-formulas over continuous variables $\vX$, where linear (in)equalities $\sum_i a_i X_i \bowtie{} b$, with $\bowtie\,\in\{\le, =\}$, are connected via Boolean connectives ---i.e., conjunctions ($\wedge$), disjunctions ($\vee$), and negations ($\neg$).
As we show in \cref{fig:intro-ex}, SMT is flexible enough to represent rich real-world constraints.
We now provide a simpler example.

\begin{example}[SMT formula]
    Consider the following SMT formula over  variables $\vX=\{X_1, X_2\}$:
    \begin{align*}
    \formula(\vX)&=X_1\in[0,2]\wedge X_2\in[0,2]\wedge\\
        &\left(X_2\le 1\vee X_2> 2 X_1 \vee X_2 > 4.75 -2 X_1\right).
    \end{align*}
It denotes a square feasible area with an inner infeasible triangle area, as shown in \cref{fig:intro-ex-cmap} (center).
\label{ex:smt}
\end{example}

\paragraph{MAP under SMT constraints.}
Given a probability density function $p(\vX)$ over continuous random variables $\vX = \{X_1, \ldots, X_n\}$ and some SMT constraints $\formula$ over $\vX$, the goal of constrained MAP inference is to find the assignment $\vx^*$ that maximizes $p(\vX)$ while satisfying $\formula$:
\begin{equation}%
    \tag{\mapal}
    \label{eq:constrained-map}
    \argmax_{\vx\models\formula}{p(\vx)} = \argmax_{\vx}\widetilde{p}(\vx)\id{\vx\models\formula}
\end{equation}
where $\widetilde{p}$ can be an unnormalized density, i.e., a nonnegative function. 
Clearly, the solution to \ref{eq:constrained-map} can drastically differ from the one for MAP inference over an unconstrained distribution.
\begin{example}[Constrained MAP]
    Consider the SMT formula of \cref{ex:smt}  and the unnormalized polynomial shown in \cref{fig:intro-ex-cmap} (left). 
    The unconstrained optimum is $(1.0, 1.0)$, but the constrained one is $(1.83, 1.83)$, shown on the right.
    Note also that the new optimum is in a disconnected polytope, highlighting how the optimization problem for \ref{eq:constrained-map} has a combinatorial nature that makes it challenging for traditional continuous optimizers.
\end{example}

Examples of \ref{eq:constrained-map} problems can be found in several recent works where SMT constraints are defined over the output of a neural network \citep{de2023neural,
kurscheidt2025probabilistic}.
As in those scenarios we deal with conditional distributions, i.e., we want to solve $\argmax_{\vy\models\formula}{p(\vy\mid\vx)}$, we have to solve one \ref{eq:constrained-map} problem for each datapoint $\vx$.
This motivates us to find fast and reliable solvers
that can  be parallelized. 
We review next, when it is already known how to solve \ref{eq:constrained-map} for simple forms of distribution $p$ and constraints $\formula$.

\section{The SoTA of Constrained MAP Inference}%
\label{sec:related}

Solving \ref{eq:constrained-map} exactly is generally computationally hard, as even optimizing a quadratic function under linear constraints is NP-hard~\cite{sahni1974computationally}.
However, efficient methods for specific problem classes exist, each one trading off expressiveness, efficiency, and optimality guarantees in a different way, as we review next.

\paragraph{Convex constraints, log-concave densities.}
If the constraint $\formula$ is \textit{convex}, e.g., it is defined as a \emph{conjunction} of linear constraints, and the objective is \textit{(log-)concave}, any local optimum is also a global optimum. This property ensures that algorithms such as sequential quadratic programming~\cite{boggs1995sequential} and interior-point methods~\cite{conn2000trustregion} are guaranteed to efficiently converge to the global optimum.

\paragraph{Convex constraints and piecewise densities.}
When the density decomposes in several pieces of simple form, one can go beyond log-concavity and use symbolic bucked elimination for \textit{constrained piecewise functions} as in \citet{10.1007/978-3-319-93031-2_42}. It tackles the problem of optimizing a sum of piecewise linear or univariate quadratic (LUQF) functions, i.e., only one variable can be quadratic in each piece, under convex, linear constraints. 
The approach is based on the XADD-data structure \cite{10.5555/1642293.1642513} and leverages the symbolic, partial maximization algorithm of \citet{Zamani_Sanner_Fang_2021}. 
A similar line of work by \citet{10.1007/978-3-031-33271-5_6} deals with mixed-integer linear programming over convex, and linear constraints which decompose into two sets of constraints over disjoint sets of variables. 

\paragraph{General polynomial densities.}
For non-convex problems, exact global constrained optimization is very challenging, even 
if $p$ is a polynomial.
For instance, the \textit{cylindrical algebraic decomposition} (CAD)~\cite{collins1975quantifier,arnon1984cylindrical,wolfram2025exact}  has doubly-exponential worst-case complexity in the number of variables.
A classical way to approximate 
\ref{eq:constrained-map} with guarantees for polynomials, comes from 
\citet{lasserre2001global} where a hierarchy of semidefinite programming relaxations is used.
As the relaxation order increases, the solution provides increasingly tight upper bounds, converging to the global optimum in the limit. With linear constraints, it is possible to compute a lower bound on the global maximum.
The dual formulation of the problem can be solved via \textit{sum-of-squares} optimization (SoS), which also provides a sequence of improving upper bounds converging to the global maximum.

\paragraph{Local optimizers for general densities and convex constraints.}
A practical way to approximate \ref{eq:constrained-map} when $\Delta$ is convex, is to run a local optimizer starting from multiple initial points (particles), to increase the chances of finding the global optimum.
E.g., Basin Hopping~\cite{wales1997global} combines local minimization with stochastic perturbations of the current solution, whereas \shgo~\cite{endresSimplicialHomologyAlgorithm2018} exploits a simplicial complex to identify locally convex subdomains corresponding to distinct minima, so to systematically explore of the objective landscape.
These methods scale well with problem dimensionality and make minimal assumptions about the shape of the constraints or objective function ---typically requiring only smoothness or Lipschitz continuity--- but provide no guarantees of finding the global optimum in the general case.
We leverage these optimizers in our \pamap{} in \cref{sec:pamap} when we decompose a global constraint into convex polytopes.

\paragraph{Optimization under general SMT constraints.}
Optimization under SMT constraints is known in the literature as \textit{optimization modulo theories} (OMT;~\citealt{nieuwenhuis2006sat,sebastiani2012optimization,sebastiani2015optimization}).
In OMT, however, both the objective function (our density) and the constraint, need to belong to the same family.
For linear constraints and  objectives, OMT can be solved efficiently using an SMT solver extended with optimization capabilities~\cite{sebastiani2012optimization,li2014symbolic,bjorner2015nz}.
The extension to non-linear polynomial constraints and objective functions, however, is non-trivial and remains an active area of research.
OptiMathSAT~\cite{sebastiani2020optimathsat} uses an incomplete approach, solving a linear overapproximation of the original non-linear problem which is iteratively refined~\cite{bigarella2021optimization}.
Recently, \citet{jia2025complete} proposed CDCL-OCAC, a complete algorithm based on a variant of CAD, hence suffering from the same scalability limitations.
Finally, we remark that the extension of common constrained optimization methods ---such as penalty or barrier methods \citep{nocedal2006numerical}--- to SMT constraints with a complex Boolean structure is not straightforward. Indeed, to the best of our knowledge, no such extension exists in the literature, making these methods not directly applicable to \ref{eq:constrained-map}.

\section{A Scalable Message-Passing Algorithm For Constrained MAP}
\label{sec:mp-map}

As discussed in the previous section, tractable fragments of \ref{eq:constrained-map} are restricted to 
convex constraints and piecewise linear or univariate quadratic functions \citep{10.1007/978-3-319-93031-2_42}. 
As a first contribution \textbf{C1)}, we now relax these requirements to cover a larger class of tractable constrained MAP problems: those involving non (log-)concave densities  and constraints factorizing as a tree.
We do so by building on prior work that enables exact integration over algebraic constraints, also known as weighted model integration (WMI)~\citep{belle2015probabilistic,morettin-wmi-ijcar17}, via a message-passing scheme~\citep{ZengICML20,zeng2020probabilistic}.
While our approach takes inspiration from WMI, we show that the class of tractable densities for WMI is not suitable for tractable \ref{eq:constrained-map}.

\paragraph{A tree-shaped \mapal.} 
Similarly to \citet{ZengICML20,zeng2020probabilistic}, we consider problems where the product between the density and constraints $\id{\vx\models\formula}\cdot p(\vx)$ decomposes with a tree-shaped graph structure. %
We first start with the SMT formula $\formula$, which we assume is in conjunctive normal form (CNF) and with at most bivariate clauses. As a consequence, the indicator function factorizes as
\[
    \!\! \llbracket \vx \models \formula \rrbracket %
    = \mkern-15mu \prod_{X_i,X_j \in \mathcal{E}_{\formula}} \mkern-5mu \llbracket (x_i,x_j) \models \formula_{ij} \rrbracket \prod_{v \in V_{\formula}} \llbracket x_v \models \formula_{v} \rrbracket
    \label{eq:indicator_factorization}
\]
where $\mathcal{E}_{\formula}$ (resp. $V_{\formula}$)  is the set of pairs of variables appearing in
a same bivariate clause (resp. univariate clauses).
clauses, 
and $\formula_\varscope$ is the restriction of $\formula$ to the clauses over the variables in $\varscope$. 
Furthermore, we require the graph $(\vX, \mathcal{E}_{\formula})$, also called the primal graph of $\formula$, to have a treewidth of one, thus encoding a tree (or a forest), as shown next. 
\begin{example}[Primal graph of SMT formula]
    \label{ex:primal_graph_smt}
    The following SMT formula over variables $\vX=\{X_1, X_2, X_3\}$ (left) exhibits a tree-shaped primal graph $(\vX, \mathcal{E}_{\formulaExampleMP})$ (right):
    \begin{minipage}{0.48\columnwidth}
        \begin{equation*}
            \begin{aligned}
                \formulaExampleMPDetailFst_i  & = |X_i| \leq 1                                                  \\
                \formulaExampleMPDetailSnd_{1i} & = 1 \leq |X_1 - X_i| \leq 2                                     \\
                \formulaExampleMP    & = \formulaExampleMPDetailSnd_{12} \land \formulaExampleMPDetailSnd_{13} \land \bigwedge\nolimits_{i=1}^3 \formulaExampleMPDetailFst_i
            \end{aligned}
        \end{equation*}
    \end{minipage}\hfill
    \begin{minipage}{0.48\columnwidth}
    \raggedright
        \hspace{3em} \begin{tikzpicture}[sibling distance=10mm, level distance=10mm,
                every node/.style={circle, draw},grow=right, line width=1pt]
            \node {$X_1$}
            child { node {$X_2$} }
            child { node {$X_3$} };
        \end{tikzpicture}
    \end{minipage}
\end{example}
Additionally, a similar structure is assumed for $p$, which we also assume to factorize into at most bivariate components:
\begin{equation}
    p(\vx)=\prod_{X_i,X_j \in \mathcal{E}_{p}} \wfun_{ij}(x_i,x_j) \prod_{X_i \in \vX} \wfun_i(x_i)
    \label{eq:density_factorization}
\end{equation}
Here, the set $\mathcal{E}_{p}$ denotes the set of variable pairs appearing in the domains of the bivariate functions $\wfun_{ij}$. 
We also assume the graph $(\vX, \mathcal{E}_{p})$ to have a treewidth of one.
This is not enough however, in order to tractably compute \mapal, we need a few tractable operations and properties over the functions $\wfun_{ij}$ and $\wfun_{i}$:%

\begin{definition}[Tractable MAP Conditions]
\label{def:tractable_mp_conditions}
We say that the \emph{tractable MAP conditions} (TMC) hold for a family of functions $\wfamily{}$ if we have:
\begin{enumerate}[label=(\roman*),noitemsep,topsep=0pt]
    \item \textbf{\textit{Closedness under product}:} $\forall f, g \in \wfamily{}:f \cdot g \in \wfamily{}$;
    
    \item\label{item:tractable-supremum} \textbf{\textit{Tractable symbolic supremum}:} For any bivariate $f \in \wfamily{}$ and bounds $l(x_i), u(x_i)$ in $\lra$,
    $
        m(x_i) \coloneq (\arg)\sup_{x_j \in [l(x_i),u(x_i)]} f(x_i,x_j)
    $
    belongs to $\wfamily{}$ and can be computed tractably; %
    
    \item \textbf{\textit{Tractable pointwise maximum}:} For any univariate $f,g \in \wfamily{}$,
    $
        o(x) \coloneq (\arg)\max\{f(x),g(x)\}
    $
    belongs to $\wfamily{}$ and can be computed tractably. 
\end{enumerate}
\end{definition}

We note that the function family identified by \citet{ZengICML20} that enables tractable integration over SMT formulas (tractable WMI), does not necessarily satisfy our TMC conditions.
For example, general piecewise polynomials enable tractable WMI, but can violate property \ref{item:tractable-supremum} of \cref{def:tractable_mp_conditions}, as we discuss in \cref{app:incomparability-wmi-map}. 
To pinpoint a function class satisfying \cref{def:tractable_mp_conditions}, we have to add further properties to polynomials.
To this end, we identify two major families of functions for which TMC holds: $\wfamily{\mathsf{PP}}$ and $\wfamily{\mathsf{PEP}}$.
$\wfamily{\mathsf{PP}}$ is the family of piecewise polynomial functions, where the bounds for the finite number of pieces are defined by conjunctions of linear inequalities and each polynomial \textit{factorizes} into a products of two univariate polynomials.
\begin{example}[Example of a density in $\wfamily{\mathsf{PP}}$]
    \label{ex:example_wfamily_pp}
    The density $p$ over $\vX{=}\{X_1, X_2, X_3\}{\in}[-1,1]^3$ that factorizes as%
\begin{equation*}
\begin{aligned}
    \densExampleMP_{1}(x_1)       &= 0.05
    & \densExampleMP_{2}(x_2)       &= (x_2 + 1)
    \\
    \densExampleMP_{1,3}(x_1,x_3) &= (1 - x_1)(3 - x_3) & \densExampleMP_{3}(x_2)         & = (1 - x_3) \\
    \densExampleMP_{1,2}(x_1,x_2) &= \mathrlap{0.2\cdot(x_1 {-} 0.9)^2 (x_2 {+} 0.9)^2 \id{x_1 {-} x_2 {<} 0}} \\
    \densExampleMP_{1,2}(x_1,x_2) &= \mathrlap{(x_1 {+} 1)  \id{x_1 {-} x_2 {\geq} 0 \wedge x_1 \leq 0.5}} & & 
\end{aligned}
\end{equation*}
\end{example}
\begin{figure}[!h]
    \begin{minipage}{.55\linewidth}
        \includegraphics[width=.9\linewidth]{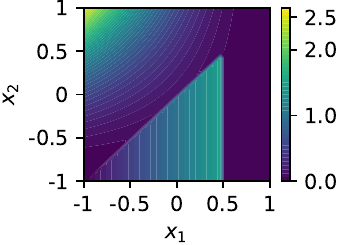}
    \end{minipage}\hfill\begin{minipage}{.44\linewidth}
            \caption{
            \textbf{Densities in $\wfamily{\mathsf{PP}}$ can express complex and multimodal densities} as shown here for the density  $\densExampleMP_{1,2}(x_1,x_2)$ from example \ref{ex:example_wfamily_pp}. Despite factorizing into univariate polynomials, different pieces can recover correlations.
    }
    \end{minipage}
    \label{fig:mp_example_dens_p12}
\end{figure}

Similarly, $\wfamily{\mathsf{PEP}}$ is the family of piecewise exponentiated polynomials, 
where each piece factorizes into a product of exponentiated univariate polynomials. 
A prominent example of $\wfamily{\mathsf{PEP}}$ is the multivariate Gaussian with independent components.
As such, our TMC conditions cover functions that do not support tractable integration according to \citet{ZengICML20}, hinting to the fact that WMI and \ref{eq:constrained-map} are incomparable problems (see \cref{app:incomparability-wmi-map}).

To properly define a tractable fragment of \ref{eq:constrained-map}, we need to enforce the global problem structure, defined next, to be a tree. 
That, in turn, leads to our first main result.
\begin{definition}
\label{def:mp_map_graph}
The global structure of a \mapal{} problem over an SMT formula $\formula$ and density $p$ factorizing as \eqref{eq:indicator_factorization} and \eqref{eq:density_factorization}
is the graph $\mathcal{G}=(\vX,\mathcal{E})$ with $\mathcal{E} \coloneq \mathcal{E}_{\formula} \cup \mathcal{E}_{p}$.
\end{definition}

\begin{theorem}[Tractability of \mapal{}]
\label{th:mp_tractability}
If the global graph of \mapal{} has treewidth one and bounded diameter, and the density fulfills the TMC (\cref{def:tractable_mp_conditions}), then \mapal{} can be solved tractably.
\end{theorem}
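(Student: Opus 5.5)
We prove \cref{th:mp_tractability} with a max-product message-passing scheme on the global tree $\mathcal{G}$, i.e.\ dynamic programming, and show that every intermediate object stays in $\wfamily{}$ and has polynomially bounded size.

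\textbf{Step 1: one factor per node and per edge.} We first absorb the constraints into the density. Since $\mathcal{E} = \mathcal{E}_{\formula} \cup \mathcal{E}_{p}$ and $\mathcal{G}$ has treewidth one, every bivariate clause $\formula_{ij}$ and every bivariate factor $\wfun_{ij}$ lives on an edge of a forest. For each edge $(i,j)\in\mathcal{E}$ we define
\[
\phi_{ij}(x_i,x_j) = \wfun_{ij}(x_i,x_j)\,\id{(x_i,x_j)\models\formula_{ij}},
\]
taking $\wfun_{ij}\equiv 1$ or $\formula_{ij}\equiv\top$ when the corresponding factor is absent. For each node we define $\phi_i = \wfun_i\,\id{x_i\models\formula_i}$. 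An indicator of a conjunction of linear inequalities is a piecewise constant function with $\lra$-bounded pieces. Disjunctions inside a bivariate clause become finitely many such pieces, obtained by splitting into disjoint convex cells. Hence $\phi_{ij},\phi_i\in\wfamily{}$ by closedness under product (TMC (i)). A forest is handled tree by tree, since the maximum of a product over disconnected components is the product of the per-component maxima.

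\textbf{Step 2: root the tree and pass messages.} We root each tree at an arbitrary node $r$ and process nodes from the leaves upward. The message from a child $j$ to its parent $i$ is
\[
\msg{j}{i}{}(x_i) = \sup_{x_j} \phi_{ij}(x_i,x_j)\,\phi_j(x_j)\prod_{k\in\mathrm{ch}(j)} \msg{k}{j}{}(x_j).
\]
The product inside the supremum lies in $\wfamily{}$ by (i). It is piecewise, with each piece defined on a convex polygon in the $(x_i,x_j)$-plane. For fixed $x_i$, each such polygon gives $x_j\in[l(x_i),u(x_i)]$ with $l,u$ piecewise linear in $x_i$; we split by the $x_i$-breakpoints so that $l,u$ are linear on each sub-range. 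TMC (ii) then gives the symbolic supremum piece by piece as a univariate function in $\wfamily{}$. TMC (iii) combines the pieces by pointwise maximum, again tractably and within $\wfamily{}$.

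Correctness is the standard distributive argument: $\max(ab,ac)=a\max(b,c)$ for $a\ge 0$, together with the fact that the subtrees of distinct children are variable-disjoint, which holds because $\mathcal{G}$ is a tree. At the root we maximize the univariate function $\phi_r\prod_{k\in\mathrm{ch}(r)}\msg{k}{r}{}$ using TMC (iii) over its pieces. We then recover the arg-max top-down, using the stored symbolic argmax functions $x_j^*(x_i)$ provided by the ``$(\arg)$'' parts of TMC (ii) and (iii).

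\textbf{Step 3: bounding the work, the main obstacle.} Each TMC operation is tractable, but it may multiply the number of pieces or the degree. For example, the supremum over $x_j$ introduces new breakpoints in $x_i$, and the product of child messages merges their breakpoint sets. Composing such operations along a path of length $d$ could therefore blow up exponentially in $d$. This is exactly why the theorem assumes bounded diameter: message size then grows by at most a constant number of compositions along any root-to-leaf path.

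I would argue that each operation increases the size (number of pieces times degree) at most polynomially, say $s\mapsto \mathrm{poly}(s)$. Because the diameter $D$ is constant, the $D$-fold composition is still a polynomial. Products at a node with many children merge breakpoint sets additively in the number of pieces, which remains polynomial in the input. Summing over the $n$ messages then gives overall polynomial time.

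The delicate points are stating precisely what ``tractable'' means in TMC (ii)/(iii), so that the output size is polynomial in the input size, and checking that products of sibling messages do not compound with depth beyond the bounded-diameter factor.
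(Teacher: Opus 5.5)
Your proposal is correct and follows essentially the same constructive route as the paper. The paper likewise uses max-product messages on the rooted tree. Each factor-to-variable message is reduced, by splitting at the $x_i$-breakpoints induced by the constraint cells, to symbolic suprema under linear bounds (TMC (ii)) combined by pointwise maxima (TMC (iii)). Bounded diameter then ensures that the polynomial output-size growth of these steps is nested only a constant number of times.

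One point is worth making explicit, since this splitting is not covered by the TMC: it yields polynomially many cells and breakpoints in the number $c$ of atoms of $\Delta_{ij}$ (the paper uses the $2c+c^2$ critical-point bound). ``Finitely many'' alone would not give tractability.
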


The proof is by construction and detailed in \cref{app:mp-map-algo}. 
It proceeds by building a fixed-parameter tractable message-passing algorithm, which we discuss next.

\paragraph{\mpmap.}
The key idea behind our message-passing algorithm for constrained MAP (\mpmap) is to exploit the tree structure of the global graph. 
By iteratively conditioning on a variable node, we render its children independent, and allow maximization to be performed independently over each sub-tree. 
We can therefore decompose the computation of the overall maximum into smaller problems until we arrive at the leaves.
We start by reordering $\llbracket \vx \models \formula \rrbracket \cdot p(\vx)$ by grouping them via $\mathcal{E}$ and introduce the factor representation:
\[
    \factor{\varscope}(\vx_\varscope) \coloneq \llbracket \vx_{\varscope} \models \formula_\varscope \rrbracket \cdot \wfun_S(\vx_S)
\]
with the scope $\varscope$ being over both variables in edges in $\mathcal{E}$ (e.g.\ $\factor{13}$) and single variable indices in $\vX$ (e.g.\ $\factor{2}$).
We set $\wfun_\varscope$ to $1$ and $\formula_\varscope$ to True if not previously defined in \formula and $p$. This results in $\llbracket \vx \models \formula \rrbracket \cdot p(\vx) = \prod_\varscope \factor{\varscope}(\vx_\varscope)$.

If we manage to ``maximize out'' (akin to marginalizing out) the variables one by one, e.g., in the example calculate $\max_{x_i} \factor{i1}(x_i, x_1) \factor{i}(x_i)$ as a function of $x_1$, we have a scalable algorithm to compute exactly the constrained MAP even for high-dimensional problems. 
This can be extended to the argmax by not only tracking the value but also the position at which the value is attained and then backtrack, detailed in the \cref{app:mp-map-maxout}. 
We start from a root node $X_r$, that can be chosen so to minimize the number of computations, and then recursing into the children of the directed factor graph until we hit the leaves: %
\begin{align}
\msg{X_i}{\factor{\varscope}}{}(x_i) &\coloneq \prod_{c \in \mathsf{child}(X_i)}\msg{\factor{ci}}{X_i}{}(x_i) \cdot \factor{i}(x_i)\label{eq:msg-var-to-fac} \\
\msg{\factor{ij}}{X_j}{}(x_j) &\coloneq \max_{x_i} \factor{ij}(x_i, x_j) \cdot \msg{X_i}{\factor{ij}}{}(x_i) \label{eq:msg-fac-to-var} \\
\max_{\vx} \prod_\varscope \factor{\varscope}(\vx_\varscope) &= \max_{x_{r}} \factor{r}(x_r) \cdot \!\!\!\!\!\!\! \prod_{c \in \mathsf{child}(X_r)} \!\!\!\!\!\!\! \msg{\factor{cr}}{X_r}{}(x_r) \label{eq:msg-final}
\end{align}
\cref{alg:mp_map} shows the pseudo-code for the message passing, with the procedure computing message \cref{eq:msg-var-to-fac} in \cref{alg:mp_map_gather} and \cref{eq:msg-fac-to-var} in \cref{alg:mp_map_compute_generic}.
Next,
we illustrate one run of \mpmap{} by considering \cref{ex:example_wfamily_pp} restricted to the formula in \cref{ex:primal_graph_smt}.
\begin{example}[\mpmap{} in action]
    \label{ex:mp_max_via_factor_rep}
    Consider the factors for \cref{ex:example_wfamily_pp}, and $\formula$ from \cref{ex:primal_graph_smt}. \mpmap{} computes the following operations, leading to the computational graph below.
\begin{align*}
    & \max_{\vx} (\factor{3}(x_3)\factor{21}(x_2, x_1) \factor{2}(x_2))(\factor{31}(x_3,x_1) \factor{3}(x_3))                                                                                                               \\
                      & = \max_{x_1} \frac{9}{100} \prod_{i\in \{2,3\}} \underbrace{\max_{x_i} \factor{i1}(x_i, x_1) \underbrace{\factor{i}(x_i)}_
                      {=\msg{\factor{X_i}}{\factor{i1}}{}(x_1)}
                      }_{=\msg{\factor{i1}}{X_1}{}(x_1)}
\end{align*}
\scalebox{.9}{
    \begin{tikzpicture}[sibling distance=21mm, level distance=16.5mm,
            var/.style={circle, draw},
            fac/.style={rectangle, draw},
            facimg/.style={inner sep=0pt, anchor=center}, %
            grow=right,
            edge from parent/.style={draw, <-, >=latex}, %
            line width=1pt
        ]
        \node[facimg] (finalmsg) {
            \includegraphics[height=14mm]{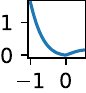}
        }
        child{
                node[var] {$X_1$}
                child {
                        node[facimg] (x2tox1){
                                \includegraphics[height=14mm]{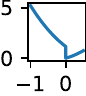}
                            }
                        child {
                                node[var] {$X_2$}
                                child { node[facimg] (margx2) {
                                                \includegraphics[height=14mm]{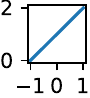}
                                            }
                                        node[above=-0.5mm of margx2] { $\factor{2}$}
                                    }
                            }
                        node[above=-0.5mm of x2tox1] { $\msg{\factor{21}}{X_1}{}$}
                    }
                child {
                        node[facimg] (x3tox1) {
                                \includegraphics[height=14mm]{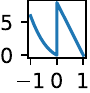}
                            }
                        child {
                                node[var] {$X_3$}
                                child { node[facimg] (margx1) {
                                                \includegraphics[height=14mm]{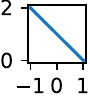}
                                            }
                                        node[above=-0.5mm of margx1] { $\factor{3}{}$}
                                    }
                            }
                        node[above=-0.5mm of x3tox1] { $\msg{\factor{31}}{X_1}{}$}
                    }
            node[above=-0.5mm of finalmsg] {final message}
            };
    \end{tikzpicture}
}
\end{example}

\begin{algorithm}[!t]
    \caption{\textbf{\mpmap}($\formula$, $p$)
    }
    \label{alg:mp_map}
    \begin{algorithmic}[1]
        \INPUT $\formula$: SMT formula as in \eqref{eq:indicator_factorization}, $p$ density as in \eqref{eq:density_factorization}%
        \OUTPUT $m^*$: max.\ density, $\vx^*$: the coordinates
        \STATE $\vV_{\mathsf{up}} \leftarrow$ sort variable nodes according to $\mathcal{G}$
        \FOR[upward pass]{\textbf{each} $X_{i}\in\vV_{\mathsf{up}}$}
        \STATE \gathermsgs($X_i$,$\factor{i, \parent(i)}$) \COMMENT{$\msg{X_i}{\factor{i, \parent(i)}}{}$}%
        \STATE \computemsgs($\factor{i, \parent(i)}$, $X_{\parent(i)}$) %
        \COMMENT{$\msg{\factor{i, \parent(i)}}{X_{\parent(i)}}{}$}
        \ENDFOR
        \STATE $m \gets \gatherroot(X_\mathit{roots})$ \COMMENT{final message}
        \RETURN $\max_x m(x), \argmaxc_x m(x)$
    \end{algorithmic}
\end{algorithm}

\begin{algorithm}[!t]
    \caption{$\bm{\gathermsgs}$($X_i$,$\factor{i, j}$)
    }
    \label{alg:mp_map_gather}
    \begin{algorithmic}[1]
        \INPUT $X_i$: variable, $\factor{i, j}$: factor
        \OUTPUT $\msg{X_i}{\factor{ij}}{}$: message
        \STATE $Q \gets \{ \msg{\factor{j',i}}{X_i}{} \forall j' {\in} \neigh(i): j' {\neq} j\} \cup \{ \msg{\factor{i}}{X_i}{} \}$\\
        \RETURN $\prod_i Q_i$ \COMMENT{point-wise product}
    \end{algorithmic}
\end{algorithm}

The main challenge in \mpmap{} is how to calculate $\msg{\factor{ij}}{X_j}{}(x_j)$, in particular $\max_{x_i} \factor{ij}(x_i, x_j) \cdot \msg{X_i}{\factor{ij}}{}(x_i)=\llbracket \vx_{\varscope} \models \formula_\varscope \rrbracket \cdot f_{\wfamily{}}(x_\varscope)$ with $f_{\wfamily{}} \in \wfamily{}$ being the product between $p_\varscope(\vx_\varscope)$ and the incoming messages. 
First, we reduce the problem to a a number of symbolic maximas over linear upper and lower bounds which we derive similarly to \citet{ZengICML20} (lines \ref{line:mp_map_compute_generic:P}-\ref{line:mp_map_compute_generic:I} in \cref{alg:mp_map_compute_generic}).
The second step is challenging, as symbolic maximization over linear bounds is unexplored. We identify a way to calculate this function explicitly for univariate polynomials (detailed in \cref{app:mp-map-maxout}), and reduce our function class to this operation via $\max_{x_i \in I(x_j)}f_1(x_j)f_2(x_i)=f_1(x_j)\max_{x_i \in I(x_j)}f_2(x_i)$ with $I=[l(x_j),u(x_x)]$, thanks to the properties of our function classes $\wfamily{\mathsf{PP}}$ and $\wfamily{\mathsf{PEP}}$.
Finally, we note maximising a univariate polynomial in $\wfamily{\mathsf{PP}}$ and $\wfamily{\mathsf{PEP}}$ can be done with different complexities, depending on the function family as we discuss in \cref{app:mp-map-complexity-wfamilies}.

\begin{algorithm}[!t]
    \caption{$\bm\computemsgs$($\factor{i, j}$, $X_{j}$)
    }%
    \label{alg:mp_map_compute_generic}
    \begin{algorithmic}[1]
        \INPUT $\factor{i, j}$: factor, $X_{j}$: variable
        \OUTPUT $\msg{\factor{ij}}{X_j}{}$: message
        \STATE $\msg{\factor{ij}}{X_j}{} \gets \emptypiecewise()$ %
        \STATE $\mathcal{P} \leftarrow \criticalpoints(\overallbounds(\msg{X_i}{\factor{ij}}{}), \formula_{ij})$\alglinelabel{line:mp_map_compute_generic:P}
        \STATE $\mathcal{I} \leftarrow \intervalsfrompoints(\mathcal{P})$\alglinelabel{line:mp_map_compute_generic:I}
        \FOR{interval $I \in \mathcal{I}$ consistent with formula $\formula_{ij}$}
        \STATE $ \mathcal{Q} \leftarrow$
        \getmsgpieces($\msg{X_j}{\factor{ij}}{}, I, p_{ij}$)\alglinelabel{line:mp_map_compute_generic:get_pieces}\\
        \COMMENT{enumerates every piece that falls into $X_i \in I$}
        \STATE $q^\prime_{h} \leftarrow \max_{x_i}(q_h, l_h, u_h) \forall (l_h, u_h, q_h) {\in} \mathcal{Q}$\alglinelabel{line:mp_map_compute_generic:max_out}
        \STATE $\msg{\factor{ij}}{X_j}{}|_{x_i \in I} \leftarrow \maxpieces(\{q^\prime_{h}, 0 {\leq} h {<} |Q|\})$\alglinelabel{line:mp_map_compute_generic:max-pieces}
        \ENDFOR
        \RETURN $\msg{\factor{ij}}{X_j}{}$
    \end{algorithmic}
\end{algorithm}

\section{\pamap: scaling local convex optimizers}%
\label{sec:pamap}

While our TMC enable tractable \ref{eq:constrained-map},
they might be too restrictive for certain applications.
As such, we now introduce a practical algorithm for approximating \ref{eq:constrained-map} (\textbf{C2}) over arbitrary \smtlra{} formula, and a density for which an optimization algorithm constrained to a \emph{convex polytope} is available.
\newcommand{\cpenum}{\ensuremath{\mathsf{cpenum}}}
\newcommand{\cpmax}{\ensuremath{\mathsf{cpopt}}}
\begin{algorithm}[!t]
    \caption{\textbf{\pamap}$(\formula, p, \cpenum{}, \cpmax{})$}%
    \label{alg:pa_map}
    \begin{algorithmic}[1]
        \INPUT $\formula$: \smt{} formula, $p$: density, $\cpenum{}$: convex polytope enumerator, $\cpmax{}$: convex polytope optimizer 
        \OUTPUT $m^*$: max.\ density found, $\vx^*$: the coordinates
        \STATE $m^* \gets -\infty$ \COMMENT{current lower bound}\alglinelabel{line:pa_map:lower-bound}
        \STATE $\vx^*\ \gets \emptyset$ \COMMENT{current best point}\alglinelabel{line:pa_map:best-point}
        \STATE $\cpenum{}.\mathsf{init}(\formula, p)$\alglinelabel{line:pa_map:partition-init}
        \WHILE{$\cpenum{}.\mathsf{has\text{-}next}()$} \alglinelabel{line:pa_map:loop-start}
            \STATE $\Pi \gets \cpenum{}.\mathsf{next}()$ \alglinelabel{line:pa_map:partition-next}
            \STATE $m_\Pi, \vx_\Pi \gets \cpmax{}.\mathsf{optimize}(\Pi, p)$ \alglinelabel{line:pa_map:maximize}
            \IF{$m_\Pi > m^*$}\alglinelabel{line:pa_map:update-start}
                \STATE $m^*, \vx^* \gets  m_\Pi, \vx_\Pi$
                \STATE $\cpenum{}.\mathsf{update\text{-}lower\text{-}bound}(m^*)$\alglinelabel{line:pa_map:update-lb}
            \ENDIF\alglinelabel{line:pa_map:update-end}
        \ENDWHILE\alglinelabel{line:pa_map:loop-end}
        \RETURN $m^*, \vx^*`$
    \end{algorithmic}
\end{algorithm}
The algorithm, named \pamap, is inspired in name and spirit by the WMI-PA algorithm for WMI computation~\cite{morettin-wmi-ijcar17,morettin-wmi-aij19,spallitta2022smt,spallitta2024enhancing}.
It decomposes a potentially non-convex feasible region into convex polytopes, over which constrained optimization can be performed efficiently ---and sometimes with guarantees--- for many density classes.
The procedure, outlined in \Cref{alg:pa_map}, builds on two key components: an enumerator \cpenum{} that partitions the feasible region of $\formula$ into convex polytopes,
and a constrained optimizer \cpmax{} to maximize $p$ over each convex polytope.
\pamap{} maintains the current best solution (lines~\ref{line:pa_map:lower-bound}-\ref{line:pa_map:best-point}), 
and iteratively considers convex polytopes $\Pi$ in the partition (lines~\ref{line:pa_map:loop-start}-\ref{line:pa_map:loop-end}).
For each $\Pi$, it invokes \cpmax{} to find the maximum of $p$ restricted to $\Pi$ (line~\ref{line:pa_map:maximize}), updating the best solution accordingly (lines~\ref{line:pa_map:update-start}-\ref{line:pa_map:update-end}).
Importantly, unlike exact WMI computation, finding the maximum of $p$ does not necessarily require enumerating \emph{all} convex polytopes. 
Therefore, after updating the best solution, the enumerator is informed of the new lower bound $m^*$ (line~\ref{line:pa_map:update-lb}), allowing it to \textit{prune} polytopes that are known not to contain better solutions.
\pamap{} is a family of optimizers, we discuss next how different choices for the base optimizer \cpmax{} and polytope enumeration \cpenum{} can impact performance.

\paragraph{Optimization over convex polytopes.}
As discussed in \Cref{sec:related}, optimization over a single convex polytope is a well-understood problem.
Thus, for \cpmax{} in \pamap{}, we can leverage  off-the-shelf constrained optimizers such as \shgo, which can find optima efficiently, albeit without formal guarantees.
Note that numerical optimizers rely on floating-point arithmetic, and as such the returned optimum may be slightly infeasible; nevertheless, such points can be easily projected onto the polytope if needed.
For polynomial densities, we can also employ moment-based global optimization~\cite{lasserre2001global}, which provides reliable upper and lower bounds on the global maximum at the cost of a higher runtime, as we quantify empirically in \cref{sec:experiments}.

\paragraph{Enumeration of convex polytopes.}
Partitioning the feasible region of an \smtlra{} formula into convex polytopes is a task known as \emph{AllSMT}~\cite{lahiri2006SMTTechniques,masina2025cnf}. This involves enumerating truth assignments to linear (in)equalities such that each assignment yields a non-empty convex polytope, and their union covers the feasible region. For this step, we adopt the efficient enumeration techniques used for WMI computation~\cite{spallitta2024enhancing}.
\begin{figure}[!t]
    \centering
    \begin{minipage}{.35\linewidth}
        \includegraphics[width=0.88\linewidth]{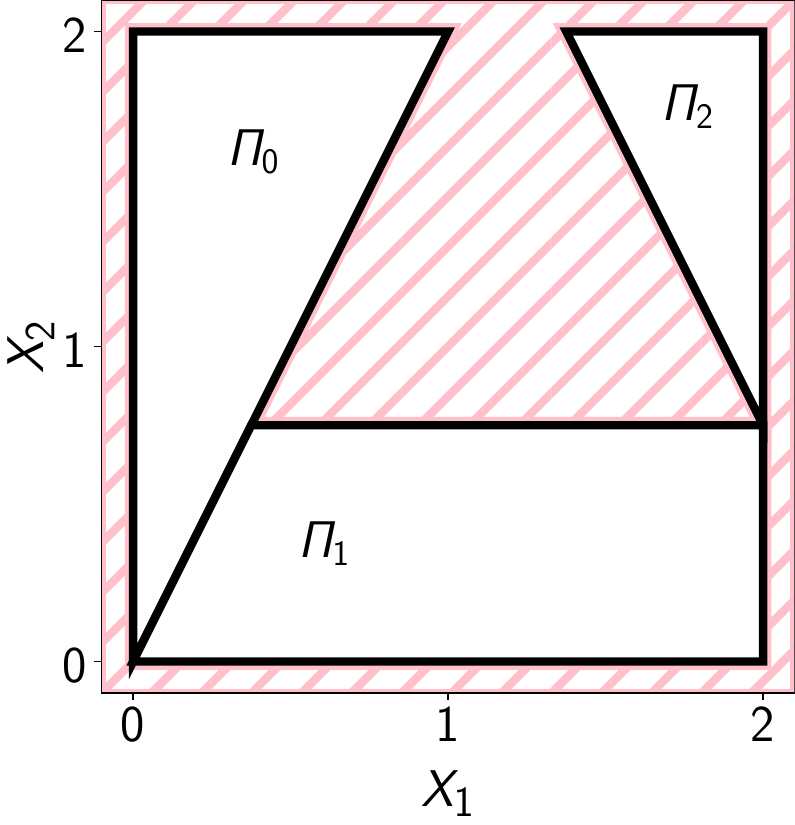}
    \end{minipage}\hfill\begin{minipage}{.62\linewidth}
            \caption{
    \textbf{{\pamap{} can decompose non-convex feasible regions into convex polytopes}}, as shown for the example in \cref{fig:intro-ex}.
    Note that the partitioning only depends on the constraints \formula.
    As such, for a conditional density $p(\vy\,|\vx)$, it needs to be computed only once for all datapoints $\vx$ as in the SDD experiments (\cref{sec:experiments}).  
    }
    \end{minipage}
    \label{fig:pt-decompose}
\end{figure}
Often, the density $p$ is defined as a piecewise function, which some constrained optimizers (e.g.,~\citealt{lasserre2001global}) cannot handle. In such cases, the structure-aware enumeration by~\citet{spallitta2024enhancing} ensures that each enumerated polytope lies entirely within a single piece of $p$. 

\paragraph{Pruning via upper bounds.}
To prune polytopes that won't contribute to improving the maximum, 
we can compute an upper bound on $p$ for a given polytope, and if this bound is lower than the current best value $m^*$, that region can be safely discarded.
While computing tight upper bounds is non-trivial in the general case, we can often exploit the specific shape of $p$ to derive efficient bounding techniques.
For example, in Appendix~\ref{sec:app_details_pa_map}, we provide additional details on how to compute upper bounds for the case of PAL densities~\cite{kurscheidt2025probabilistic} used in our experiments.

\paragraph{Relation with OMT-solvers.}
The idea of decomposing the feasible region into convex polytopes is reminiscent of the lazy OMT loop~\cite{bjorner2015nz,sebastiani2015optimization}. %
There are, however, several key differences that make \pamap{} more practical for solving \mapal{} problems.
First, current OMT-solvers only support polynomial densities, whereas the modularity of \pamap{} allows handling any density which admits a constrained optimizer. 
E.g, mixtures of Gaussians, and also stochastic densities estimated via Monte Carlo sampling, as we show in \cref{sec:experiments}.
Second, even restricting to polynomial densities, OMT-solvers are not specialized for the case of \smtlra{} constraints, and thus address a computationally harder problem. In contrast, \pamap{} decouples the enumeration of convex polytopes from the optimization step, combining efficient enumeration for \smtlra{} and specialized optimizers and pruning techniques for convex polytopes.
Finally, this decoupling allows for straightforward parallelization, as different polytopes can be optimized concurrently.

\section{Experiments}%
\label{sec:experiments}

We now empirically evaluate our \mpmap{} and \pamap{} on several real-world and synthetic benchmarks (\textbf{C3}).
Specifically, we aim to answer these research questions: \textbf{Q1)} How much can  \mpmap{} scale and how does it compare to approximate optimizers for exact constrained MAP? \textbf{Q2)} How does \pamap{} trade-off  solution quality and runtime  on real-world problems?
We describe our setup, baselines, and results for both algorithms next.
Experimental settings are detailed in \cref{sec:app_experiments} and
the code to reproduce experiments is attached to the submission.

\paragraph{Baselines and comparison.}%
\label{sec:baselines}
We compare against OMT-solvers for non-linear real arithmetic, namely OptiMathSAT~\cite{sebastiani2020optimathsat} and CDCL-OCAC~\cite{jia2025complete}, see \cref{sec:related}.
As a first baseline, we compare against the classical \adam{} optimizer~\cite{DBLP:journals/corr/KingmaB14} used to maximize $p(\vx)$ without considering the constraints $\formula$. 
This approach has two  limitations: first, it may return infeasible solutions that do not satisfy $\formula$; second, it is prone to getting stuck in local optima, especially in high-dimensional, non-convex landscapes.
Therefore, and as a side contribution (\textbf{C2}), we introduce a more competitive baseline, \textit{a particle-based, constraints-aware version of \adam{}}, which we call \pcadam{}.
\pcadam{} maintains a set of $N$ particles (i.e., candidate solutions) that are iteratively updated using \adam{}.
Crucially, at each iteration, we update the best result found so far among all particles that \textit{satisfy} the constraints $\formula$.
\cref{sec:app_pcadam} provides further details.

\paragraph{Q1) Scalability of \mpmap.} 
\label{sec:experiments_mp_map}
To benchmark our message-passing scheme, we follow~\citet{ZengICML20} and generate three different kinds of tree-shaped problems in varying dimensions and diameters: \emph{STAR} (star-shaped), \emph{SNOW} (ternary-tree), or \emph{PATH} (linear-chain) trees, 
which represent real-world applications like phylogenetic trees~\cite{nei2000molecular} and fault tree analysis~\cite{vesely1981fault}. 
We couple these constraints with random, unnormalized densities in $\wfamily{\mathsf{PP}}$ (\cref{sec:mp-map}) after we sample random $N$-variable SMT formulas for a given shape among STAR, SNOW or PATH.
\cref{sec:app_details_tree_shaped_primal} further details the setup . In total, this procedure yields $1059$ benchmark instances.
Examples of these instances (for 2 dimensions) are visualized in %
the appendix in \cref{fig:tree_primal_density_path,,fig:tree_primal_density_snow,,fig:tree_primal_density_star}, highlighting both the non-convex constraints and the multimodal densities which render the optimization problem challenging.

We compare $3$ algorithms on these problems: our message-passing algorithm \mpmap, \pamap{} using \shgo~\cite{endresSimplicialHomologyAlgorithm2018} as optimizer, and \pcadam. We do not compare to the OMT-solvers in this experiment due to their limited scalability. %
Since both \pamap{} and \pcadam{} are optimizers without optimality guarantees, they can return arbitrarily fast but potentially very poor solutions. To control for solution quality, we therefore introduce a simple grid-search on the enumerated polytopes %
that these methods must outperform (see ~\cref{sec:appendix_hyperparames_opt_tree_primal} for details) until either the time-budget is exhausted or they surpass the baseline. 

\begin{figure}
    \begin{minipage}{0.34\columnwidth}
        \centering
        \hspace{20pt}{\scriptsize STAR}\\
        \includegraphics[width=\textwidth]{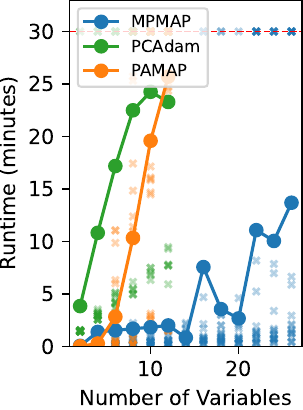}
    \end{minipage}
    \begin{minipage}{0.27\columnwidth}
        \centering
        {\scriptsize SNOW}\\
        \includegraphics[width=\textwidth]{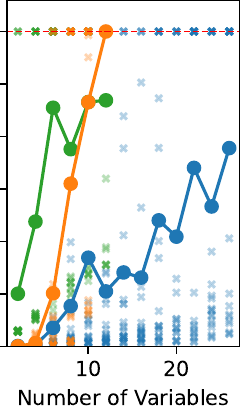}
    \end{minipage}
    \begin{minipage}{0.27\columnwidth}
        \centering
        {\scriptsize PATH}\\
        \includegraphics[width=\textwidth]{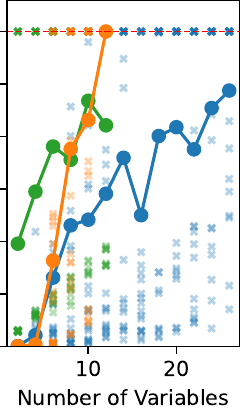}
    \end{minipage}
    \caption{\textbf{{When \mapal{} has tree-structure,  \mpmap{} outperforms competitors}} such as \pamap{} and \pcadam{} on graphs with different diameter, including PATH graphs with maximal diameter $d{=}N{-}1$, where the worst-case complexity of \mpmap{} would scale exponentially. Details in~\cref{sec:app_details_tree_shaped_primal}.}
    \label{fig:mp_map_timeout_overview}
\end{figure}

\cref{fig:mp_map_timeout_overview} shows the results, demonstrating the superior scalability of exact message passing compared to approximate optimizers. 
Here, we scale \pamap{} and \pcadam{} only up to dimension $12$, as runtimes increase rapidly beyond this point and render the experiments prohibitively long. Notably, the advantage of \mpmap{} persists for \emph{PATH} problems, which have maximal graph diameter $d {=} N {-} 1$ and for which the theoretical complexity of \mpmap{} scales exponentially in $d$ (\cref{th:mp_complexity_pp}). %
Detailed results are in \cref{tab:tree_primal_results_snow_long,,tab:tree_primal_results_star_long,,tab:tree_primal_results_path_long}. %

\paragraph{Q2) Trajectory prediction with \pamap{}.}
We evaluate \pamap{} on a first real-world application, and we consider the Stanford drone dataset (SDD)~\cite{robicquet2016learning}, a dataset of trajectories of multiple agents, captured from a drone,
that can only move in walkable areas, modeled as SMT constraints over the 2D space.
Following \citet{kurscheidt2025probabilistic}, we learned a predictive density for an agent's future position, conditioned on its past trajectory and the scene layout.
We then generated 50 test instances by sampling different agent trajectories.
In \cref{fig:sdd_density_overapprox} we show an example of such a density.
The figure also shows an execution of \pamap{} on this instance, demonstrating how the computation of upper bounds is crucial for pruning the vast majority of the polytopes, and thus improving efficiency. In \cref{sec:app_details_pa_map} we show further examples (\cref{fig:trajectory_pruning_grid}) and provide details on upper bound computation.

We run \pamap{} using two different optimizers: a numerical optimizer (\shgo), and an optimizer based on the SoS-Moment hierarchy (see \cref{sec:related})). 
We compare it against \adam{}, \pcadam{} with different number of particles $N$, OptiMathSAT run in anytime mode, and CDCL-OCAC
 (see Appendix~\ref{sec:app_details_pa_map} for details).
Results are shown in \cref{fig:pa-map-sdd}.
Here solution quality is measured in terms of relative optimality gap, computed as $\max(0, v^* - v)/v^*$, between the value found $v$ and the best known value $v^*$ computed via a grid-search.
From the plot we see that \pamap(\shgo) achieves the best trade-off between runtime and solution quality.
\pcadam{} can be competitive in terms of solution quality, but only if enough particles are used, leading to significantly higher runtimes.
\pamap(SoS), here used with a relaxation order of $7$, is significantly slower than the \shgo{} variant. \adam{} is not reliable at finding good, feasible solutions. OMT-solvers struggle in this setting: CDCL-OCAC solved no instances, while OptiMathSAT found only poor-quality solutions or reported errors.

\begin{figure}
    \centering
    \begin{subfigure}{0.24\columnwidth}
        \includegraphics[width=\textwidth]{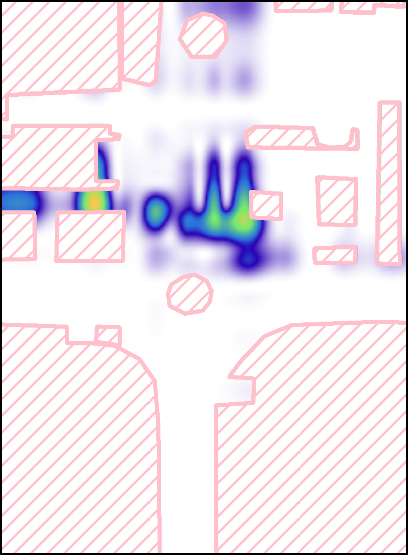}
    \end{subfigure}
    \begin{subfigure}{0.24\columnwidth}
        \includegraphics[width=\textwidth]{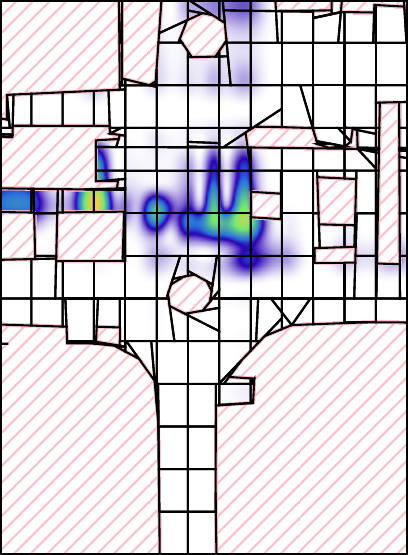}
    \end{subfigure}
    \begin{subfigure}{0.24\columnwidth}
        \includegraphics[width=\textwidth]{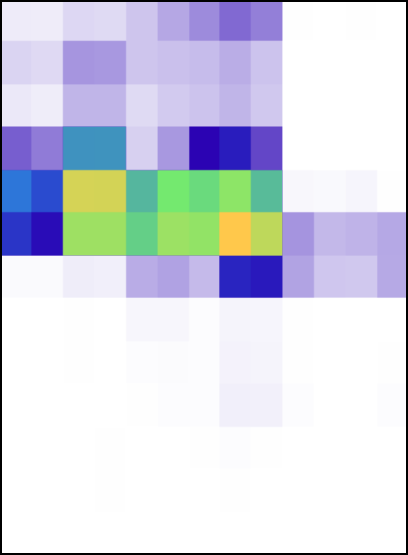}
    \end{subfigure}
    \begin{subfigure}{0.24\columnwidth}
        \includegraphics[width=\textwidth]{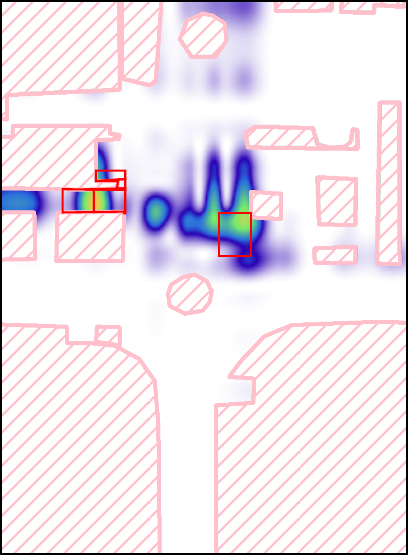}
    \end{subfigure}
    \caption{\textbf{On SDD, our upper-bound pruning strategy drastically reduces the number of polytopes optimized by \pamap{}.} Left: constrained density for an agent's next position. Center-left: 257 convex polytopes considered by \pamap{} without upper-bound-based pruning. Center-right: piecewise-constant upper bound. Right: only 9 convex polytopes considered by \pamap{} with pruning. More examples in \cref{fig:trajectory_pruning_grid} in \cref{sec:app_details_pa_map}.
    }%
    \label{fig:sdd_density_overapprox}
\end{figure}

\begin{figure}
    \centering
    \begin{subfigure}[T]{.57\columnwidth}
        \centering
        \includegraphics[width=\textwidth]{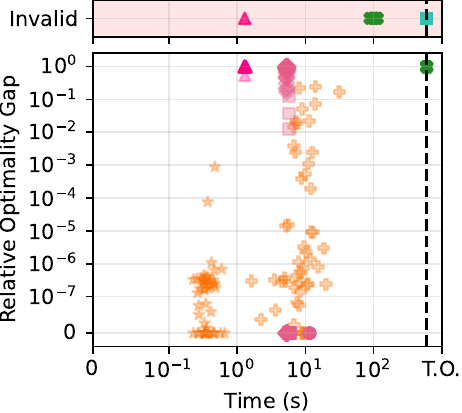}
    \end{subfigure}
    \begin{subfigure}[T]{.41\columnwidth}
        \scriptsize
        \setlength{\tabcolsep}{1.5pt} %
        \begin{tabular}{@{}lcccc@{}}
            \textbf{Method}   & \textbf{S} & \textbf{U} & \textbf{E} & \textbf{TO} \\\midrule
            \pamap(\shgo)     & 50         & 0          & 0          & 0           \\
            \pamap(SoS)       & 50         & 0          & 0          & 0           \\
            \pcadam($N$=10)   & 50         & 0          & 0          & 0           \\
            \pcadam($N$=100)  & 50         & 0          & 0          & 0           \\
            \pcadam($N$=1000) & 50         & 0          & 0          & 0           \\
            \adam             & 46         & 4          & 0          & 0           \\
            OptiMathSAT       & 21         & 0          & 29         & 21          \\
            CDCL-OCAC         & 0          & 0          & 0          & 50          \\
        \end{tabular}
        \vspace{1em}
        \includegraphics[width=.65\textwidth]{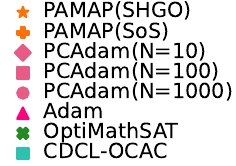}
    \end{subfigure}
    \caption{
    \textbf{On SDD, \pamap{} outperforms other solvers in both time and solution quality.}
    Left: Scatter plot of runtime vs.\ relative optimality gap. Right: Summary table reporting (\textbf{S}) instances solved with a point satisfying $\formula$, (\textbf{U}) instances where the returned point does not satisfy $\formula$, (\textbf{E}) instances terminated with an error, and (\textbf{TO}) number of timeouts (600s).
    Note that OptiMathSAT is anytime and returns the best solution found before timeout. %
    }%
    \label{fig:pa-map-sdd}
\end{figure}

\paragraph{Q2) Data imputation of a VAE with \pamap{}.}
As a second real-world application, we apply \pamap{} to data imputation with a density encoded in a VAE \citep{kingma2014vaes}. 
In this experiment, we use the \emph{House-Sales} tabular dataset with the provided constraints from \citet{stoianbeyond}, which we follow to train an unconstrained VAE on the train-dataset and mask random features on the test-dataset.
Constraints here are in the form of SMT rules over features of a house, such as squared meters, number of rooms and cost.
To evaluate the benefit of constrained imputation, masking is restricted to variables for which constraints are available. For each of $400$ test samples, every eligible feature is masked independently with probability $10\%$, and we compute a MAP estimate over the masked variables subject to the constraints. 
As we cannot compute a VAE density exactly, we approximate it via Monte Carlo samples from the latent prior $\vz$, so 
$p(\vx_{m})\approx1/N \sum_{\vz \sim \mathcal{N}(0,I)} p_{\mathsf{dec}}(\vx_{m}, \vx_{o} | \vz)$
, where $\vx_{m}$ (resp. $\vx_{o}$) are the missing (resp. observed) features, $p_{\mathsf{dec}}$ is the decoder architecture, defined as a neural network outputting an isotropic Gaussian, and $\vz$ is the latent code of the VAE. 
We measure the relative error defined as $\mathrm{RE}(\vx^*){=}\mathsf{avg}(|\vx^* {-} \vx_{gt}|/(|\vx_{gt}| {+} 1))$ where $\vx_{gt}$ is the ground truth value for the missing features that are predicted as $\vx^*$ by \pamap{}.
Since OMT-solvers don't support this kind of densities, we only
compare against a particle version of \adam{} with $N\in\{10, 100\}$ as a baseline.\gm{introduce this in baselines paragraph?} %
For \pamap{}, we need a convex-polytope optimizer that can handle a stochastic objective. Since the numerical optimizers we used in the trajectory experiment are not designed for this, we use our \pcadam{} as convex-polytope optimizer within \pamap{}. 
Further details are provided in \cref{app:details_tabular_dataset}. 
Despite the stochastic objective, \pamap{} substantially improves imputation accuracy: averaged over the dataset, 
obtaining a lower error 
and outperforming particle \adam{} in $78\%$ of samples in our experiment (\cref{fig:experiments_tabular_pamap_adam}), showing that MAP inference under nonconvex constraints is practical and effective for real-world tabular data imputation. We provide further statistics in  \cref{app:tabular_results}.

\begin{figure}
    \centering
    \begin{minipage}{.38\linewidth}
        \includegraphics[width=\linewidth]{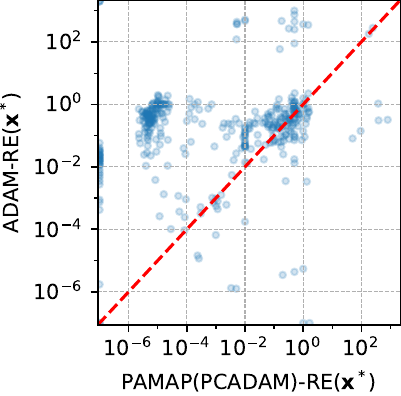}
    \end{minipage}\hfill\begin{minipage}{.55\linewidth}
    \vspace{-15pt}
            \caption{
            \textbf{Combining the VAE prior %
            with domain knowledge constraints, \pamap{} substantially surpasses the unconstrained prediction of \adam.} We compare the relative error on the data-imputation task. Further statistics are reported in \cref{app:tabular_results}.
    \label{fig:experiments_tabular_pamap_adam}
    }
    \end{minipage}
\end{figure}

\section{Conclusion}

In this work, traced the theoretical and practical foundations of MAP inference in the presence of non-convex \smtlra{} constraints and non-log-concave densities.
To this end, we introduced two new solvers that substantially advance the SoTA for this challenging inference task.
The first, \mpmap{} can deal with a new and non-trivial fragment of tractable \ref{eq:constrained-map} problems and can advance our understanding of the complexity of this task, as well as help us design further approximation schemes based on it \citep{zeng2020probabilistic}.
The second, \pamap{}, is a practical and modular framework combining SMT-based convex polytope enumeration, with constrained optimization over each such polytope.
In our rigorous experiments, we showed not only that our solvers can achieve a better trade off between solution quality and execution time, but also that they can be use to perform constrained inference in conjunction with black-box deep generative models.
In the future, we plan to use them to design reliable generative models that satisfy non-convex constraints by design \citep{van2025neurosymbolic,marconato2025symbol}.

\section*{Impact Statement}
We advance both theoretical and empirical understanding of MAP Inference under non-convex constraints. As a general contribution to the field of machine learning, there are many potential societal consequences of our work. Here, we want to highlight one: As constraints enable a more detailed, explicit control over the black-box prediction of machine learning models, they can be used to encode rules and knowledge of the machine learning practitioner. This can be both abused by consciously or unconsciously encoding biases, but can also be used to encode beneficial constraints such as fairness constraints. This example illustrates that constraints in MAP inference provide a mechanism to guide model behavior, with both potential risks and benefits.

\section*{Reproducibility Statement}
To ensure the reproducibility of our results, we have attached our source code as supplementary material. The material includes the implementation of the algorithms, the instructions for setting up the environment, and the scripts and instructions for running the experiments. A detailed description of all experimental settings is given in \cref{sec:app_experiments}.

\section*{Contribution}
GM, LK, AV and RS conceived the initial idea of the paper. 
LK is responsible for all theoretical contributions, illustrations, algorithms and the implementation related to \mpmap{}. 
GM is responsible for all theoretical contributions, illustrations, algorithms and the implementation related to \pamap{}. 
LK conceived and implemented the experiment testing the scalability of \mpmap{} and the missing-value experiment for \pamap{}, while GM conceived and implemented the experiment of trajectory prediction with \pamap{}. 
GM and LK wrote the paper with help from AV and feedback from RS.
AV supervised all phases of the project.

\section*{Acknowledgements}
We thank Dylan Ponsford for valuable feedback on the draft. AV was supported by the “UNREAL: Unified Reasoning Layer for Trustwor-
thy ML” project (EP/Y023838/1) selected by the ERC and
funded by UKRI EPSRC, and acknowledges funds from Huawei TTE-DE Lab Munich.
RS was partially supported by the project ``AI@TN'' funded by the
Autonomous Province of Trento.
RS was partially supported by the MUR PNRR project FAIR - Future AI Research
(PE00000013) funded by the NextGenerationEU;
by the NRRP, Mission 4 Component 2 Investment 1.4,
by the European Union --- NextGenerationEU (proj.\ nr.\ CN 00000013); and
by the TANGO project funded by the EU Horizon Europe
research and innovation program under GA No 101120763, funded by the European
Union.
Views and opinions expressed are however those of the author(s) only and do not
necessarily reflect those of the European Union, the European Health and
Digital Executive Agency (HaDEA) or The European Research Council. Neither the
European Union nor the granting authority can be held responsible for them.

\bibliography{bibliography}
\bibliographystyle{icml2026}

\newpage
\appendix
\onecolumn

\renewcommand{\partname}{}  %

\section{A Scalable Message-Passing Algorithm For Constrained MAP}
\label{sec:mp-map-expanded}

\subsection{Incomparability of tractable MAP and integration over SMT formulas}
\label{app:incomparability-wmi-map}

While the \mapal{} problem is reminiscent of computing the weighted model integral, there is a major difference to the WMI. Our fundamental operation is not integrating out a variable like $\int_{l(x_j)}^{u(x_j)}q(x_j, x_i)dx_i$, but maximizing out a variable like $\max_{x_i\in[l(x_j), u(x_j)]}q(x_j, x_i)$. It is important to note that $l$ and $u$ are symbolic bounds, so affine functions in $x_j$.
In order to general algorithm for the maximization in section \ref{app:mp-map-maxout}, we have to detail the function class we are actually able to handle and how they differ from the class investigated by \citet{ZengICML20}. As we previously mentioned, we are looking two main families of functions, piecewise polynomials $\wfamily{\mathsf{PP}}$ and piecewise exponentiation polynomials $\wfamily{\mathsf{PEP}}$. As we can reduce the task of maximizing exponentiated polynomials to the task of maximizing polynomials in general via going into $\log$-space before maximizing, we will first consider the task of maximizing piecewise-polynomials $\wfamily{\mathsf{PP}}$. Therefore, when thinking about how to maximize out a variable of a piecewise polynomial, the first question to ask is which kind of $q(x_j, x_i)$ stays a piecewise polynomial after maximizing out $x_i$. This is required in order to compute our message-passing algorithm by recusing into the tree. Unfortunately, while \citet{ZengICML20} is able to handle general polynomial over two variable, our choice of functions class is more limited. The issue is that even a simple, general cubic polynomial in two variables already falls outside our polynomial function space when taking the symbolic max; for example, using constant upper and lower bounds, we have:
\begin{align*}
    h^\prime(x)&=\max_{y\in [0,1]} h(x,y)\\
    &=\max_{y\in [0,1]} y^3-1.5xy^2+xy \\
    &=h(x,y)|_{y=\frac{x}{2}-\frac{\sqrt{9x^2-12x}}{6}} \notin \mathbb{R}[x] 
\end{align*}
In this derivation we assume $x \geq 2$, so $\mathsf{dom}(h')=[2,\infty)$. We therefore concentrate on the function-class of piecewise separable non-negative polynomials \wfamily{\mathsf{PP}}, so piecewise polynomials where each piece factors as a product of univariate polynomials:
\begin{align}
    &\wfun_S(\vx_S) = \begin{cases}
\prod_{s \in S} q_1^s(x_s) & \text{if } \vx_S \models \Phi_1,\\
\vdots & \\
\prod_{s \in S} q_k^s(x_s) & \text{if } \vx_S \models \Phi_k,
\end{cases}
\iff \wfun_S(\vx_S) \in \wfamily{\mathsf{PP}}
\end{align}
Here, $q_i^s$ is a univariate polynomial over the variable $x_s$ and $\Phi_i$ a conjunction of literals over $\vx_S$, so a polytope. We require $\Phi_k$ to be non-overlapping. This construction allows us to push the max-operator past the term dependent on other variable while the piecewise-nature retains flexibility. As we later enumerate the pieces in \cref{alg:mp_map_compute_pp}, we can switch it with the maximum per piece:
\begin{align*}
    h(x) &= \max_{y \in [l(x),u(u)]} f(x,y) \cdot q(y)\\
         &= \max_{y \in [l(x),u(u)]} f_1(x)\cdot f_2(y) q(y)\\
         &= f_1(x) \cdot \underbrace{\max_{y \in [l(x),u(x)]} f_2(y) q(y)}_{\text{\maxout{} algorithm (section \ref{app:mp-map-maxout})}}
\end{align*}
In section \ref{app:mp-map-maxout} we derive an algorithm to maximize a univariate, piecewise polynomial under symbolic linear upper and lower bounds, which is the only missing piece we need in order to explicitely compute $h$.

The construction of $\wfamily{\mathsf{PEP}}$ is analogous except that $q_i^s$ are univariate exponentiated polynomials. It is important to note that $\wfamily{\mathsf{PEP}}$ is more general than \citet{ZengICML20} derived for the integral, we the integral can only be performed over exponentiated linear functions but we allow the addition of univariate polynomials of arbitrary degree in the exponent (or as a product before exponentiating).

\subsection{The Message-Passing Algorithm}
\label{app:mp-map-algo}
Here, we will prove the tractability of the \mapal-problem in \cref{th:mp_tractability} by construction, so by explicitly constructing the algorithm \mpmap{} that calculates the $\argmax$ exactly and in polynomial time.

To simplify the derivation, we will w.l.o.g. focus on problem in which the graph $\mathcal{G}$ forms a tree instead of a forest. The approach can be easily extended to forests by looping over the trees.

The key idea behind the message-passing algorithm is that we can exploit the tree-structure of the graph $\mathcal{G}$ (\cref{def:mp_map_graph}). As both density and constraints must be compatible, so $\mathcal{G}$ must have a treewidth of $1$, we can decompose the computation of the overall maximum into smaller and smaller problems. In order to derive the message-passing, we will again focus on calculating the maximum on the example from \cref{ex:primal_graph_smt} with the density \cref{ex:example_wfamily_pp}:
\begin{align*}
    & \max_{\vx} (\factor{3}(x_3)\factor{21}(x_2, x_1) \factor{2}(x_2))(\factor{31}(x_3,x_1) \factor{3}(x_3))                                                                                                               \\
                      & = \max_{x_1} \frac{9}{100} \prod_{i\in \{2,3\}} \underbrace{\max_{x_i} \factor{i1}(x_i, x_1) \underbrace{\factor{i}(x_i)}_
                      {=\msg{\factor{X_i}}{\factor{i1}}{}(x_1)}
                      }_{=\msg{\factor{i1}}{X_1}{}(x_1)} \\
                      & = \max_{x_1} \frac{9}{100} \prod_{i\in \{2,3\}} \msg{\factor{i1}}{X_1}{}(x_1)
\end{align*}

So, if we manage to ``maximize out'' (akin to marginalizing out) the variables one by one, e.g. here calculate $\max_{x_i} \factor{i1}(x_i, x_1) \factor{i}(x_i)$ as a function of $x_1$, we have a scalable algorithm to exactly compute MAP even for high-dimensional problems. We start from a root node $X_r$, that can be chosen so to minimize the number of computations, and then recursing into the children of the directed graph until we hit the leaves:

\begin{align}
\msg{X_i}{\factor{\varscope}}{}(x_i) &\coloneq \prod_{c \in \mathsf{child}(X_i)}\msg{\factor{ci}}{X_i}{}(x_i) \cdot \factor{i}(x_i)\label{eq:mp_map_derivation_msg_to_fac}\\
\msg{\factor{ij}}{X_j}{}(x_j) &\coloneq \max_{x_i} \factor{ij}(x_i, x_j) \cdot \msg{X_i}{\factor{ij}}{}(x_i) \label{eq:mp_map_derivation_msg_max}\\
\max_{\vx} \prod_\varscope \factor{\varscope}(\vx_\varscope) &= \max_{x_{r}} \underbrace{\factor{r}(x_r) \cdot  \prod_{c \in \mathsf{child}(X_r)} \msg{\factor{cr}}{X_r}{}(x_r)}_{\text{univariate function}} \label{eq:mp_map_derivation_final}
\end{align}

In order to compute the overall maximum, we now only need to maximize the rusulting univariate function (line \ref{eq:mp_map_derivation_final}). As we only have to maximize out variable after variables, an operation on single variables, we have a promising direction for a scalable algorithm. It is good to reflect at this point whether how the operations used in the definitions of the messages (line \ref{eq:mp_map_derivation_msg_to_fac} and line \ref{eq:mp_map_derivation_msg_max}) fit our tractable MAP conditions (TMC, see \cref{def:tractable_mp_conditions}). We will first look at equation \ref{eq:mp_map_derivation_msg_to_fac}, here we have the product over the incoming messages and the factor for variable $x_i$. We can reorder this such that we have a product over the SMT-formulas (leading to a logical and) and a product over the functions in $\wfamily{}$. As we are closed under product (and tractable), this message results in a product between the indicator function over an SMT-formula and a function in $\wfamily{}$. In the next equation \ref{eq:mp_map_derivation_msg_max}, we have the product of a factor and the incoming message. We can gain reorder them and group them by a product over constraints (which results in in a single indicator over the and-combination of the formulas) and over functions (which stays again in \wfamily{}). As we later detail in section \ref{app:mp-map-maxout}, we reduce the maximum over the \smt-constraints to multiple symbolic maximums with linear upper and lower bounds and pointwise max-comparisons. Both are required to be tractable and stay in our function-class \wfamily{}. Finally, in line \ref{eq:mp_map_derivation_final}, we have a single maximum, which is just a special case of the symbolic bounds and therefore also tractable (we consider the image of the linear bounds to be the extended reals, so therefore this is the special case of constant upper/lower bounds). We conclude that most of the operations to compute single messages are guaranteed to be tractable directly following from the TMC-conditions, except the enumeration of the linear bounds in \cref{alg:mp_map_compute_generic}. We will later see in the analysis of \mpmap{} (section \ref{app:mp-map-complexity-wfamilies}) that these are maximal polynomially many in the number of atoms per formula.

But if we focus back on these message, we see that there is a remaining piece of the puzzle missing. We do not only need to track of the max.\ value when maximizing out variable after variable, but also the corresponding position this maximum is attained at for the variables maximized out. In comparison to the piecewise message corresponding to the value, which is always a univariate function, this function actually has the signature $\mathbb{R} \rightarrow \mathbb{R}^m$ with $m$ corresponding the the amount of variables maximized out. Therefore, the dimensionality of the space of its image grows during the message-passing. E.g., after obtaining our final message in \cref{ex:mp_max_via_factor_rep} on the left, we can compute the overall argmax by maximizing the remaining message and computing the overall position by evaluating this function at the position the maximum of the remaining message is attained at.

We are now ready to formalize our message-passing scheme in pseudo-code. The high-level routine with the calls to the computation of the messages is given in \cref{alg:mp_map}. As we start with an undirected, but tree-shaped global structure $\mathcal{G}$, we first have to root the tree (or forest). After rooting the tree, the algorithm consists of walking the computed node-order and computing the messages over the factor graph from bottom to top. Computing these messages is composed of two different operations: one, which we call $\gathermsgs$ and is detailed in \cref{alg:mp_map_gather}, collects all the incoming messages at $X_i$ for $\factor{ij}$ and multiplies them; the other is called $\computemsgs$, detailed in \cref{alg:mp_map_compute_generic}, and computes the message from $\factor{ij}$ to $X_j$ by maximizing out $X_i$. %
The messages at the roots correspond then to the density over $X_{root}$ with all the other variables maximized out, and additionally, as we also track the position of the variables maximized out, a function of $X_{root}$ which returns the assignments of these variables maximized out. Doing one final maximization gives us the MAP-prediction we are looking for.

\begin{example}[\mpmap{} in action]
    \label{ex:mp_max_via_factor_rep_app}
    We show the computed messages for the formula of example \ref{ex:primal_graph_smt} combined with the density in example \ref{ex:example_wfamily_pp}. \\
    \begin{tikzpicture}[sibling distance=21mm, level distance=16.5mm,
            var/.style={circle, draw},
            fac/.style={rectangle, draw},
            facimg/.style={inner sep=0pt, anchor=center}, %
            grow=right,
            edge from parent/.style={draw, <-, >=latex}, %
            line width=1pt
        ]
        \node[facimg] (finalmsg) {
            \includegraphics[height=14mm]{figures/message_passing/example/marginal_x1_scaled.pdf}
        }
        child{
                node[var] {$X_1$}
                child {
                        node[facimg] (x2tox1){
                                \includegraphics[height=14mm]{figures/message_passing/example/message_x2_to_x1.pdf}
                            }
                        child {
                                node[var] {$X_2$}
                                child { node[facimg] (margx2) {
                                                \includegraphics[height=14mm]{figures/message_passing/example/potential_x2.pdf}
                                            }
                                        node[above=-0.5mm of margx2] { $\factor{2}$}
                                    }
                            }
                        node[above=-0.5mm of x2tox1] { $\msg{\factor{21}}{X_1}{}$}
                    }
                child {
                        node[facimg] (x3tox1) {
                                \includegraphics[height=14mm]{figures/message_passing/example/message_x3_to_x1.pdf}
                            }
                        child {
                                node[var] {$X_3$}
                                child { node[facimg] (margx1) {
                                                \includegraphics[height=14mm]{figures/message_passing/example/potential_x3.pdf}
                                            }
                                        node[above=-0.5mm of margx1] { $\factor{3}{}$}
                                    }
                            }
                        node[above=-0.5mm of x3tox1] { $\msg{\factor{31}}{X_1}{}$}
                    }
            node[above=-0.5mm of finalmsg] {final message}
            };
    \end{tikzpicture}
\end{example}

So, going back to our example visualized in \cref{ex:mp_max_via_factor_rep} (repeated in \cref{ex:mp_max_via_factor_rep_app}). Running \cref{alg:mp_map} on it could result in $X_1$ being picked as root with the order $X_3$, $X_2$ and then $X_1$. So first, we would call $\gathermsgs(X_3,\factor{3, 1})=\msg{X_3}{\factor{31}}{}=F_3$ and then compute $\computemsgs(\factor{3, 1}, X_1)=\msg{\factor{31}}{X_1}{}=\max_{X_3} \factor{3, 1}(X_1, X_3) \cdot \msg{X_3}{\factor{31}}{}(X_3)$. The $\computemsgs$ function, in \cref{alg:mp_map_compute_generic}, first gathers the change points for $X_j$ induced by the formula $\formula_{ij}$ via $\criticalpoints$ (see \citet{DBLP:conf/uai/ZengB19} for more detail). Inside an interval induced by the change points, so $I=[c,c']$ with $c$ and $c'$ being consecutive change points, we can enumerate the feasible pieces that fall onto interval $I$. As literals cannot intersect in such an interval (as it would generate another change point), these pieces are defined by single linear upper and lower bounds. So, more formal: $(X_i, X_j) \models \formula_{piece} \land X_j \in I \iff l(X_j) \leq X_i \leq u(X_j) \land X_j \in I$ with some linear bound $l$ and $u$. We visualize this process in example \ref{ex:illustration_changepoints}.

\begin{example}[]
\label{ex:illustration_changepoints} We visualize how the critical points lead to a computation of symbolic maximums with linear upper and lower bounds:\\
\begin{minipage}{.45\linewidth}
    \includegraphics[width=.95\linewidth]{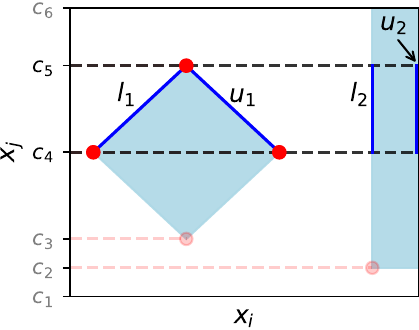}
\end{minipage}\hfill\begin{minipage}{.54\linewidth}
       On the left one can see a visualization of how we compute $\max_{x_i} (\factor{ij} \cdot \msg{X_i}{\factor{ij}}{})(x_i,x_j)$ via the change points, themselves computed via the intersections (red points). This formula generates the change point $(c_1, c_2, c_3, \dots, c_6)$ for $X_j$. We iterate over the induced intervals and now visualize the pieces that fall into the interval $I = [c_4,c_5]$. \\
        For $x_j \in I$, we have two blue pieces that fall into the interval. Here, one can clearly see that on this interval, the these pieces define a single linear upper and lower bound for $x_i$ as a function of $x_j$. These linear bounds are denoted $(l_1, u_1)$ and $(l_2, u_2)$ in the figure. \\
        We can therefore decomposes the overall maximum in this interval first into the maximum over $\wfun_{ij} {\cdot} \msg{X_i}{\factor{ij}}{}$ over $x_i{\in} [l_1(x_j),u_1(x_j)]$ and $x_i{\in} [l_2(x_j),u_2(x_j)]$. This results in another function in \wfamily{} and is tractable according to the TMC. To attain the overall maximum with $x_j$ "maximized out", we have to then do a pointwise max.\ between the results. %
\end{minipage}
\end{example}

We maximize out $X_3$ over the pieces (line~\ref{line:mp_map_compute_generic:max_out}) to obtain the message bound for $X_1$. We repeat the same for $X_3$. Finally, we call $\gatherroot(X_1)=\prod_{i \in \{2,3\}} \msg{\factor{i1}}{X_1}{} = m$ and call the $\max$ and $\argmax$ over $m$ to obtain both the maximal value and the corresponding coordinates.

While \cref{alg:mp_map_compute_generic} provides a generic algorithm that works with any family for which the tractable MAP-conditions hold, we provide a specific version of the algorithm in \cref{alg:mp_map_compute_pp} that shows how we handle densities in \wfamily{\mathsf{PP}}. Here, we do not only enumerate the pieces resulting from $\formula_{ij}$ but also the pieces from the piecewise function $\wfun_{ij}$, in order to perform our $\max_{x_i}p(x_j)p(x_i)=p(x_j)\max_{x_i}p(x_i)$-trick piecewise. The algorithm for \wfamily{\mathsf{PP}} is analogous, except that we go into log-space before calling \maxout{} in order to reuse our polynomial maximization scheme.

The complexity is discussed in section \ref{app:mp-map-complexity-wfamilies}

\begin{algorithm}
    \caption{$\bm\computemsgs^{\wfamily{\mathsf{PP}}}$($\factor{i, j}$, $X_{j}$)
    }%
    \label{alg:mp_map_compute_pp}
    \begin{algorithmic}[1]
        \INPUT $\factor{i, j}$: factor, $X_{j}$: variable
        \OUTPUT $\msg{\factor{ij}}{X_j}{}$: message
        \STATE $\msg{\factor{ij}}{X_j}{} \gets \emptypiecewise()$ %
        \STATE $\wfun_{ij} \gets \mathsf{get\text{-}weight}(\{ij\}, p)$ \COMMENT{bivariate function associated to $\{ij\}$}
        \STATE $\varphi \gets \bigvee_{\varphi_k \in \mathsf{formulas}(\wfun_{ij})} \varphi_k$
        \STATE $\formula_{ij}' \gets \formula_{ij} \land \varphi$\alglinelabel{line:mp_map_compute:formula_and_piecewise}
        \STATE $\mathcal{P} \leftarrow \criticalpoints(\overallbounds(\msg{X_i}{\factor{ij}}{}), \formula_{ij}')$\alglinelabel{line:mp_map_compute:P}
        \STATE $\mathcal{I} \leftarrow \intervalsfrompoints(\mathcal{P})$\alglinelabel{line:mp_map_compute:I}
        \FOR{interval $I \in \mathcal{I}$ consistent with formula $\formula_{ij}'$}
        \STATE $ \mathcal{Q} \leftarrow$
        \getmsgpieces($\msg{X_j}{\factor{ij}}{}, I, \wfun_{ij}$)\alglinelabel{line:mp_map_compute:get_pieces} \\
        \COMMENT{enumerates every piece that falls into $X_i \in I$}
        \STATE $q^\prime_{h} \leftarrow q^i_h {\cdot} \maxout(q^j_h, l_h, u_h) \forall (l_h, u_h, q^i_h, q^j_h) {\in} \mathcal{Q}$\alglinelabel{line:mp_map_compute:max_out}\\
        
        \STATE $\msg{\factor{ij}}{X_j}{}|_{x_i \in I} \leftarrow \maxpieces(\{q^\prime_{h}, 0 {\leq} h {<} |Q|\})$\alglinelabel{line:mp_map_compute:max-pieces}\\
        \COMMENT{The maximum over the pieces forms $\msg{\factor{ij}}{X_j}{}$ at $x_i \in I$}
        \ENDFOR
        \RETURN $\msg{\factor{ij}}{X_j}{}$
    \end{algorithmic}
\end{algorithm}

\subsection{The \maxout{} Algorithm}
\label{app:mp-map-maxout}

While a lot of similarities between computing the $\argmax$ and the integral exists, a major difference that the fundamental operation is not similarly established. As the core operation is not integrating out a variable like $\int_{l(x_j)}^{u(x_j)}q(x_j, x_i)dx_i$, but maximizing out a variable like $\max_{x_i\in[l(x_j), u(x_j)]}q(x_j, x_i)$, we call the corresponding algorithm $\maxout$. It is important to note that $l$ and $u$ are symbolic bounds, so affine functions in $x_j$ and therefore the result of maxing out $x_i$ is a function of $x_j$.
Before introducing our general algorithm for the maximization, we have to detail the function class we are actually able to handle. As we previously described, we are looking two main families of functions, piecewise polynomials $\wfamily{\mathsf{PP}}$ and piecewise exponentiation polynomials $\wfamily{\mathsf{PEP}}$. As we can reduce the task of maximizing exponentiated polynomials to the task of maximizing polynomials via going into $\log$-space (see \cref{alg:max_out}), the rest of the section detailing our \maxout{} algorithm will therefore only consider the task of maximizing piecewise-polynomials ($\maxout^{\mathsf{PP}}$).
We can therefore concentrate on deriving an algorithm to maximize a univariate, piecewise polynomial under symbolic linear upper and lower bounds. As the result will again be a piecewise polynomial, we can recursively apply the algorithm over and over in the message-passing. We will first focus on computing the \emph{value} of the polynomial with a variables maxed-out, and then later generalize this to also include the coordinate of the variable that got maxed out.

\paragraph{Maximizing a univariate piecewise polynomial.} In order to derive our algorithm to explicitly construct $m(y)=\max_{x \in [l(y), u(y)]} q(x)$ for $q(x)$ being a piecewise, but not necessarily continuous, polynomial, we first start by thinking about it point-wise in $y$. We begin with a slight generalization of the extreme-value theorem/interior extremum theorem to piecewise polynomials. Therefore, the general formula to compute the maximum of $q$ over $[l(y), u(y)] \coloneq I(y)$ is:
\begin{align*}
\max_{x \in I(y)} q(x) &{=} 
\max 
\left\{
\begin{array}{lr}
\max \{q(l(y)), q(u(y))\},&(a)\\
\max \{ q(e) \mid e \in \mathrm{E}_q \cap I(y) \}, &(b)\\
\max \{q^*(b) \mid b \in \mathrm{B}_q \cap I(y) \}&(c)\\
\end{array}
\right\} 
\\
\end{align*}
with $\mathrm{E}_q$ denoting the extreme-points, $\mathrm{B}_q$ denoting the boundaries of the pieces of the piecewise polynomial $q$. Finally, $q^*(b)$ is defined as $q^*(b)\coloneq \max\{\lim_{x\rightarrow b^-} q(x), \lim_{x\rightarrow b^+} q(x)\}$ if both the left and right limits are defined otherwise it is $q^*(b)\coloneq q(b)$. As we can see, we have two piecewise constant terms, the maximas over the extreme $(b)$ and boundary-points $(c)$, and only one term, $\max \{q(l(y)), q(u(y))\}$, being directly dependent on $y$. But when can terms $(b)$ and $(c)$ change their values?
We have to look at the union of the feasible pre-images of these points 
$ C_{\hat{e}} = \bigcup_{f \in {l,u}} \{ y \mid f(y) \in \mathrm{E}_q {\cup} \mathrm{B}_q \land l(y) {\le} u(y) \land f  {\neq} \text{const.}\}$. So all the valid points ($l(y) {\le} u(y)$) that map to extreme and boundary points of $q$. Inside an interval spanned by these points, no change of $(b)$ and $(c)$ can occur as no point from $\mathrm{E}_q$ and $\mathrm{B}_q$ can enter $I(y)$, and therefore the terms $(b)$ and $(c)$ are constants. We call this set $C_{\hat{e}}$, which forms our first contribution to the set of breakpoints of the resulting piecewise function $m(y)$. The other term $(a)$ is more tricky to analyze. As we want to explicitly construct our resulting piecewise function $m(y)$, the first question to ask is where the breakpoints originating from the term $\max \{q(l(y)), q(u(y))\}$ are located. A first contribution to the breakpoints, which we call $C_{\mathrm{switch}}$, comes from change in the dominating polynomial, which must occur at the feasible roots of $q(l(y)) {-} q(u(y))$, so roots of $t(y)=q(l(y)) {-} q(u(y))$ with $l(r) {\le} u(r)$. As we are only interested in the real roots here, we can run a real root isolation algorithm like the Vincent-Akritas-Strzeboński continued-fraction method~\cite{vincent1834note, Akritas_Strzebonski_2005}. So inside the intervals spanned by the breakpoints $C_{\hat{e}} \cup C_{\mathrm{switch}}$ we know that term $(b)$ and $(c)$ is a constant, which we can calculate and henceforth call $\hat{e}$, and that only one polynomial  must dominate. We will call the dominating polynomial on interval $I$ $\hat{q}$, so:
\begin{align}
\hat{q} &= (q \circ u) \iff \forall y \in I: (q \circ u) \geq (q \circ l) \\
\hat{q} &= (q \circ l) \text{  otherwise}
\end{align}
We now only need to characterize the relationship between $\hat{q}$ and $\hat{e}$. The careful reader will notice that $C_{\hat{e}}$ also characterizes the behavior of $\hat{q}$ between its breakpoints, as it lets us assume monotonicity for $\hat{q}$ inside the intervals, because it is composed of the extreme points of both $q \circ l$ and $q \circ u$. This observation enables us to enumerate all the possible relationships between $\hat{e}$ and $\hat{q}$ inside an interval $[i_1,i_2]{=}I^\prime$, with $[i_1,i_2]$ being an interval spanned by the breakpoints $C_{\hat{e}} \cup C_{\mathrm{switch}}$, them being:
\begin{enumerate}%
    \item $\min\{\hat{q}(i_1),\hat{q}(i_2)\}{\geq}\hat{e} \Rightarrow \forall i {\in} I^\prime{:} \max\{\hat{q}(i), \hat{e}\}{=}\hat{q}(i)$
    \item $\max\{\hat{q}(i_1),\hat{q}(i_2)\}{\leq}\hat{e} \Rightarrow \forall i {\in} I^\prime{:} \max\{\hat{q}(i), \hat{e}\}{=}\hat{e}$
    \item $\hat{q}(i_1) < \hat{e} < \hat{q}(i_2)$, then there exists 
        $i_{\mathrm{break}} \in I'$ s.t.%
        \begingroup
        \setlength{\abovedisplayskip}{0pt}%
        \setlength{\belowdisplayskip}{0pt}%
        \begin{align*}
        &\forall i \in [i_1, i_{\mathrm{break}}]: 
          &&\max\{\hat{q}(i), \hat{e}\} = \hat{e}, \\[-1pt]
        &\forall i \in [i_{\mathrm{break}}, i_2]: 
          &&\max\{\hat{q}(i), \hat{e}\} = \hat{q}(i)
        \end{align*}
        \endgroup

    \item $\hat{q}(i_1) > \hat{e} > \hat{q}(i_2)$, then there exists a 
    $i_{\mathrm{break}} {\in} I'$ s.t.
    \begingroup
        \setlength{\abovedisplayskip}{0pt}%
        \setlength{\belowdisplayskip}{0pt}%
        \begin{align*}
        &\forall i \in [i_1, i_{\mathrm{break}}]: 
          &&\max\{\hat{q}(i), \hat{e}\} = \hat{q}(i), \\
        &\forall i \in [i_{\mathrm{break}}, i_2]: 
          &&\max\{\hat{q}(i), \hat{e}\} = \hat{e}
        \end{align*}
    \endgroup
\end{enumerate}
In order to simplify the math, we will set $\hat{q}(y)=-\infty$ if $y$ is not in the domain of $\hat{q}$.

But, a final contribution to the set of breakpoints of $m$ is still missing. We need to add the start/end-bounds for contributed by $u$ and $l$, as the start/end of our feasible area $l(y) \leq u(y)$ if exists contributes another breakpoints. As two linear functions can only intersect at most once, and this intersection can be either the start or end-point of our feasible area, the possible values of the set $C_{\mathrm{bounds}}$ are straightforward to enumerate:
\begin{equation*}
C_{\mathrm{bounds}} {=}
\begin{cases}
\{y_s, +\infty\}, \!\!&\!\! \text{if } \exists y_s{:}l(y_s){=}u(y_s) \land y_s \text{ is start}\\
\{-\infty, y_s\}, \!\!&\!\! \text{if } \exists y_s{:}l(y_s){=}u(y_s) \land y_s \text{ is end}\\
\emptyset, \!\!&\!\! \text{if } \forall y: u(y) < l(y) \text{ \,\,\,\, (parallel)}\\
\{-\infty, +\infty\}, \!\!&\!\! \text{otherwise.}
\end{cases}
\end{equation*}

Together, these sets form the breakpoints of our piecewise function $m(y)$: $C_{\mathrm{break}} = C_{\hat{e}} \cup C_{\mathrm{switch}} \cup C_{\mathrm{bounds}}$. We are now ready to formalize our $\maxout^{\mathsf{PP}}$-algorithm, as provided in \cref{alg:max_out_pp}.

\begin{algorithm}
    \caption{$\bm\maxout^{\mathsf{PP}}$$(q, l, u)$}
    \label{alg:max_out_pp}
    \begin{algorithmic}[1]
        \INPUT $q$: piecewise polynomial, $l$: affine lower bound, $u$ affine upper bound
        \OUTPUT $m$: piecewise polynomial %
        \STATE $V_q, C_{\hat{e}}, C_{\mathrm{switch}}, C_{\mathrm{bounds}} \gets \preparebreaks(q,l,u)$
        \STATE $m \gets \emptypiecewise()$
        \FOR{$[i_1, i_2] \in \intervals(C_{\hat{e}} \cup C_{\mathrm{switch}} \cup C_{\mathrm{bounds}})$}
            \STATE $\hat{q} \gets \dominatingpoly((q \circ l), (q \circ u), i_1, i_2)$

            \STATE $\hat{e} \gets \innermax(q, l, u, i_1, i_2, V_q)$
            \alglinelabel{line:max_out_e_hat}
            \STATE $\$ \gets \exclusiveorinclusivestart(\hat{q})$\alglinelabel{line:max_out_exclusive_inclusive_start} 
            \STATE \COMMENT{$\$$ can be either ``$[$'' or ``$($''}
            \IF{$\max\{\hat{q}(i_1),\hat{q}(i_2)\} \leq \hat{e}$}
                \IF{$\hat{e} = - \infty$}
                    \STATE \textbf{continue} \alglinelabel{line:max_out_continue_undefined}
                \ELSE
                    \STATE $m|_{\$i_1, i_2]} \gets \hat{e}$ \COMMENT{defines $m$ on interval $\$i_1, i_2]$}\alglinelabel{line:max_out_assign_e_1}
                \ENDIF
            \ELSIF{$\min\{\hat{q}(i_1),\hat{q}(i_2)\} \geq \hat{e}$}
                \STATE $m|_{\$i_1, i_2]} \gets \hat{q}$\alglinelabel{line:max_out_assign_q_1}
            \ELSIF{$\hat{q}(i_1) < \hat{e} < \hat{q}(i_2)$}\alglinelabel{line:max_out_intersection_1}
                \STATE $i_\mathit{break} \gets \intersection(\hat{q}, \hat{e}, i_1, i_2)$
                \STATE $m|_{\$i_1, i_\mathit{break})} \gets \hat{e}$\alglinelabel{line:max_out_assign_e_2}
                \STATE $m|_{[i_\mathit{break}, i_2]} \gets \hat{q}$ \alglinelabel{line:max_out_assign_q_2}
            \ELSE\alglinelabel{line:max_out_intersection_2}
                \STATE $i_\mathit{break} \gets \intersection(\hat{q}, \hat{e}, i_1, i_2)$
                \STATE $m|_{\$i_1, i_\mathit{break})} \gets \hat{q}$ \alglinelabel{line:max_out_assign_q_3}
                \STATE $m|_{[i_\mathit{break}, i_2]} \gets \hat{e}$
            \ENDIF
        \ENDFOR
        \IF{$C_\mathit{bounds} \text{ has a finite element}$}
            \STATE $i_{b} \gets \getfiniteelem(C_\mathit{bounds})$
            \STATE $m|_{[i_{b},i_{b}]} \gets (q \circ l)(i_b)$ \COMMENT{start or end}
        \ENDIF
        \RETURN $\simplify(m)$ 
    \end{algorithmic}
\end{algorithm}

\FloatBarrier

\subsection{Correctness of the \maxout{} Algorithm}
\label{app:mp-map-maxout-correctness}
Before proving the correctness of the $\maxout^{\mathsf{PP}}$-algorithm, we will need a few propositions later used in the proof \cref{th:max_out_correctness}.

Consider a univariate piecewise polynomial $q$ with finitely many (not necessarily consecutive) pieces with discontinuous points at the breakpoints, and let $C_{\hat{e}}, C_{\mathrm{switch}}, C_{\mathrm{bounds}}, V_q$ be defined as in \cref{alg:max_out_pp}.  
Let $l$ and $u$ be univariate affine functions, and define $I(y) \coloneq [l(y), u(y)]$. We further define $q^\prime(x)$ to be the maximum between the left and right hand limit ($\lim_{x' \rightarrow x^-}q(x)$ and $\lim_{x' \rightarrow x^+}q(x)$), if they exist, otherwise the existing left or right. With $\mathsf{int}(A)$ we denote the interior of a set $A$.

We will treat $\max\emptyset$ and $\sup\emptyset$ as undefined.

\begin{proposition}
\label{prop:general_pointwise_extremum_theorem}
For any $y$ with $l(y) < u(y)$, we have

\[
\sup_{x\in I(y)} q(x)
=
\max\!\Big(
\!\begin{aligned}[t]
&\{\!\max(q(u(y)),\lim_{x\to u(y)^{-}}\!q(x)) | u(y) \in \mathsf{dom}(q)\!\},\\
&\{\!\max(q(l(y)),\lim_{x\to l(y)^{-}}\!q(x)) | u(y) \in \mathsf{dom}(q)\!\},\\
&\{\,q'(e)\mid e\in(\mathrm{E}_q\cup\mathrm{B}_q)\cap\mathsf{int}(I(y))\,\}
\Big).
\end{aligned}
\]
\end{proposition}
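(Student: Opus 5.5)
The plan is to reduce the statement to the classical interior extremum theorem on each polynomial piece, and then take the maximum over the finitely many pieces that meet $I(y)$.

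\textbf{Step 1 (partition).} Fix $y$ with $l(y)<u(y)$. Let $b_1<\dots<b_k$ be the points of $\mathrm{B}_q\cap\mathsf{int}(I(y))$; there are finitely many because $q$ has finitely many pieces. Set $b_0=l(y)$ and $b_{k+1}=u(y)$. On each open subinterval $J_t=(b_t,b_{t+1})$, $q$ agrees with a single polynomial $q_t$, or is undefined there if $J_t$ lies outside $\mathsf{dom}(q)$, in which case $J_t$ contributes nothing. Then
\[
I(y)=\{l(y),u(y)\}\cup\{b_1,\dots,b_k\}\cup\textstyle\bigcup_t J_t ,
\]
so $\sup_{I(y)}q$ is the maximum of the point values at $l(y)$, $u(y)$, the $b_s$, and the numbers $\sup_{J_t}q$.

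\textbf{Step 2 (one piece).} For each $t$, $q_t$ extends continuously to $\overline{J_t}$. By the extreme value theorem, $\sup_{J_t}q=\max_{\overline{J_t}}q_t$, and this maximum is attained either at an interior critical point, which lies in $\mathrm{E}_q\cap J_t$, or at an endpoint of $\overline{J_t}$. The endpoint value equals a one-sided limit of $q$ at $b_t$ or $b_{t+1}$, taken from inside $I(y)$. Interior critical points lie in $\mathsf{int}(I(y))$, so they appear in the third set; there $q'(e)=q(e)$ because $q$ is continuous at $e$. One-sided limits at an interior breakpoint $b_s$ are bounded by $q'(b_s)$, which is also in the third set. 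One-sided limits at $l(y)$ or $u(y)$ from inside are exactly the limit terms in the first two sets. Read literally, the statement's second set has typos: it should use the limit $x\to l(y)^+$ and the condition $l(y)\in\mathsf{dom}(q)$. I will state this reading explicitly, since only the inner one-sided limit is relevant at each endpoint.

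\textbf{Step 3 (both inequalities).} Every candidate in the right-hand side is either a value of $q$ on $I(y)$ or a limit of such values, so the right-hand side is at most $\sup_{I(y)}q$. Conversely, Steps 1--2 show each contribution to the supremum is dominated by some listed candidate, giving equality. If every candidate set is empty, then $q$ is undefined on $I(y)$, and both sides are undefined by the convention $\max\emptyset$, $\sup\emptyset$ undefined.

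\textbf{Main obstacle.} The hard part is the breakpoints where $q$ is discontinuous. The point value $q(b_s)$ could exceed both one-sided limits, and then $q'(b_s)$ as defined through limits alone would miss it. I would handle this by invoking the piecewise convention that $q(b_s)$ equals the value of one of its adjacent pieces, hence equals one of the limits. If that convention is not available, I would instead interpret $q'(b)$ as $\max$ of $q(b)$ and the existing limits. I would also check carefully that the limits at the outer endpoints are taken from inside $I(y)$, and that domain gaps produce no spurious candidates.
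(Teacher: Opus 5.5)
Your argument follows the paper's proof, written out in more detail. You cut $I(y)$ at the interior breakpoints, apply the extreme value theorem to the continuous extension of each polynomial piece, and sort the candidates into three groups: interior critical points, interior breakpoints (via $q'$), and inner one-sided limits or point values at $l(y),u(y)$. Your reading of the second set ($x\to l(y)^+$) is the one the paper's own proof uses.

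Your worry about point values at discontinuous breakpoints is justified: the paper silently assumes every breakpoint value comes from an adjacent non-degenerate piece. Of your two remedies, only the second (putting $q(b)$ into $q'(b)$) works in the paper's setting, because $\maxout^{\mathsf{PP}}$ itself emits degenerate pieces $m|_{[i_b,i_b]}$. For example, take $q=1$ on $[-1,0]$, $q=0$ on $(0,1]$, $l(y)=y$, $u(y)=2y$. This gives $m(0)=1$ but $m(y)=0$ for $y>0$. A later maximization over an interval with $0$ in its interior would therefore lose the value $1$ under the limits-only $q'$.

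One step fails as written, and the paper's proof has the same omission. You claim that the inner one-sided limits at $l(y),u(y)$ ``are exactly the limit terms in the first two sets''. But those sets are present only when the endpoint lies in $\mathsf{dom}(q)$. Pieces need not be closed or consecutive, so $q$ can be defined on $(u(y)-\varepsilon,u(y))$ for some $\varepsilon>0$ and still be undefined at $u(y)$.

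Take $q(x)=x$ on $[0,1)$, $l\equiv 0$, $u\equiv 1$. The left-hand side is $1$. The right-hand side is $q(0)=0$, because the first set is empty and $(\mathrm{E}_q\cup\mathrm{B}_q)\cap(0,1)=\emptyset$. With $(0,1)$ in place of $[0,1)$, all three sets are empty even though $q$ is defined on $I(y)$. That also breaks your closing remark that empty candidate sets force $q$ to be undefined on $I(y)$.

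The repair is to attach each endpoint set to the condition ``$q$ is defined at the endpoint or on an inner one-sided neighbourhood of it''. Then take the maximum over whichever of the point value and the inner limit exist. With that reading, your Steps 1--3 go through unchanged.
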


\begin{proof}
The restriction of $q$ to $I(y)$ decomposes into finitely many continuous polynomial pieces. 
On each (closed relative to $I(y)$) piece the continuous polynomial attains its maximum either at an interior critical point or at an endpoint of that piece. 
Interior critical points lie in $\mathrm{E}_q \cap \mathsf{int}(I(y))$; 
endpoints are either interior breakpoints (via $q'$ over $\mathrm{E}_q \cap \mathsf{int}(I(y))$) 
or the interval endpoints $l(y), u(y)$. 
Any approach to an endpoint from inside $I(y)$ produces only the interior-directed one-sided limit 
($\lim_{x \to u(y)^-} q(x)$ at $u(y)$, $\lim_{x \to l(y)^+} q(x)$ at $l(y)$), or the point value, over which we take the maximum.
Taking the maximum over this finite candidate set yields the stated equality.
\end{proof}

\begin{proposition}
For any $y$ contained in an interval $[i_1,i_2]$ spanned by the breakpoints 
$C_{\hat{e}} \cup C_{\mathrm{switch}} \cup C_{\mathrm{bounds}}$, we have
\[\{\mathrm{E}_q \cup \mathrm{B}_q\} \cap (\min\{l(i_1), l(i_2)\}, \max\{l(i_1), l(i_2)\} \setminus S  = \emptyset \]
and
\[\{\mathrm{E}_q \cup \mathrm{B}_q\} \cap (\min\{u(i_1), u(i_2)\}, \max\{u(i_1), u(i_2)\} \setminus S = \emptyset \]
with $S = \{\mathrm{E}_q \cup \mathrm{B}_q\} \cap [\max\{l(i_1), l(i_2), \min\{u(i_1), u(i_2)\}]$
\end{proposition}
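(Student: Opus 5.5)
The plan is to argue by contradiction, using only the definition of $C_{\hat{e}}$ as the set of feasible pre-images under $l$ and $u$ of the points in $\mathrm{E}_q\cup\mathrm{B}_q$. We treat the statement for $l$; the statement for $u$ is symmetric.

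Fix consecutive breakpoints $i_1<i_2$ of $C_{\hat{e}}\cup C_{\mathrm{switch}}\cup C_{\mathrm{bounds}}$. Then no point of $C_{\hat{e}}$ lies in the open interval $(i_1,i_2)$. If $l$ is constant, the open interval $(\min\{l(i_1),l(i_2)\},\max\{l(i_1),l(i_2)\})$ is empty and there is nothing to show. So assume $l$ is non-constant. Then $l$ is an affine bijection that maps $(i_1,i_2)$ onto exactly that open interval.

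Suppose some $e\in\mathrm{E}_q\cup\mathrm{B}_q$ lies in this open interval but not in the exceptional set $S$. Let $y_e=l^{-1}(e)$; it lies strictly between $i_1$ and $i_2$. By construction $C_{\hat{e}}$ contains every $y$ with $l(y)\in \mathrm{E}_q\cup\mathrm{B}_q$ and $l(y)\le u(y)$. If $y_e$ is feasible, i.e.\ $l(y_e)\le u(y_e)$, then $y_e\in C_{\hat{e}}\cap(i_1,i_2)$, which contradicts consecutiveness. So the whole content of the proof is to show that a point excluded by the feasibility condition must land in $S$.

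\textbf{The infeasible case.} The feasible set $\{y: l(y)\le u(y)\}$ is a half-line, all of $\mathbb{R}$, or empty, and its finite endpoint, if any, belongs to $C_{\mathrm{bounds}}$. Hence $(i_1,i_2)$ is either entirely feasible or entirely infeasible.
\begin{itemize}
\item In the feasible case the contradiction above applies.
\item In the infeasible case we have $u(y)<l(y)$ throughout $(i_1,i_2)$.
\end{itemize}
In the infeasible case, monotonicity of affine maps gives $l(y)\ge \min\{l(i_1),l(i_2)\}$ and, in the intended reading, places $e$ in the overlap region $[\max\{l(i_1),l(i_2)\},\min\{u(i_1),u(i_2)\}]$ that defines $S$. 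I expect this step to be the main obstacle. The statement as written has unbalanced brackets, and it is ambiguous whether $S$ denotes the overlap between the ranges of $l$ and $u$, or a set that is vacuous on infeasible intervals, where $\mathrm{E}_q\cup\mathrm{B}_q$ is irrelevant anyway.

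\textbf{Fallback reading.} I would first try to fix the reading so that points excluded by feasibility fall into $S$. If that reading fails, I would restrict the claim to intervals on which $l\le u$. This is the only case used in the proof of the correctness of $\maxout^{\mathsf{PP}}$, because the algorithm skips infeasible intervals by assigning $\hat{e}=-\infty$. On that restricted claim, the contradiction argument closes the proof immediately.
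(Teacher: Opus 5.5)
Your argument for the main case is the paper's argument. A non-constant affine $l$ (resp.\ $u$) maps $(i_1,i_2)$ bijectively onto the open image interval, so a point of $\mathrm{E}_q\cup\mathrm{B}_q$ there would have a pre-image strictly between consecutive breakpoints lying in $C_{\hat e}$. The constant case is trivial: you get it because the open interval is empty, the paper by placing the singleton in $S$.

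The gap is the infeasible case, which you explicitly leave open. Your first idea, landing $e$ in $S$, cannot work. If $u(y)<l(y)$ for some $y\in(i_1,i_2)$, monotonicity of affine maps gives $\min\{u(i_1),u(i_2)\}\le u(y)<l(y)\le\max\{l(i_1),l(i_2)\}$. So the overlap $[\max\{l(i_1),l(i_2)\},\min\{u(i_1),u(i_2)\}]$ is empty and $S=\emptyset$. On such an interval nothing would stop a point of $\mathrm{E}_q\cup\mathrm{B}_q$ with an infeasible pre-image from violating the claim. Your fallback, justified only by what the correctness proof uses, weakens the proposition rather than proving it. The justification is also inaccurate: the $\hat e=-\infty$ branch of $\maxout^{\mathsf{PP}}$ handles intervals on which $q$ is undefined on $I(y)$, not infeasible intervals.

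The missing observation is that infeasible intervals are never formed, so the fallback loses nothing. In $\preparebreaks$, both $C_{\hat e}$ and $C_{\mathrm{switch}}$ are intersected with $\Omega=\{y: l(y)\le u(y)\}$. Meanwhile $C_{\mathrm{bounds}}$ is:
\begin{itemize}
\item $\{y_s,+\infty\}$ or $\{-\infty,y_s\}$ when $\Omega$ is a half-line with finite endpoint $y_s$;
\item $\{-\infty,+\infty\}$ when $\Omega=\mathbb{R}$;
\item $\emptyset$ when $\Omega=\emptyset$.
\end{itemize}
Hence every breakpoint lies in the closure of $\Omega$ in the extended reals. Because $\Omega$ is an interval, every $(i_1,i_2)$ between consecutive breakpoints is contained in $\Omega$. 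This is exactly what the first part of the paper's correctness proof for $\maxout^{\mathsf{PP}}$ checks. Adding this one line makes your contradiction argument cover every interval, and the proof is complete. It is then more careful than the paper's own proof, which asserts that such a point ``would correspond to one of the known points in $C_{\hat e}\cup C_{\mathrm{switch}}$'' without addressing the feasibility filter $\cap\,\Omega$ in the definition of $C_{\hat e}$.
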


\begin{proof}
Since $l$ and $u$ are affine, they are either strictly monotone or constant on $[i_1,i_2]$. 

If $l$ (resp.\ $u$) is strictly monotone, then it defines a bijection between $[i_1,i_2]$ and $[l(i_1),l(i_2)]$ (resp.\ $[u(i_1),u(i_2)]$), and thus any break-point or extreme point of $q$ in this image would correspond to one of the known points in $C_{\hat{e}} \cup C_{\mathrm{switch}}$. 
If $l$ (resp.\ $u$) is constant, then its image is a singleton possibly coinciding with a break-point or extremum of $q$, in which case the point is already contained in $S$. 
\end{proof}

With the above propositions, we are now ready to prove correctness.

\begin{theorem}[$\maxout^{\mathsf{PP}}$ is correct]
\label{th:max_out_correctness}
For any piecewise polynomials $q$, which may contain non-continuous breakpoints, for all affine functions $l$, $u$ and any real number $y$, we have that 
$\sup_{x \in [l(y), u(y)]} q(x) = \maxout^{\, \mathsf{\mathit{PP}}}(q,l,u)(y)$.
\end{theorem}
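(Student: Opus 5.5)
The proof will be a case analysis over the partition of the real line induced by the breakpoint set $C_{\mathrm{break}} = C_{\hat{e}} \cup C_{\mathrm{switch}} \cup C_{\mathrm{bounds}}$. Fix $y$. There are three situations:
\begin{itemize}
\item $y$ lies outside the feasible region, i.e.\ $l(y) > u(y)$;
\item $y$ is the finite element of $C_{\mathrm{bounds}}$, so $l(y) = u(y)$;
\item $y$ lies in an interval $[i_1,i_2]$ of $\intervals(C_{\mathrm{break}})$ on which $l < u$, possibly at one of its endpoints.
\end{itemize}

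The first two cases are immediate. In the infeasible case the supremum is over the empty set, hence undefined. By the description of $C_{\mathrm{bounds}}$ such $y$ lie outside every enumerated interval, or land in the \textbf{continue} branch because $\hat{q} = \hat{e} = -\infty$, so $m$ is also undefined there. In the degenerate case the interval is the single point $l(y)$, so the supremum equals $q(l(y))$, which is exactly the value assigned after the loop.

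The main case uses \cref{prop:general_pointwise_extremum_theorem}. It rewrites $\sup_{x\in I(y)} q(x)$ as the maximum of two kinds of candidates:
\begin{itemize}
\item endpoint candidates (the point values and inward limits at $l(y)$ and $u(y)$);
\item interior candidates, namely $q'(e)$ for $e \in (\mathrm{E}_q\cup\mathrm{B}_q)\cap\mathsf{int}(I(y))$.
\end{itemize}

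First I show that the interior candidate set is constant for $y$ in the open interval $(i_1,i_2)$. If a point $e\in \mathrm{E}_q\cup\mathrm{B}_q$ entered or left $\mathsf{int}(I(y))$ at some $y_0$, then $l(y_0)=e$ or $u(y_0)=e$, so $y_0 \in C_{\hat{e}}$. This is exactly the content of the second proposition. Hence the interior maximum is the constant $\hat{e}$ returned by $\innermax$ (with $\hat{e}=-\infty$ if the set is empty).

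Next I show that the endpoint term equals $\hat{q}(y) = \max\{q(l(y)), q(u(y))\}$ on the interval. Because $C_{\hat{e}}$ contains the preimages of breakpoints of $q$, the functions $q\circ l$ and $q\circ u$ are single polynomials on $(i_1,i_2)$. The inward limits therefore coincide with the point values there. Because $C_{\mathrm{switch}}$ contains the roots of $q\circ l - q\circ u$, one of the two dominates throughout the interval, which justifies $\dominatingpoly$. Thus $\sup_{x\in I(y)} q(x) = \max\{\hat{q}(y),\hat{e}\}$ on $(i_1,i_2)$.

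It remains to check that the four branches of \cref{alg:max_out_pp} compute $\max\{\hat{q},\hat{e}\}$. Since $C_{\hat{e}}$ also contains the preimages of the extreme points $\mathrm{E}_q$, $\hat{q}$ has no interior critical point on the interval and is therefore monotone there. Comparing the endpoint values $\hat{q}(i_1),\hat{q}(i_2)$ with $\hat{e}$ then decides everything:
\begin{itemize}
\item if both are at least $\hat{e}$, the answer is $\hat{q}$ throughout;
\item if both are at most $\hat{e}$, the answer is $\hat{e}$ throughout;
\item otherwise the intermediate value theorem gives a unique crossing $i_{\mathrm{break}}$, and the answer switches between $\hat{e}$ and $\hat{q}$ there, on the side dictated by monotonicity.
\end{itemize}
These are exactly the assignments the algorithm makes.

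I expect the main obstacle to be the interval endpoints $i_1,i_2$ themselves. There, candidates from $\mathrm{E}_q\cup\mathrm{B}_q$ sit exactly at $l(y)$ or $u(y)$, and $q$ may be discontinuous, so the inward limit can differ from the point value and from the value $\hat{q}$ inherits from the neighbouring open interval. My plan is to treat these points separately. At such a $y$, the candidate sitting at the boundary of $I(y)$ is captured by \cref{prop:general_pointwise_extremum_theorem} through its endpoint terms. I then argue that $\exclusiveorinclusivestart$ makes each breakpoint belong to exactly one adjacent interval, namely the one whose formula gives the correct point value there. Finally I check that $\simplify$ merges pieces without changing any values.
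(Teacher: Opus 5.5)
\paragraph{Shared structure, and where it breaks.} Your architecture is the paper's. You use the same split on the sign of $u(y)-l(y)$ and \cref{prop:general_pointwise_extremum_theorem} to separate endpoint from interior candidates. You get constancy of the interior candidates from $C_{\hat e}$, dominance from $C_{\mathrm{switch}}$, monotonicity of $\hat q$, and $\exclusiveorinclusivestart$ at breakpoints. Proving $\sup=\max\{\hat q,\hat e\}$ first and then checking the four branches is cleaner than the paper's casing on which candidate set attains the supremum.

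Two of your steps fail, however, and not for lack of detail: in both places the statement itself is false for discontinuous $q$. The paper's proof has the same holes. Its Case~III is carried out only for non-constant $l,u$, and its Case~II treats a jump at one endpoint at a time.

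\paragraph{First gap: constant bounds.} Your claim that ``the interior maximum is $\hat e$'' needs $l,u$ non-constant. $\innermax$ uses the \emph{closed} window $[l',u']$ and the \emph{two-sided} values $V_q$. If $l\equiv c$, then $l'=c=l(y)$, so a jump of $q$ at $c$ feeds its outward one-sided limit into $\hat e$ on the whole interval. Take $q=10+x$ on $[-1,0)$, $q=1+x$ on $[0,1]$, $l\equiv 0$, $u(y)=y$. On $[0,1]$ the window is $[0,0]$, so $\hat e=V_q(0)=10$ and $\hat q=1+y$. Hence $m(y)=10$ on $(0,1]$, whereas $\sup_{[0,y]}q=1+y$.

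\paragraph{Second gap: breakpoints.} Your endpoint plan presupposes that at each breakpoint $y_0$ one of the two adjacent formulas gives the correct value. This fails when $l(y_0)$ and $u(y_0)$ are both jumps with large outward limits. Take $q=10+x$ on $[-2,0)$, $1+x$ on $[0,1]$, $10+x$ on $(1,3]$, with $l(y)=y$ and $u(y)=y+1$.
\begin{itemize}
\item On $[-1,0]$ the algorithm gets $\hat q=10+y$ and $\hat e=V_q(0)=10$, hence $m=10$.
\item On $[0,1]$ it gets $\hat q=11+y$ and $\hat e=V_q(1)=11$, hence $m=11+y$.
\end{itemize}
So $m(0)\in\{10,11\}$ however $\$$ is set, while $\sup_{[0,1]}q=2$. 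A point piece is only created at the finite element of $C_{\mathrm{bounds}}$, so nothing rescues $y_0=0$.

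\paragraph{What your argument does prove.} When $l,u$ are non-constant, it gives equality on every open interval between breakpoints, i.e.\ for all but finitely many $y$. If $q$ is continuous on a closed interval, both problems disappear: $V_q(c)=q(c)$ is a legitimate endpoint candidate, and the breakpoint values follow by continuity. For the statement as written you would have to change the algorithm. For example, use one-sided values and a half-open window in $\innermax$ for constant bounds, and evaluate \cref{prop:general_pointwise_extremum_theorem} explicitly at every breakpoint.
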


\par{\textit{Ansatz.}} For the proof of correctness we fill proof that for some arbitrary piecewise polynomial $q$, upper bound $l$, lower bounds $u$ and point $y$, if $\sup_{x \in [l(y), u(y)]} q(x) {=} z$, then $\maxout^{\mathsf{PP}}(q,l,u)(y){=}z$. Since $\sup_{x \in [l(y), u(y)]} q(x)$ is a total function in $y$ with the same domain as $\maxout^{\mathsf{PP}}(q,l,u)$, their equality implies that the inverse relation also holds, establishing correctness. We do this proof over $\maxout^{\mathsf{PP}}(q,l,u)(y)$ with the $\simplify$-call omitted, as it only joins the pieces of the same consecutive polynomial it results in the same function point-wise.

\begin{proof}
\textbf{We start the proof by considering the case of $l(y) > u(y)$.} As in this case $\sup_{x \in [l(y), u(y)]} q(x) = \max \emptyset$ and we treat $\max \emptyset$ as undefined, we start to the proof by checking that $\maxout^{\mathsf{PP}}(q,l,u)=:m$ is undefined whenever $l(y) > u(y)$. There are essentially $4$ cases to check here, as $l$ and $u$ being linear functions they can only intersect at most once:
\begin{enumerate}[topsep=0pt, partopsep=0pt, itemsep=1pt, parsep=0pt, label=\arabic*.]
    \item if $[l(y), u(y)]$ is feasible for all $y$ (no intersection of $l$ and $u$), then there is nothing to check as $\max \emptyset$ never occurs
    \item if $[l(y), u(y)]$ is infeasible for all $y$ (no intersection of $l$ and $u$), then $C_{\hat{e}} \cup C_{\mathrm{switch}} \cup C_{\mathrm{bounds}}=\emptyset$, as we $C_{\hat{e}}$ and $C_{\mathrm{switch}}$ only contain feasible points and $C_{\mathrm{bounds}}$ is also defined as the empty set for this case. As the domain of $m$ is defined by the intervals spanned by these points it is also empty
    \item if $[l(y), u(y)]$ is bounded from below by $y_{start}$  (intersection at $y_{start}$), then $\min C_{\hat{e}} \cup C_{\mathrm{switch}} \geq y_{start}$ and $\min C_{\mathrm{bounds}} = y_{start}$, therefore $m$ is undefined for $y < y_{start}$
    \item if $[l(y), u(y)]$ is bounded from above by $y_{end}$  (intersection at $y_{end}$), then $\max C_{\hat{e}} \cup C_{\mathrm{switch}} \leq y_{end}$ and $\max C_{\mathrm{bounds}} = y_{end}$, therefore $m$ is undefined for $y > y_{end}$
\end{enumerate}
This settles the case. \checkmark

\smallskip
\textbf{We will now check the case for $l(y)=u(y)$.} Let $y$ be some real number such that $l(y)=u(y)$. In this case we have $\sup_{x \in I(y)}q(x)$ = $(q\circ u)=(q \circ l)=z$. As $l$ and $u$ are linear functions, they can only intersect once: at the start or the end of the feasible set (in terms of $y$). Therefore, in case $l(y)=u(y)$, then $y$ is the only finite element in the set $C_\mathit{bounds}$. This element is called $i_b$ in the $\maxout^{\mathsf{PP}}$ algorithm, for which we set $m$ to be $(q \circ l)(i_b)$, which is the value of $\sup_{x \in I(y)}q(x)$. \checkmark

\medskip
\textbf{We can now assume that $l(y) < u(y)$.} We can therefore turn our attention to Proposition~\ref{prop:general_pointwise_extremum_theorem}:
\[
\sup_{x\in I(y)} q(x)
=
\max\!\Big(
\!\begin{aligned}[t]
&\colorbox{yellow!20}{$\{\!\max(q(u(y)),\lim_{x\to u(y)^{-}}\!q(x)) | u(y) \in \mathsf{dom}(q)\!\},$}\\[2pt]
&\colorbox{yellow!20}{$\{\!\max(q(l(y)),\lim_{x\to l(y)^{-}}\!q(x)) | u(y) \in \mathsf{dom}(q)\!\},$}\\[2pt]
&\colorbox{cyan!15}{$\{\,q'(e)\mid e\in(\mathrm{E}_q\cup\mathrm{B}_q)\cap\mathsf{int}(I(y))\,\}$}\!\Big)
\end{aligned}
\]
This expression is composed of two groups, the \colorbox{yellow!25}{\textcolor{black}{(1) endpoints}} (first two line)
 and \colorbox{cyan!15}{\textcolor{black}{(2) interior points}} (the last line). We will continue this proof by case distinction on these two groups.

\label{eq:prop:general_pointwise_extremum_theorem_numbered}
\textbf{Case I: $q$ is undefined on $I(y)=[l(y),u(y)]$.} Before considering the maximum to be either from set \colorbox{yellow!25}{$(1)$} or \colorbox{cyan!15}{$(2)$}, we have to check an additional case: if both sets are empty because $q$ is undefined on $I(y)$, then $\sup_{x \in I(y)} q(x)$ is undefined as well. We therefore have to check that $m({=:}\maxout^{\mathsf{PP}}(q,l,u))$ is also undefined in this instance. As $\min C_{\mathrm{bounds}}\leq y \leq \max C_{\mathrm{bounds}}$, we are guaranteed to find an $[i_1,i_2] \ni y$ in the set of breakpoints. As $q$ is entirely undefined on $I(y)$, both $q \circ l$ and $q \circ u$ have to be undefined on $[i_1,i_2]$, as no breakpoints for the respective functions can lie in the interval. As a consequence, $\dominatingpoly((q \circ l), (q \circ u), i_1, i_2)$ has to return $y \mapsto -\infty (= \hat{q})$. As $V_q \cap I(y) = \emptyset$, by assumption, it follows that $V_q \cap I(y) \supseteq V_q \cap [\max\{l(i_1), l(i_2), \min\{u(i_1), u(i_2)\}] = \emptyset$. Therefore, $\hat{e}$ in line \ref{line:inner_max:hat_e_assignment} (\cref{alg:inner_max}) takes the value $-\infty$. Therefore:
\begin{align}
                &\max\{\hat{q}(i_1),\hat{q}(i_2)\} \leq \hat{e}\\
    \Rightarrow &\max\{-\infty,-\infty\} \leq -\infty\\
    \Rightarrow &-\infty \leq -\infty %
\end{align}
As $\hat{e}=-\infty$, we do not assign $m$ any value in this interval (following from line \ref{line:max_out_continue_undefined}), and therefore $m$ is undefined. $\checkmark$\\

As we can now assume that $q$ is at least partially defined on $I(y)$, \textbf{we know that either set \colorbox{yellow!25}{$(1)$} or set \colorbox{cyan!15}{$(2)$} must be non-empty.} We now have to prove that for either cases, $\maxout^{\mathsf{PP}}$ returns the correct result.

\medskip
\textbf{Case II: The maximum is in set \colorbox{yellow!25}{$(1)$}} (so we assume the supremum occurs at the boundary). Let $\hat{q}$ be either $q \circ l$ or $q \circ u$ depending on from which set the supremum came from (or an arbitrary choice for a tie). Let w.l.o.g.\ the upper bound be the winning bound, we therefore have to analyze the expression:
\[
\max(q(u(y)),\lim_{x\to u(y)^{-}}\!q(x))
\]
We will now have have another case-distinction on type of $u(y)$ for $q$ under the assumption that the supremum came from $q \circ u$.

\textbf{Case II.I: Assuming $u(y)$ is not a break point of $m$}, the limit coming from the interior will coincide with the point-wise evaluation $q(u(y))=\hat{q}(y)$. Let $[i_1,i_2]$ the currently active breakpoints for $y$. If it is a tie, the chosen polynomial returned by $\dominatingpoly$ (alg. \ref{alg:mp_map_dominating_poly}) does not matter. If $q(u(y))$ dominates $q(l(y))$, as we assume, it must do so on the whole interval, therefore on $i_1$, $q(u(y^\prime)) - q(l(y^\prime))|_{y^\prime=i_1}$ must be positive or in case it is zero, the highest non-vanishing gradient of $q(u(y^\prime)) - q(l(y^\prime))$ must be positive for $q(u(y^\prime)) - q(l(y^\prime))|_{y^\prime = y} > 0$ to hold. This follows from Taylor's theorem with remainder \cite{apostol1991calculusremainder} around $i_1$ and generalized to the interval using the fact that the two polynomials can not intersect inside the interval. Therefore, $\dominatingpoly$ (\cref{alg:mp_map_dominating_poly}) in all cases returns $q \circ u$ and it is assigned as $\hat{q}$. We need to collect all the breakpoints/extreme points inside $[l(y), u(y)]$, but since we know that between $i_1$ and $i_2$ there are no breakpoints/extreme points, $ [\max\{l(i_1),l(i_2), \min\{u(i_1),u(i_2)\}\}] \subseteq [l(y), u(y)]$ contains all the relevant points and does not change for $y\in (i_1,i_2)$. In case the set is non-empty, $\max$ over \colorbox{cyan!15}{$(2)$} is the same as $\hat{e}$ (line \ref{line:inner_max:hat_e_assignment}, \cref{alg:inner_max}). As $\hat{e}$ and $\hat{q}$ can not intersect inside the interval, as we also split by intersection (if intersection occurs, and there can only be one, cases line \ref{line:max_out_intersection_1} and \ref{line:max_out_intersection_2} handle them by splitting the interval), therefore we know that $\hat{e} \leq \hat{q}(i_1)$ and $\hat{e} \leq \hat{q}(i_2)$ following from $\hat{e} \leq \hat{q}(y)$. So, we can either be in case line \ref{line:max_out_assign_q_1}, \ref{line:max_out_assign_q_2} or \ref{line:max_out_assign_q_3}. In conclusion, $m(y)$ is assigned $\hat{q}$ and $m(y)=\hat{q}(y)=q(u(y))=\sup_{x \in [l(u),u(y)]}q(x)$. \checkmark

\textbf{Case II.II: Assuming $u(y)$ is a break point of $m$}, there might be a difference between $q(u(y))$ and $\lim_{x\to u(y)^{-}}\!q(x)$. A first observation is that, in case we look at a breakpoints where a polynomial starts closed (and the other polynomial ends open), we have $\lim_{x\to u(y)^{-}}\!q(x)$ to be in the values picked but by $\hat{e}$, as here we have an inclusive comparison (line \ref{line:inner_max:hat_e_assignment} \cref{alg:inner_max}). In this case, the point-wise $q(u(y))$ can be strictly greater than $\lim_{x\to u(y)^{-}}\!q(x)$ or otherwise. We therefore have two cases:

\textbf{We will additionally assume $\lim_{x\to u(y)^{-}}\!q(x) < q(u(y))$.} As this can only happen if we look at a break-point that ends open and starts closed, in order for $q(u(y))$ to be extracted by $\dominatingpoly$, we have to check that the break-point is correctly assigned to the right interval. As we keep track of the open/closeness properties at the start of the interval (line \ref{line:max_out_exclusive_inclusive_start}) and assign it correctly, potentially overriding the previous interval, as every assignment starts with $\$$. We therefore look at the correct segment, so $m$ starts with a segment at $y$ that starts closed. Therefore $q(u(y))$ is extracted by $\dominatingpoly$. As $q(u(y)) \geq \colorbox{cyan!15}{(2)}$ if set \colorbox{cyan!15}{$(2)$} is not empty, and $q(u(y)) > \lim_{x\to u(y)^{-}}\!q(x) $, we have $q(u(y))=\hat{q}(i_1) \geq \hat{e}$, where $i_1 \in C_{\hat{e}}$. We can therefore enumerate all the possibilities for $i_2$ and check whether $m$ returns the correct value for this: In case $\hat{q}(i_2) \geq \hat{e}$, we have line \ref{line:max_out_assign_q_1} and $m(i_1)=m(y)=\hat{q}(y)$ (as ``$\$=[$''). In case $\hat{q}(i_2) < \hat{e}$, we can either have line \ref{line:max_out_assign_q_2} in case $\hat{q}(i_1) = \hat{e}$, then $m(i_1)=m(y)=\hat{e}=\hat{q}(y)$. Or we have $\hat{q}(i_1) > \hat{e}$, then we have line \ref{line:max_out_assign_q_3} and again $m(i_1)=m(y)=\hat{q}(y)$ (as ``$\$=[$''). \checkmark

\textbf{We will now assume $\lim_{x\to u(y)^{-}}\!q(x) \geq q(u(y))$}. It is important to note that in case the break-point is starts open/ends closed we actually have $\lim_{x\to u(y)^{-}}\!q(x) = q(u(y))$. So let us quickly consider this case, so when $y$ is the end of the segment ($y=i_2$). So from our assumptions of the maximum being in set \colorbox{yellow!25}{$(1)$}, we know that $\hat{q}(i_2) \geq \hat{e}$. As for all possibilities (line \ref{line:max_out_assign_q_1} and \ref{line:max_out_assign_q_2}) the assignment is closed to the right, so after running this iteration we have $m(y)=m(i_2)=\hat{q}=\lim_{x\to u(y)^{-}}$. In order for the break-point to belong to the previous interval, the next assignment must start open (``$\$=($''), which is respect in all the possible next assignments (so line \ref{line:max_out_assign_e_1}, \ref{line:max_out_assign_q_1}, \ref{line:max_out_assign_e_2} and \ref{line:max_out_assign_q_3}). Therefore, we have that $m$ at $y$ falls into the previous interval and therefore $m(y)=\lim_{x\to u(y)^{-}}\!q(x)$. In the other case, so we start closed and end open ($y=i_1$), the argument is the same as in the $\lim_{x\to u(y)^{-}}\!q(x) < q(u(y))$ but with the assumption that $q(u(y))=\hat{q}(i_1) \leq \hat{e}$ as $\lim_{x\to u(y)^{-}}\!q(x)$ falls into $\hat{e}$.\checkmark

\medskip
\textbf{Case III: Assuming the maximum is in set \colorbox{cyan!15}{$(2)$}} (more detailed: we assume the supremum \textbf{strictly} occurs inside the interval either as a left/right limit at a break-point or at an extreme-point). Let $i_1,i_2$ be the left/right boundaries of the interval of $m$ where $y$ falls into. Let $e^\prime$ be the $\max$ over set \colorbox{cyan!15}{$(2)$}, so $e^\prime=\max \{\,q'(e)\mid e\in(\mathrm{E}_q\cup\mathrm{B}_q)\cap\mathsf{int}([l(y),u(y)])\,\}$ (which therefore must be non-empty).

Furthermore, we will do another-case distinction based on whether $l$,$u$ are non-constant or not:

\textbf{Case III.I: $l$ and $u$ are non-constant} As the breakpoints for $m$ contain the set of breakpoints/extreme-points for $q \circ l$ and $q \circ u$, we know that
\begin{align}
    e^\prime &= \max \{\,q'(e)\mid e\in(\mathrm{E}_q\cup\mathrm{B}_q)\cap\mathsf{int}([l(y),u(y)])\,\}\\
    &= \max \{\,q'(e)\mid e\in(\mathrm{E}_q\cup\mathrm{B}_q)\cap[l(y'),u(y')]\,\}\\
    &= \hat{e}
\end{align}
for $y' \in (i_1,i_2)$, $l' = \max\{l(i_1),l(i_2)\}$ and $u' = \min \{u(i_1),u(i_2)\}$. Then the last equality is trivially true if $y$ is not at the boundary of $i_1,i_2$ (as no points of $\mathrm{E}_q\cup\mathrm{B}_q$ can lie inside this interval). If $y$ is at the boundary of the interval $i_1,i_2$, we have to be careful as we know could have potentially add the limit from outside $[l(y), u(y)]$. Fortunately, all is fine. We know that on $[l(y), u(y)]$, $e^\prime$ is dominant by our assumption of Case III (the maximum is in set \colorbox{cyan!15}{$(2)$}). But what if $l'=l(y)$ or $u'=u(y)$, then we would have added, via $q'$, the limit from outside. So the limit from the right towards our $y$, which is not inside $[l(y), u(y)]$. This can only lead to a a problem if we have a closed end/open start at $y$, as the right at $y$ does not necessarily coincide with $q(y)$ (it does otherwise). But in this case the picked up interval $i_1,i_2$ must be one where $y$ must be on the right boundary, so it is closed to the right (it would not have been picked up for the left boundary). But in this case, we can use our assumption of Case III (max must lie in the $int(I(y))$) as $\hat{q}(y)$ (point wise) would evaluate to the right-limit and is dominated by the interior. So $e^\prime=\hat{e}$ for both cases. Now let $\hat{q}$ be the dominant polynomial on $i_1,i_2$ as chosen by $\dominatingpoly$, so either $q \circ l$ or $q \circ u$ or the function that maps to minus infinity. As we know that $\hat{q}$ is monotonic, we can either have the case of line $19$, in which case $m(y)=\hat{e}=e^\prime$, or $\hat{q}$ has to cross $\hat{e}$ for $\hat{q}(y) < \hat{e}$ to hold (assumption Case III). 

Then we either have a growing (case if-statement line $23$) or falling (case if-statement $27$) $\hat{q}$ on the segment $i_1,i_2$. By assumption, we must be in the case with $\hat{q}(y) < \hat{e}$, so one of the bounds is actually the $i_{break}$ and we are in the range covered by line $25$ or $30$ with $y \neq i_{break}$ by our assumption. If $y$ is inside the interval, we now have $m(y)=\hat{e}=e^\prime$. If $y$ is the boundary $i_1$ or $i_2$, we now have $\hat{q}$ to be, in case it is a breakpoint, either the ending or starting polynomial. This depends on whether the interval starts closed or open. But by our assumption of set $(2)$ dominating set $(1)$, we have that $\hat{e}$ is dominating the ending polynomial (in case we have a closed end) or both (in case we have an open end). So we still fall into either case $19$, $25$ or $30$ and therefore $m(y)=\hat{e}=e^\prime$. \checkmark

\end{proof}

\subsection{Complexity Results for $\wfamily{\mathsf{PP}}$ and $\wfamily{\mathsf{PEP}}$}
\label{app:mp-map-complexity-wfamilies}

The idea behind analyzing the computational complexity behind the message-passing algorithm is to focus on the number of pieces generated during the message-passing algorithm, as the symbolic maximum is tractable, which follows from the tractability of the approximate root-enumeration \cite{schonhage1982fundamental} even when the degree of the polynomial is an input (runtime $\tilde{O}(d^3log(b))$ for approximation error $e=\frac{1}{2^b}$). As extending the $\max$ to the $\argmax$ results also in a still tractable complexity (the symbolic $\argmax$ piecewise polynomial has as many pieces as the $\max$ polynomial but only a degree of at most $1$), the theoretical analysis boils down to analyzing the total number of pieces sent.

In order to simplify the theoretical analysis, we will first analyze the number of pieces resulting for computing the symbolic max over a piecewise polynomial and symbolic, affine, upper and lower bound.

\begin{proposition}
\label{prop:complexity_max_out}
Computing the symbolic maximum $m(y)=\max_{x \in [l(y) \coloneq a\cdot y+b, u(y)\coloneq a^\prime\cdot y +b^\prime]}p(x)$ for a polynomial $p$ of degree $q$ with $m \geq 1$ pieces results in at most $8mq + 4m + 4$ pieces.
\end{proposition}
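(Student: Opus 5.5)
The plan is to count the breakpoints produced by $\maxout^{\mathsf{PP}}$ (\cref{alg:max_out_pp}), then bound the pieces per interval. The output function $m$ is defined on the intervals spanned by $C_{\mathrm{break}} = C_{\hat{e}} \cup C_{\mathrm{switch}} \cup C_{\mathrm{bounds}}$. Inside each such interval the algorithm creates at most two pieces: either a single assignment, or a split at $i_{\mathrm{break}}$ in the cases of lines \ref{line:max_out_intersection_1} and \ref{line:max_out_intersection_2}. Possibly there is one extra degenerate point piece at the finite element of $C_{\mathrm{bounds}}$. So the number of pieces is at most $2(|C_{\mathrm{break}}|+1)$, plus a constant. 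The proof thus reduces to bounding $|C_{\mathrm{break}}|$ linearly in $mq$.

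\textbf{Bounding $C_{\hat{e}}$.} Each of the $m$ pieces of $p$ has degree $q$, so it contributes at most $q-1$ interior extreme points (roots of the derivative). It also contributes at most $2$ piece boundaries, so $|\mathrm{E}_p \cup \mathrm{B}_p| \le m(q-1)+2m = m(q+1)$. Since $l$ and $u$ are affine, each point of $\mathrm{E}_p\cup\mathrm{B}_p$ has at most one preimage under each non-constant bound. Constant bounds contribute nothing by the definition of $C_{\hat{e}}$. Hence $|C_{\hat{e}}| \le 2m(q+1)$.

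\textbf{Bounding $C_{\mathrm{switch}}$.} This is the step I expect to be delicate, because $p\circ l - p\circ u$ is only piecewise polynomial. Its pieces are delimited by the preimages of $\mathrm{B}_p$ under $l$ and $u$, so there are at most about $2m+1 \le 4m$ pieces. I would refine the real line by these preimages. On each resulting cell the difference is a single polynomial of degree at most $q$, hence has at most $q$ real roots (or vanishes identically, in which case no switch is needed). This gives $|C_{\mathrm{switch}}| \le q(2m+1)$, or the coarser bound $4mq$ if one argues crudely that each of the $m$ pieces of $p\circ l$ meets each relevant piece of $p\circ u$ in a bounded way. Finally, $|C_{\mathrm{bounds}}|\le 2$.

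\textbf{Combining.} Adding these gives $|C_{\mathrm{break}}| \le 2m(q+1) + q(2m+1) + 2$, which is at most $4mq + 2m + 2$ after absorbing the lone $q$ into $2mq$ using $m\ge1$. Doubling for the possible intersection split inside each interval (monotonicity of $\hat q$ between breakpoints guarantees at most one crossing with the constant $\hat e$) yields at most $8mq + 4m + 4$ pieces. I would check whether the extra point piece and the $+1$ from intervals-versus-points fit within the slack. If they do not, I would tighten the count of extreme points to $q-1$ per piece, which frees $2m$ of slack.
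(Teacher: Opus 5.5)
\textbf{The gap is in your combining step.} Your overall decomposition is the same as the paper's: bound $|C_{\hat e}|$, $|C_{\mathrm{switch}}|$ and $|C_{\mathrm{bounds}}|$ for $\maxout^{\mathsf{PP}}$, then double, because $\hat q$ is monotone between breakpoints and crosses the constant $\hat e$ at most once per interval.

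Your own estimates sum to $2m(q+1)+q(2m+1)+2=4mq+2m+q+2$. The lone $q$ cannot be ``absorbed into $2mq$'': both copies of $2mq$ are already spent on $C_{\hat e}$ and $C_{\mathrm{switch}}$, so your inequality would need $q\le 0$. Doubling the true total gives $8mq+4m+2q+4$. This exceeds the target for every $q\ge 1$, even before adding the extra interval (intervals versus points) and the isolated point piece at the finite element of $C_{\mathrm{bounds}}$.

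Your fallback frees nothing: you already used $q-1$ critical points per piece to get $|\mathrm{E}_p\cup\mathrm{B}_p|\le m(q+1)$. Your justification of $q(2m+1)$ also does not work as stated. With $|\mathrm{B}_p|\le 2m$, the preimages of $\mathrm{B}_p$ under $l$ and $u$ can give up to $4m$ points, hence up to $4m+1$ cells, not $2m+1$.

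\textbf{The fix is a sharper count for $C_{\mathrm{switch}}$ plus explicit bookkeeping.} Where both $p\circ l$ and $p\circ u$ are defined, the cells on which $p\circ u-p\circ l$ is a single polynomial are the nonempty intersections of a piece of $p\circ l$ with a piece of $p\circ u$. Each family consists of at most $m$ pairwise disjoint intervals. Sweeping left to right, a new cell can begin only where a piece of one family begins, and each beginning opens at most one cell. The leftmost beginning cannot open a cell by itself, since the other family is not yet defined there unless the two beginnings coincide. Hence there are at most $2m-1$ cells and $|C_{\mathrm{switch}}|\le (2m-1)q\le 2mq$, which matches the paper's ``$2m$ pieces with at most $q$ roots''.

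Since $C_{\mathrm{bounds}}$ has at most one finite element, the loop runs over at most $|C_{\hat e}|+|C_{\mathrm{switch}}|+1$ intervals. Each interval produces at most two pieces, and at most one further point piece is added afterwards. So the output has at most $2(2mq+2m+2mq+1)+1=8mq+4m+3$ pieces, which is within the claimed bound.

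The paper's proof follows exactly your decomposition, with $|\mathrm{E}_q|\le mq$, $|\mathrm{B}_q|\le m+1$, $|C_{\mathrm{switch}}|\le 2mq$ and $|C_{\mathrm{bounds}}|\le 1$. It also slips at this step: its displayed $2(2(mq+m+1)+2mq+1)$ equals $8mq+4m+6$, not $8mq+4m+4$, and it does not separate breakpoints from pieces. So do not try to match it line by line; the accounting above is what actually gives the stated constant.
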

\begin{proof}
We can focus on the number of breakpoints enumerated by the $\maxout^{\mathsf{PP}}$ algorithm when applied to the polynomial $p$. We can focus on the algorithm, especially the $\preparebreaks$-function. Here we have:
\begin{align}
    |E_q| & \leq mq & \text{(extreme points of $p$)}\\
    |B_q| & \leq (m+1) & \text{(breakpoints points of $p$)}\\
    |C_{\tilde{e}}| &\leq 2(mq + (m + 1)) & \text{(after $l^{-1}(E_1 \cup B_q)$ and $u^{-1}(E_1 \cup B_q)$)}\\
    |C_{switch}| & \leq (2m)q & \text{(we have $2m$ pieces to between $p \circ l$ and $p \circ u$ with max. $q$ roots)}\\
    |C_{bounds}| & \leq 1 & \text{(we can have only one intersection between $l$ and $u$)}\\
    |C_{total}| & \leq 2(2(mq + (m + 1)) + 2mq + 1) & \text{(at most one additional break between those points $\maxout^{\mathsf{PP}}$)}\\
    &= 8mq + 4m + 4 
\end{align}
\end{proof}

In order to derive the complexity, we will take a similar approach to \citet{ZengICML20}. We will start with adapting proposition $22$ from \citet{ZengICML20} to our setting:

\begin{proposition}
\label{prop:mp_map_pieces_compute_msg}
Suppose the variables $X_i$ and $X_j$ are connected in the factor graph by factor $\factor{ij}$ associated to $\formula_{ij}$ of size $c$. Then:
\begin{enumerate}
    \item the number of pieces in $m_{X_i \rightarrow \factor{ij}}$ is bounded by $\sum_S m_S$, with $m_S = |m_{f_S \rightarrow X_i}|$ and $S \in \mathsf{neigh}(X_i)/\factor{ij}$.
    \item \textbf{for $\wfamily{\mathsf{PP}}$}: the number of pieces in $m_{\factor{ij} \rightarrow X_j}$ is bounded by $(2c+c^2)(c\cdot (8mq + 4m + 4)c^2\cdot q)$, with $m=|m_{X_i \rightarrow \factor{ij}}|$, $q = q_i{+}\max_{h \in [1, \dots, m]}(\mathsf{deg}(\mathsf{pieces}(m_{X_i \rightarrow \factor{ij}})[h])$ with $q_i$ being the max. degree of the polynomial factor associated with $X_i$ in the weight-function attached to $\formula_{ij}$.
    \item \textbf{for $\wfamily{\mathsf{PEP}}$}: the number of pieces in $m_{\factor{ij} \rightarrow X_j}$ is bounded by $(2c+c^2)(c\cdot (8mq' + 4m + 4)c^2\cdot q')$ for $q' = \max\{q_i,\max_{h \in [1, \dots, m]}(\mathsf{deg}(\mathsf{pieces}(m_{X_i \rightarrow \factor{ij}})[h])\}$,
\end{enumerate}
\end{proposition}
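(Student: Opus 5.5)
Part 1 follows from \gathermsgs{} (\cref{alg:mp_map_gather}). The message $m_{X_i\rightarrow\factor{ij}}$ is the pointwise product of the incoming univariate piecewise messages $m_{f_S\rightarrow X_i}$, $S\in\mathsf{neigh}(X_i)/\factor{ij}$, and the unary factor $\factor{i}$. For a product of univariate piecewise functions, every breakpoint of the product is a breakpoint of some factor. So the number of breakpoints is at most the sum of the breakpoint counts, and the number of pieces is bounded by $\sum_S m_S$, taking the unary factor as one of the terms or absorbing it into a constant. Closedness under product (TMC (i)) guarantees that each resulting piece is again a single polynomial (or exponentiated polynomial). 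Its degree is the sum of the degrees, which is how $q$ in part 2 picks up the $q_i$ term.

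For part 2, I would follow \cref{alg:mp_map_compute_pp} step by step, adapting Proposition 22 of \citet{ZengICML20}.

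\textbf{Step (a).} Bound the critical points produced by $\criticalpoints$ on $\formula_{ij}'$. With $c$ literals there are at most $c$ points where an atom is axis-parallel in $x_j$, and at most $\binom{c}{2}\le c^2$ pairwise intersections of the lines. This gives at most $2c+c^2$ intervals $I$ of $x_j$ (up to constants), which accounts for the outer factor.

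\textbf{Step (b).} Inside one interval, no two atoms cross. Each feasible cell of $\formula_{ij}'$ is therefore a strip $l_h(x_j)\le x_i\le u_h(x_j)$. Both bounds are drawn from the $c$ atoms, so there are at most $c$ cells. Each cell is further cut by the breakpoints of the weight pieces and of $m_{X_i\rightarrow\factor{ij}}$; I would bound the resulting list $\mathcal{Q}$ from \getmsgpieces{} by $c$.

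\textbf{Step (c).} For each $(l_h,u_h,q_h^i,q_h^j)\in\mathcal{Q}$, apply \cref{prop:complexity_max_out} to $\maxout(q_h^j,l_h,u_h)$. The input has at most $m$ pieces of degree at most $q$, so the output has at most $8mq+4m+4$ pieces. Multiplying by the $x_j$-factor $q_h^i$ does not add breakpoints.

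\textbf{Step (d).} Bound the extra breakpoints introduced by \maxpieces{} when taking the pointwise maximum of the $|\mathcal{Q}|\le c$ resulting piecewise polynomials. Each pair of pieces can switch dominance only at roots of their difference, which gives at most degree-many crossings per pair. Counting the pairs as $c^2$ and the degree as $q$ produces the factor $c^2\cdot q$. Multiplying (a)–(d) gives the stated bound $(2c+c^2)(c(8mq+4m+4)c^2q)$.

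Part 3 reduces to part 2 by passing to log-space. A product of exponentiated univariate polynomials becomes a sum of polynomials in the exponent, so the relevant degree is the maximum $q'$ rather than the sum. Because $\exp$ is monotone, \maxout{} and \maxpieces{} produce exactly the same breakpoints as for the polynomial exponents. Substituting $q'$ for $q$ therefore gives the bound.

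The main obstacle is step (d). A naive bound on the upper envelope of many piecewise polynomials multiplies piece counts with the number of pairwise crossings. The argument has to be set up so that the crossings are counted per pair of polynomial pieces over a common sub-interval: refine to the common breakpoints first, then count at most $q$ roots per pair per sub-interval. I would first try this refinement argument and check that it stays within the stated product rather than picking up an extra factor of $m$. If it does not, I would accept the looser bound, since only polynomiality matters for \cref{th:mp_tractability}.
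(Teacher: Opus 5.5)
Your proposal follows the paper's proof essentially step for step. In part 1 you bound the pieces of the product by their sum. In part 2 you use $2c+c^2$ critical intervals via Proposition 22 of \citet{ZengICML20}, at most $c$ strips per interval, and $8mq+4m+4$ pieces per strip from \cref{prop:complexity_max_out}. For \maxpieces{} you use exactly the refinement you describe at the end: at most $c(8mq+4m+4)$ common sub-intervals, then $c^2q$ pieces from pairwise crossings on each, so no extra factor of $m$ appears. Part 3 is the log-space, maximum-degree argument for $\wfamily{\mathsf{PEP}}$.

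One correction to step (b): the strips should not be cut by the breakpoints of $m_{X_i\rightarrow\factor{ij}}$. In the paper, the message enters $\criticalpoints$ only through its global bounds, and its $m$ pieces are handled entirely inside \maxout{} via the preimages $C_{\hat{e}}$. This is what your step (c) actually assumes, and it is what keeps $|\mathcal{Q}|\le c$ and the interval count at $2c+c^2$.
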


\begin{proof}
Statement $1$ is directly taken from proposition $22$ from \citet{ZengICML20}. It holds for the intersections between the messages due to the overall product resulting in the number of pieces being at most the sum over the number of individual pieces. \checkmark\\
The second statement is more challenging and analyzes the number of pieces returned by the algorithm $\computemsgs$. A key difference between \citet{ZengICML20} and our approach is that we compute the symbolic maximum over a piecewise functions, whereas \citet{ZengICML20} integrates a single polynomial at a time. Therefore the critical points $\mathcal{P}$ in \citet{ZengICML20} is both a function of the size of the formula $\formula_{ij}'$ ($=c$, defined in line \ref{line:mp_map_compute:formula_and_piecewise} in \cref{alg:mp_map_compute_pp}) and the number of pieces $m$. Whereas we only take into account the global bounds (if existing), so treat $m_{X_i \rightarrow \factor{ij}}$ as a single piece made of piecewise functions when constructing $\mathcal{P}$ and later perform the $\maxout^{\mathsf{PP}}$ over the piecewise polynomial. We therefore have $|\mathcal{P}| \leq (2c + c^2)$ following from proposition $22$ from \citet{ZengICML20} for a single piece message. We turn our attention towards the inner loop in $\computemsgs$. Inside an critical interval, we can only have $c$ different pieces for the polynomial factor associated the $\formula_{ij}$. We therefor have $c$ times at most $(8mq + 4m + 4)$ pieces (following proposition \ref{prop:complexity_max_out}). In order to compute the point wise max, we first need to compute the number of intersections between the pieces, which bounded by the sum, so we have at most $c(8mq + 4m + 4)$ pieces. On each piece, we now have at most $c$ polynomials of degree $q$ to compare. As we need to do a pairwise-comparison, with each comparison resulting in at most $q$ pieces, we have $c^2q$ pieces resulting from the symbolic point wise maximum between the polynomials. Putting all the factors together, we arrive at $(2c+c^2)(c\cdot (8mq + 4m + 4)c^2\cdot q)$ \checkmark\\
For weights in $\wfamily{\mathsf{PEP}}$, the derivation is analogous to $\wfamily{\mathsf{PP}}$ except that the degree of the polynomial does not grow as a sum of the two degrees but via the maximum over $q_i$ and $\{\mathsf{deg}(\mathsf{pieces}(m_{X_i \rightarrow \factor{ij}})[h]) | h \in [1, \dots, m] \}$. This is due to the product of the exponentiated polynomials resulting in the exponential of the sum of the polynomials and $\mathsf{deg}(p_i + p_h)$ is $\max\{\mathsf{deg}(p_i), \mathsf{deg}(p_h)\}$.
\end{proof}

\textbf{It is important to note here that these worst-case bounds are really a worst case scenario that can play out with significantly less complexity in practice.} For example, it assumes that every comparison between two polynomials $q_1$,$q_2$ of order $\tilde{q}$ in the point-wise maximum will always explode into $q$ pieces, the maximum number of roots of $q_1 - q_2$, which all have to lie inside the interval we are currently looking at. And every further comparison has to again result in $1$ pieces and so on the recursion goes until all comparisons are made, so in total the $\frac{c (c-1)}{2}q\leq c^2\cdot q$ pieces.

We now construct the adjacency matrix $M$ for the graph $G$ that represents the DAG used to run our message passing. So $M_{f_S, f_S^\prime} = 1$ if we compute $m_{f_S \rightarrow f_S^\prime}$ during message passing. One can easily see that $M$ is nilpotent as $M$ is the adjacency matrix over a DAG and since $m$ matrix-powers of an adjacency matrix represent $m$-step reachability in the DAG. The order of the nilpotent matrix $M$ can therefore only be at most the diameter of the factor graph, so the longest path between any two vertices \citep[Prop.~21]{ZengICML20}.

First, we will focus on $\wfamily{\mathsf{PP}}$, as later the proof for $\wfamily{\mathsf{PEP}}$ is analogous. We now introduce a vector $v^{(t)}$ that captures the max. number of pieces in the message per factor at time-step $t$. Calculating $\sum_{t=0}^d \sum_s v^{(t)}_s$ therefore bounds the number of pieces in the messages in MP-MAP. In order to approximate this, we need the maximum degree of the univariate polynomial, which we will call $q_{max}$; in particular, for a single bivariate piece $q_1^1(x_1)q_1^2(x_2)$ we compute $q_{max}$ via $\max(\mathsf{deg}(p_1), \mathsf{deg}(p_2))$. We first focus on the relationship between $v^{(t)}$ and $v^{(t-1)}$. We start by reducing $v^{(1)}$ to $v^{(0)}$:

\begin{align}
    v^{(1)}_j \leq \sum_i M_{ij}(2c+c^2)(c\cdot (8v^{(0)}_i (2q_{max}) + 4 v^{(0)}_i + 4)c^2\cdot (2q_{max}))\\
    = (2c+c^2)c\cdot c^2\cdot 4 \sum_i M_{ij}(2M_{ij} v^{(0)}_i (2q_{max}) + M_{ij} v^{(0)}_i + 1)\cdot (2q_{max})
\end{align}

with $c$ being the maximum size over all $\formula_{ij}$. We have the outer sum as we combine all the incoming messages using a product, leading a bound over the number of pieces of the sum of the individual pieces. The degree of the polynomial, at the point of being passed to $\maxout^{\mathsf{PP}}$, is $2q_{max}$ as we have a polynomial of degree $q_{max}$ coming from the leaves associated with an univariate formula $\formula_{i}$ and then we combine it with another polynomial associated to $\formula_{ij}$, also with at most a degree of $q_{max}$. So we arrive at a total degree of $2q_{max}$. This can be generalized:
\begin{align}
    ||v^{(t)}||_1 \leq (2c+c^2)c\cdot c^2\cdot 4 (2t(n{-}d)q_{max})||M(2v^{(t-1)}2t(n{-}d)q_{max} + v^{(t-1)} + 1)||_1
\end{align}
We have the total degree $2tq_{max}$ as we associate a factorizable polynomial $\formula_{ij}$ with at most a degree of $q_{max}$ per variable, of which we have two. We also have the multiplicative factor of $(n{-}d)$ due to the multiplication in \gathermsgs (of which we can have max. $(n{-}d)$).

Denote with $s$ the cardinality of our set of factors $\mathcal{F}$, and $d$ the diameter of $\mathcal{G}$. We can now use the following inequalities $||M||_1 \leq s$ and $||v^{(0)}||_1 \leq cs$ to derive a bound for $||\sum_{t=0}^d v^{(t)}||_1$.

\begin{theorem}
    \label{th:mp_complexity_pp}
    For weights in $\wfamily{\mathsf{PP}}$, we have an overall bound of $||\sum_{t=0}^d v^{(t)}||_1 = \mathcal{O}(c^{5d+1} (n{-}d)^{2d} q_{max}^{2d} (2n)^{d+1} (d!)^2)$ for the number of generated messages during the message-passing. 
\end{theorem}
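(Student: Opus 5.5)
We unroll the recursive inequality stated just before the theorem,
\[
||v^{(t)}||_1 \leq 4c^3(2c+c^2)\,(2t(n{-}d)q_{max})\,\bigl\|M\bigl(2v^{(t-1)}\,2t(n{-}d)q_{max} + v^{(t-1)} + 1\bigr)\bigr\|_1 ,
\]
over the at most $d$ levels of the DAG and then sum the resulting bounds over $t$. The first step is to linearise the recursion. We use $||M||_1\le s$ and $||\mathbf{1}||_1\le s$, and absorb $v^{(t-1)}+1$ into $2\cdot 2t(n{-}d)q_{max}\,v^{(t-1)}$, which is valid since every nonempty message has at least one piece. This gives
\[
||v^{(t)}||_1 \le K\, t^2\,||v^{(t-1)}||_1 + K' t\, s .
\]
Here $K = \mathcal{O}(c^5 (n{-}d)^2 q_{max}^2 s)$ collects the factor $(2c+c^2)c^3 = \mathcal{O}(c^5)$, the two factors $2t(n{-}d)q_{max}$ (one from the degree growth, one from the $8mq$ term of \cref{prop:complexity_max_out} and \cref{prop:mp_map_pieces_compute_msg}), and $||M||_1\le s$.

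The second step is to iterate from the base case $||v^{(0)}||_1\le cs$. Ignoring the additive term for the moment, we get
\[
||v^{(t)}||_1 \le \Bigl(\prod_{\tau=1}^{t} K\tau^2\Bigr) cs = K^t (t!)^2 cs .
\]
The additive term contributes a geometric-type sum that is dominated by the top term, since each level multiplies the bound by at least $K\ge 1$. Summing over $t=0,\dots,d$ and using the same domination, $\sum_{t\le d}||v^{(t)}||_1 = \mathcal{O}(K^d (d!)^2 cs)$. The last step is to substitute $K$ and bound $s$, the number of factors, by $2n$ (there are $n$ unary factors and at most $n-1$ edge factors in a tree). This gives $c^{5d+1}(n{-}d)^{2d}q_{max}^{2d}(2n)^{d+1}(d!)^2$. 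Here the $(2n)^{d+1}$ comes from $s^d$ (the $||M||_1$ factor at each level) times the $s$ in the base case, and the extra power of $c$ comes from the base case. We also use that $\sum_t v^{(t)}$ has nonnegative entries, so $||\sum_t v^{(t)}||_1=\sum_t||v^{(t)}||_1$. The nilpotency of $M$ with order at most $d$ (Prop.~21 of \citet{ZengICML20}, as recalled above) ensures that $v^{(t)}=0$ for $t>d$, so truncating the sum at $d$ loses nothing.

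The step we expect to be the main obstacle is keeping the degree bookkeeping consistent, so that exactly $(d!)^2$ and $q_{max}^{2d}$ come out and not something larger. At level $t$ the degree of the polynomial passed to $\maxout^{\mathsf{PP}}$ is at most $2t q_{max}$, and the \gathermsgs{} product contributes the factor $(n{-}d)$. The bound of \cref{prop:mp_map_pieces_compute_msg} contains the degree twice: once inside $8mq$ and once in the pointwise-maximum factor $c^2q$. This double appearance is why each level contributes $t^2$, and hence $(d!)^2$ overall. We would check carefully that the constants $2^{2t}$ and $4^t$ arising along the way are absorbed into the $\mathcal{O}$. This requires treating $d$ as the fixed parameter, consistent with the fixed-parameter claim of \cref{th:mp_tractability}. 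If that absorption is not acceptable, we would instead fold these constants into the $(2n)^{d+1}$ factor, using $n\ge 2$.
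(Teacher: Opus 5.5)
Your proposal is correct and follows the paper's proof essentially step for step. You bound the recursion by $K t^2 ||v^{(t-1)}||_1$ with $K = \mathcal{O}(c^5 (n{-}d)^2 q_{max}^2 s)$, unroll from $||v^{(0)}||_1 \le cs$ to $K^t (t!)^2 cs$, let the $t=d$ term dominate the sum, and substitute $s \le 2n$. The only cosmetic difference is that the paper removes the additive term at once via $||v^{(t-1)}||_1 + 1 \le 2||v^{(t-1)}||_1$, whereas you carry it along and show it is dominated.

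Your remark that the constants raised to the $d$-th power are absorbed into the $\mathcal{O}$ only when $d$ is treated as fixed is accurate; the paper does this implicitly when it passes from $K = \mathcal{O}(\cdot)$ to $K^d$. However, your fallback of folding those constants into $(2n)^{d+1}$ would not work. That factor is already fully consumed by bounding $s^{d+1}$ and has essentially no slack.
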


\begin{proof}
\begin{align}
    ||v^{(t)}||_1 &\leq (2c+c^2)c\cdot c^2\cdot 4 (2t(n{-}d)q_{max})||M(2v^{(t-1)}2t(n{-}d)q_{max} + v^{(t-1)} + 1)||_1\\
    &\leq (2c+c^2)c\cdot c^2\cdot 4 (2t(n{-}d)q_{max})(||M(2v^{(t-1)}2t(n{-}d)q_{max} + v^{(t-1)})||_1 + s)\\
    &\leq (2c+c^2)c\cdot c^2\cdot 4 (2t(n{-}d)q_{max})(s||(2v^{(t-1)}2t(n{-}d)q_{max} + v^{(t-1)})||_1 + s)\\
    &\leq (2c+c^2)c\cdot c^2\cdot 4 (2t(n{-}d)q_{max})(s||v^{(t-1)}(4t(n{-}d)q_{max} + 1)||_1 + s)\\
    &\leq (2c+c^2)c\cdot c^2\cdot 4 (2t(n{-}d)q_{max})(4t(n{-}d)q_{max} + 1)(s||v^{(t-1)}||_1 + s)\\
    &\leq (2c+c^2)c^3\cdot 8 (4t^2(n{-}d)^2q_{max}^2 + q_{max}^2)(s||v^{(t-1)}||_1 + s)\\
    &\leq ((2c+c^2)c^3\cdot 8\cdot5(n{-}d)^2q_{max}^2s) t^2(||v^{(t-1)}||_1 + 1)\\
    &\leq ((2c+c^2)c^3\cdot 8\cdot5(n{-}d)^2q_{max}^2s) t^2 2||v^{(t-1)}||_1\\
    &\leq K t^2||v^{(t-1)}||_1
\end{align}
with $K=(2c+c^2)c^3\cdot 5\cdot8\cdot2 (n{-}d)^2q_{max}^2s$.

Using this recurrence, we focus on the overall number of messages:

\begin{align}
    ||\sum_{t=0}^d v^{(t)}||_1 &\leq \sum_{t=0}^d||v^{(t)}||_1\\
    &\leq \sum_{t=0}^d[K t^2||v^{(t-1)}||_1]\\
    &\leq \sum_{t=0}^d[K^t (t!)^2(||v^{(0)}||_1]\\
    &\leq \sum_{t=0}^dK^t (t!)^2cs\\
    &\Rightarrow ||\sum_{t=0}^d v^{(t)}||_1 = \mathcal{O}(cs K^d (d!)^2) \quad \text{(final term dominates, see ratio of the terms in the sum)}
\end{align}

We also know that $K=\mathcal{O}(c^5 q_{max}^2s)$, therefore:
\begin{align}
    ||\sum_{t=0}^d v^{(t)}||_1 &= \mathcal{O}(cs (c^5 (n{-}d)^{2} q_{max}^2s)^d (d!)^2)\\
    &= \mathcal{O}(c^{5d+1} (n{-}d)^{2d} q_{max}^{2d} s^{d+1} (d!)^2)
\end{align}
As we know that $s \leq 2n$, we have $||\sum_{t=0}^d v^{(t)}||_1 = \mathcal{O}(c^{5d+1} (n{-}d)^{2d} q_{max}^{2d} (2n)^{d+1} (d!)^2)$.

\end{proof}

After deriving the computational complexity for weights in $\wfamily{\mathsf{PP}}$, we will now focus on weights in $\wfamily{\mathsf{PEP}}$. The general approach is the same as in theorem \ref{th:mp_complexity_pp} with an important difference: the bound on the degree of the exponentiated polynomial stays constant and does not increase in depth. This is due to the product of the exponentiated polynomials turning into the exponential of the sum of the polynomials, which does not increase the degree. 

\begin{theorem}
    \label{th:mp_complexity_pep}
    For weights in $\wfamily{\mathsf{PEP}}$, we have an overall bound of $||\sum_{t=0}^d v^{(t)}||_1 = d c^{5d+1}(2n)^{d+1}q_{max}^{2d}$ 
    for the number of generated messages during the message-passing. 
\end{theorem}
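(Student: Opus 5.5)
We follow the proof of \cref{th:mp_complexity_pp}, changing only how the polynomial degree evolves along the message-passing. In the $\wfamily{\mathsf{PP}}$ case the degree passed to $\maxout^{\mathsf{PP}}$ at depth $t$ grows linearly in $t$. This produces the factors $(2t(n{-}d)q_{max})$ in the per-step recurrence, and those factors are the source of $(d!)^2$ and $(n{-}d)^{2d}$. For $\wfamily{\mathsf{PEP}}$ we work in log-space, as in $\maxout$, where products of pieces become sums of exponents. The exponent degree is therefore $\max\{\mathsf{deg}(p_i),\mathsf{deg}(p_h)\}\le q_{max}$ at every depth. The first step is to make this precise by induction on $t$, tracking the exponents. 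The incoming message at depth $t$ has exponent degree at most $q_{max}$. \gathermsgs{} adds exponents, so the degree stays at most $q_{max}$. $\maxout$ applied to the log-polynomial returns, on each piece, either the exponent composed with an affine map or a constant. Neither operation raises the degree. The pointwise maximum also only selects among existing pieces.

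Next we specialise \cref{prop:mp_map_pieces_compute_msg}, item 3, with $q'=q_{max}$ fixed. One step then gives
\[
v^{(t)}_j \le (2c+c^2)\,c^3\,q_{max}\sum_i M_{ij}\bigl(8 v^{(t-1)}_i q_{max} + 4 v^{(t-1)}_i + 4\bigr).
\]
Unlike the $\wfamily{\mathsf{PP}}$ case, no $t$ or $(n{-}d)$ factor enters through the degree. Using $\|M\|_1\le s$ and $v^{(t-1)}\ge 1$, we absorb the additive terms as in the previous proof. This yields $\|v^{(t)}\|_1\le K'\|v^{(t-1)}\|_1$ with $K'=\mathcal{O}(c^5 q_{max}^2 s)$. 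Here one $q_{max}$ comes from the $\maxout$ piece count and one from the pairwise-comparison count $c^2 q$.

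Unrolling the recursion gives $\|v^{(t)}\|_1\le K'^t\|v^{(0)}\|_1\le K'^t cs$. Summing over $t=0,\dots,d$ and bounding each term by the largest, or by the geometric sum, gives at most $(d{+}1)\,cs\,K'^d$. This equals $\mathcal{O}(d\,c^{5d+1}q_{max}^{2d}s^{d+1})$, and substituting $s\le 2n$ produces the stated bound (read, as in \cref{th:mp_complexity_pp}, up to constant factors).

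The main obstacle is the first step: showing rigorously that the exponent degree does not grow. Two points need care. The first is that \gathermsgs{} must not raise the degree through the $(n{-}d)$-fold product, which is where the $(n{-}d)^{2d}$ factor came from in the $\wfamily{\mathsf{PP}}$ bound. In the exponentiated setting the pieces multiply but the exponents only add, so the degree stays bounded. The piece count is still bounded by the sum of pieces, by \cref{prop:mp_map_pieces_compute_msg}(1). The second point is that the monotonicity of $\log$ lets $\maxout$ and the pointwise maximum operate on exponents without changing breakpoints. With this lemma established, the rest is the same computation as before with $t$-independent constants.
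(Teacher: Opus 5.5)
Your proposal is correct and follows the paper's proof essentially step for step. The exponent degree stays at $q_{max}$ because multiplying exponentiated pieces only adds exponents, so item 3 of \cref{prop:mp_map_pieces_compute_msg} gives the $t$-independent recurrence $\|v^{(t)}\|_1\le K'\|v^{(t-1)}\|_1$ with $K'=(2c+c^2)c^3\cdot 24\,q_{max}^2 s$; unrolling it, summing the $d+1$ terms and substituting $s\le 2n$ gives the bound, and your explicit induction on the exponent degree just spells out what the paper asserts in one line (note that, exactly as in the paper, writing $K'^d=\mathcal{O}(c^{5d}q_{max}^{2d}s^d)$ hides a factor of up to $72^d$, so the stated form holds only up to a $C^d$ factor, which is harmless when $d$ is bounded as in \cref{th:mp_tractability}).
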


\begin{proof}
We first start with the expression of the recursion
\begin{align}
    ||v^{(t)}||_1 &\leq (2c+c^2)c\cdot c^2\cdot 4 \cdot q_{max}||M(2v^{(t-1)}q_{max} + v^{(t-1)} + 1)||_1\\
    &\leq (2c+c^2)c\cdot c^2\cdot 4 (q_{max})(s||(2v^{(t-1)}q_{max} + v^{(t-1)})||_1 + s)\\
    &\leq (2c+c^2)c\cdot c^2\cdot 4 (q_{max})(s||v^{(t-1)}(2q_{max} + 1)||_1 + s)\\
    &\leq (2c+c^2)c\cdot c^2\cdot 4 (q_{max})(2q_{max} + 1)(s||v^{(t-1)}||_1 + s)\\
    &\leq (2c+c^2)c\cdot c^2\cdot 4 (2q_{max}^2 + 1)(s||v^{(t-1)}||_1 + s)\\
    &\leq (2c+c^2)c\cdot c^2\cdot 4 (2q_{max}^2 + q_{max}^2)(s||v^{(t-1)}||_1 + s)\\
    &\leq (2c+c^2)c\cdot c^2\cdot 4 (3q_{max}^2)(s||v^{(t-1)}||_1 + s)\\
    &\leq (2c+c^2)c\cdot c^2\cdot 4 (3q_{max}^2)s(||v^{(t-1)}||_1 + 1)\\
    &\leq (2c+c^2)c\cdot c^2\cdot 4 (3q_{max}^2)s2||v^{(t-1)}||_1\\
    &\leq (2c+c^2)c\cdot c^2\cdot 8 (3q_{max}^2)s||v^{(t-1)}||_1\\
    &= K'||v^{(t-1)}||_1
\end{align}

with $K'=(2c+c^2)c\cdot c^2\cdot 8 (3q_{max}^2)s$.

Using this recurrence, we focus on the overall number of messages:

\begin{align}
    ||\sum_{t=0}^d v^{(t)}||_1 &\leq \sum_{t=0}^d||v^{(t)}||_1\\
    &\leq \sum_{t=0}^d[K' ||v^{(t-1)}||_1]\\
    &\leq \sum_{t=0}^d[(K')^t (||v^{(0)}||_1]\\
    &\leq \sum_{t=0}^d(K')^t cs\\
    &\Rightarrow ||\sum_{t=0}^d v^{(t)}||_1 = \mathcal{O}(d cs (K')^d)
\end{align}

As also know that $K'=\mathcal{O}(c^5 q_{max}^2s)$ and $s \leq 2n$, we have:
\begin{align}
    ||\sum_{t=0}^d v^{(t)}||_1 &= \mathcal{O}(d cs (K')^t)\\
    &= \mathcal{O}(d c2n (c^5 q_{max}^22n)^d)\\
    &= \mathcal{O}(d c^{5d+1}(2n)^{d+1}q_{max}^{2d}).
\end{align}

\end{proof}

We will now prove \cref{th:mp_tractability}, which we will restate first:
\begin{theorem}[Tractability of \mapal{}]
\label{th:mp_tractability_app}
If the global graph of \mapal{} has treewidth one and bounded diameter, and the density fulfills the TMC (\cref{def:tractable_mp_conditions}), then \mapal{} can be solved tractably.
\end{theorem}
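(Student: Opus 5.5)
The proof is by construction: we show that \mpmap{} (\cref{alg:mp_map}) returns the exact constrained maximum and maximizer, and that it runs in time polynomial in the input size once the diameter $d$ is fixed. There are three steps.

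\textbf{Step 1: exactness of the message-passing decomposition.} Since $\mathcal{G}=(\vX,\mathcal{E}_{\formula}\cup\mathcal{E}_p)$ has treewidth one, we root each tree and write $\llbracket \vx\models\formula\rrbracket\, p(\vx)=\prod_\varscope \factor{\varscope}(\vx_\varscope)$, with every factor supported on a vertex or on a tree edge. The argument is an induction from the leaves to the root. Every factor is nonnegative, and the subtree below $X_i$ shares only $X_i$ with the rest of the graph. Hence $\max$ distributes over the product: $\max_{a,b} f(a)g(a,b)=\max_a f(a)\max_b g(a,b)$ whenever $f\ge 0$. This gives \eqref{eq:mp_map_derivation_msg_to_fac}--\eqref{eq:mp_map_derivation_final}, so the message $\msg{\factor{ij}}{X_j}{}(x_j)$ equals the maximum of the product of all factors in the subtree of $X_i$, taken over those variables and with $x_j$ fixed. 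We track the argmax alongside the value. Each message then also carries a piecewise map $x_j\mapsto$ (coordinates of the maximized-out variables), and evaluating these maps at the root argmax recovers $\vx^*$.

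\textbf{Step 2: each message stays in the class and is computed tractably.} Here we invoke the TMC. The variable-to-factor message is a pointwise product, so it stays in $\wfamily{}$ by closedness under product; its constraint part is a conjunction of indicators. For the factor-to-variable message we follow \cref{alg:mp_map_compute_generic}. The critical points of $\formula_{ij}$ cut the range of $X_j$ into at most $2c+c^2$ intervals. On each interval, every feasible piece is a strip $l(x_j)\le x_i\le u(x_j)$ with affine $l,u$. By TMC (ii) the symbolic supremum over each strip lies in $\wfamily{}$ and is computed tractably; for $\wfamily{\mathsf{PP}}$ and $\wfamily{\mathsf{PEP}}$ this is \cref{th:max_out_correctness}, combined with the factorization $\max_{x_i}f_1(x_j)f_2(x_i)=f_1(x_j)\max_{x_i}f_2(x_i)$. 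By TMC (iii) the pointwise maximum over the strips stays in $\wfamily{}$ and is tractable. The final root maximization is the special case of constant bounds.

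\textbf{Step 3: bounding the total size.} Tractability of each operation is not enough, because the piece count and the degrees grow as messages propagate. We therefore bound the total number of pieces in all messages. \cref{prop:complexity_max_out} and \cref{prop:mp_map_pieces_compute_msg} bound the pieces of a single message. Organizing the message DAG through its nilpotent adjacency matrix, whose nilpotency order is at most $d$, gives the recursions behind \cref{th:mp_complexity_pp} and \cref{th:mp_complexity_pep}. The resulting bounds are $\mathcal{O}(c^{5d+1}(n-d)^{2d}q_{max}^{2d}(2n)^{d+1}(d!)^2)$ for $\wfamily{\mathsf{PP}}$ and $\mathcal{O}(d\,c^{5d+1}(2n)^{d+1}q_{max}^{2d})$ for $\wfamily{\mathsf{PEP}}$. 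For bounded $d$ both are polynomial in $n$, $c$ and $q_{max}$. The work per piece is polynomial: real-root isolation runs in $\tilde O(q^3\log b)$, and the argmax maps have degree at most one. Multiplying the two bounds gives polynomial total time.

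The main obstacle is Step 3. The degree bound grows with $t$ in the $\wfamily{\mathsf{PP}}$ case, so the recursion must absorb this growth into the $(d!)^2$ factor, and that factor is harmless only because $d$ is bounded. A second point needs care: the theorem is stated for a generic TMC family. In the generic case I would require, as part of ``computed tractably'', that the size of the outputs of TMC (ii)/(iii) grow at most polynomially per step. The depth-$d$ recursion then compounds this growth only polynomially for fixed $d$.
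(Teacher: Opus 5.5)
Your proposal is correct and follows essentially the same route as the paper: \mpmap{} is constructed explicitly, each \computemsgs{} call is tractable under the TMC (polynomially many critical intervals, tractable symbolic supremum and pointwise maximum), and the only source of super-polynomial blow-up, namely nesting these calls, is bounded by the diameter, so bounded diameter keeps the compounded polynomial growth polynomial. The paper makes this last step qualitatively rather than by invoking the explicit piece counts of \cref{th:mp_complexity_pp} and \cref{th:mp_complexity_pep}, so your quantitative Step~3 and your remark on generic TMC families are a more explicit version of the same argument.
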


\begin{proof}
We will first look at tractability of the algorithms \gathermsgs{} and \computemsgs{} for arbitrary densities that fulfill the TMC. A first important observation is that while while the symbolic supremum is in itself is tractable, as required by the TMC, nesting the symbolic suprenum $n$-times is not tractable, as the output-size of the symbolic suprenum can scale polynomially. So:
\[
\label{eq:mp_map_proof_tractability_general}
\sup_{(x_1, x_2, \dots, x_n)} f(\vx) = \sup_{x_1 \in [a,b]} f'(x_1,x_2) \left( \sup_{x_x \in [l_1(x_1), u_1(x_2)]} f''(x_2,x_3)\left( \sup_{x_3 \in [l_2(x_2), u_2(x_2)]} f'''(x_3,x_4) \dots \right)\right)
\]
is not tractable in $n$. In comparison, this is not the case of the point-wise product.\\
As \gathermsgs{} (and \gatherroot{}) consists of the pointwise-product operation, repeated calling of these methods is not an issue. In comparison, \computemsgs{} is more costly. First, we have a loop over the intervals spanned by the critical points (line \ref{line:mp_map_compute_generic:I}). As we've seen in the proof of \cref{prop:mp_map_pieces_compute_msg}, these are polynomially many in the number of atoms $c$ in $\formula_{ij}$. Then, we do the symbolic suprenum for each interval, which is tractable and therefore has a polynomial output-size. We then do the pointwise maximum between the resulting functions (line \ref{line:mp_map_compute_generic:max-pieces}), again tractable but also polynomial output-size. Calling it once is therefore tractable, but the problem is that nesting these calls is not tractable anymore. This is due to the same reason as \cref{eq:mp_map_proof_tractability_general} is not tractable: the output size is polynomial in input size.\\
The question therefore is, on what depends the recursion-depth of \computemsgs{} (so repeated application of \computemsgs{} to output generated by \computemsgs{}) during the computation of \mpmap{}? This becomes obvious once we pivot our attention from the pseuocode to the mathmatical definition of the messages in equation \ref{eq:msg-var-to-fac}, \ref{eq:msg-fac-to-var} and \ref{eq:msg-final}. As we traverse the tree (or multiple trees in case of a forest) recursively from root to children, this number depends on the longest path found in the graph $\mathcal{G}$, also called the diameter of the graph. But as we assume boundedness of the diameter, the length of the path can not be arbitrarily long, even in varying dimensions. An example of this would be the diameter of the \emph{STAR}-problems in the experiments-section for \mpmap{} \ref{sec:experiments_mp_map}. Therefore, \mapal{} is tractable if the global graph of \mapal{} has treewidth one and bounded diameter, assuming the density fulfills the tractable map-conditions.
\end{proof}

\FloatBarrier

\subsection{Additional Routines for $\mpmap$}

\begin{algorithm}
    \caption{$\bm\gatherroot$($X_i$)
    }
    \label{alg:mp_map_marg}
    \begin{algorithmic}[1]
        \INPUT $X_i$: variable
        \OUTPUT $q$: piecewise polynomial
        \STATE $Q \gets \{ \msg{\factor{j',i}}{X_i}{} \mid \forall j' {\in} \neigh(i)\} \cup \{ \msg{\factor{i}}{X_i}{} \}$\\
        \RETURN $\prod_i Q_i$ \COMMENT{point-wise product}
    \end{algorithmic}
\end{algorithm}

\begin{algorithm}
    \caption{$\bm\getmsgpieces$($\msg{X_j}{\factor{ij}}{}, I, \wfun, \formula_{ij}'$)
    }
    \label{alg:mp_map_get_msg_pieces}
    \begin{algorithmic}[1]
        \INPUT $\msg{X_j}{\factor{ij}}{}$: univariate $\wfamily{}$ element, $I$: tuple start/end, $\wfun$ bivariate $\wfamily{}$ element
        \OUTPUT $\mathcal{Q}$: list of (lower-bound, upper bound, univariate function, univariate function)
        \STATE $\mathcal{I} \gets \findsimbolicboundsin(I, \formula_{ij}')$ \COMMENT{list of linear inequalities for $X_j$ wrt $X_i$}
        \STATE $r \gets []$
        \FOR{$k \in [1, \mathsf{len}(\mathcal{Q})]$}
            \STATE $q_k^i(x_i)\cdot q_k^j(x_j) \gets \piece(\omega, \mathcal{Q}_k, I)$ \COMMENT{gets the factorized, active piece}
            \STATE $(l_k, u_k) \gets \mathcal{Q}_k$ \COMMENT{Current formula in the form $l_k(X_i) \leq X_j \leq u_k(X_i)$}
            \STATE $\append(r, (l_k, u_k, q_k^i(x_i), q_k^j(x_j)\cdot \msg{X_j}{\factor{ij}}{}))$
        \ENDFOR
        \RETURN $r$
    \end{algorithmic}
\end{algorithm}

\begin{algorithm}
    \caption{$\bm\maxout(q, l, u)$}
    \label{alg:max_out}
    \begin{algorithmic}[1]
        \INPUT $q$: piecewise polynomial or piecewise exponentiated polynomial, $l$: affine lower bound, $u$ affine upper bound
        \OUTPUT $m$: piecewise polynomial  or piecewise exponentiated polynomial%
        \IF{$\ispiecewiseexponentianted(q)$}
            \STATE $q' \gets \log q$ \COMMENT{piecewise $\log$}
        \ELSE
            \STATE $q' \gets q$
        \ENDIF
        \STATE $m \gets \maxout^{\mathsf{PP}}(q', l, u)$
        \IF{$\ispiecewiseexponentianted(q)$}
            \STATE $m' \gets \exp m$ \COMMENT{piecewise $\exp$}
        \ELSE
            \STATE $m' \gets m$
        \ENDIF
        \RETURN $m'$
    \end{algorithmic}
\end{algorithm}

\FloatBarrier

\subsection{Additional Routines for $\maxout$ and $\maxout^{\mathsf{PP}}$}

\begin{algorithm}
    \caption{$\bm\dominatingpoly(q_1, q_2, i_1, i_2)$)
    }
    \label{alg:mp_map_dominating_poly}
    \begin{algorithmic}[1]
         \INPUT $q_1$, $q_2$: monotone polynomial or constant map to $-\infty$, $i_1$ lower bound interval, $i_2$ upper bound interval ($q_1$ and $q_2$ do not intersect on $(i_1,i_2)$ or are equal)
        \OUTPUT polynomial $\hat{q} \in \{q_1, q_2\}$ such that $q_1 \leq \hat{q}$ and $q_2 \leq \hat{q}$ point-wise on $[i_1, i_2]$
        \STATE $q_1^\prime \gets \extractpolypiece(q_1, i_1, i_2)\text{ if } (i_1,i_2) \in \dom(q_1) \text{ else } (y \mapsto -\infty)$
        \STATE $q_2^\prime \gets \extractpolypiece(q_2, i_1, i_2)\text{ if } (i_1,i_2) \in \dom(q_2) \text{ else } (y \mapsto -\infty)$
        \STATE $T \gets \{i \mid i \in \{\ i_1, i_2\}\}$
        \STATE $T \gets {0} \text{ if } T = \varnothing$
        \STATE $e_1 \gets \max\{q_1^\prime(i) \mid i \in T\}$
        \STATE $e_2 \gets \max\{q_2^\prime(i) \mid i \in T\}$
        \IF{$e_1 > e_2$}
            \RETURN $q_2$
        \ELSIF{$e_1 > e_2$}
            \RETURN $q_2$
        \ELSIF{$e_1 = -\infty$}
            \RETURN $q_1^\prime$ \COMMENT{does not matter}
        \ELSE
            \STATE \COMMENT{We have to check what happens inside the interval, both are defined but we have the same values for our points}
            \STATE $i_{min} \gets \min T$
            \STATE $r \gets q_1^\prime - q_2^\prime$
            \STATE $d \gets \highestnvderivativeat(r, i_{min})$
            \IF{$d \geq 0$}
                \STATE \COMMENT{either $q_1^\prime$ dominates or equal}
                \RETURN $q_1^\prime$
            \ELSE
                \RETURN $q_2^\prime$
            \ENDIF
        \ENDIF
    \end{algorithmic}
\end{algorithm}

\begin{algorithm}
    \caption{$\bm\preparebreaks(q, l, u)$)
    }%
    \label{alg:prepare_breakpoints}
    \begin{algorithmic}[1]
    \INPUT $q$: piecewise polynomial, $l$: affine function, $u$: affine function
    \OUTPUT $V_q$, $C_{\hat{e}}$, $C_{\mathrm{switch}}$, $C_{\mathrm{bounds}}$ sets of points
    \STATE $E_q \gets \extremepoints(q)$
    \STATE $B_q \gets \breakpoints(q)$
    \STATE $q'(x) \gets \max\max(\{\begin{array}[t]{@{}l@{}}
            \lim_{x \to b^-} q(x)\ \text{if defined},\\
            \lim_{x \to b^+} q(x)\ \text{if defined}
        \})\end{array}$
    \STATE $V_q \gets \{b \mapsto q'(b) \mid b \in E_q \cup B_q \}$
    \STATE $\Omega \gets \{y \in \mathbb{R} \mid l(y) {\leq} u(y)\}$
    \STATE $C_{\hat{e}} \gets \bigcup_{f \in {l,u}} \{ f^{-1}(y) \mid y \in \mathrm{E}_q {\cup} \mathrm{B}_q \land f {\neq} \text{const.}\}\cap \Omega$
    \STATE $C_{\mathrm{switch}} \gets \roots((q \circ u) {-} (q \circ l)) \cap \Omega$
    \STATE $C_{\mathrm{bounds}} \gets \bounds(l, u)$
    \RETURN $V_q$, $C_{\hat{e}}$, $C_{\mathrm{switch}}$, $C_{\mathrm{bounds}}$
    \end{algorithmic}
\end{algorithm}

\begin{algorithm}
    \caption{$\bm\innermax(l, u, i_1, i_2, V_q)$
    }%
    \label{alg:inner_max}
    \begin{algorithmic}[1]
     \INPUT $l$: affine function, $u$: affine function, $i_1$: lower bound interval, $i_2$: upper bound interval, $V_q$: map from point to value
     \OUTPUT $\hat{e}$: the maximum of $V_q$ inside $l([i_1,i_2])$ and $u([i_1,i_2])$ or $-\infty$
    \STATE $l^\prime, u^\prime \gets \max\{l(i_1), l(i_2)\}, \min\{u(i_1), u(i_2)\}$
            \STATE $
            \hat{e} \gets
            \max(\begin{array}[t]{@{}l@{}}

            \{V_q(b) \mid b \in V_q \land l^\prime \leq b \leq u^\prime \}\\
            \cup \{-\infty\})

            \end{array}
    $ \alglinelabel{line:inner_max:hat_e_assignment}
    \RETURN $\hat{e}$
    \end{algorithmic}
\end{algorithm}

\begin{algorithm}
    \caption{$\bm\simplify(m)$)
    }
    \label{alg:simplify_piecewise}
    \begin{algorithmic}[1]
    \INPUT $m$: piecewise polynomial
    \OUTPUT piecewise polynomial but with redundant pieces removed
    \IF{$\isempty(m)$}
        \RETURN $m$
    \ELSE
        \STATE $m^\prime \gets \emptypiecewise()$
        \STATE $\$^{\prime}, l_{last},u_{last}, \$^{\prime\prime},p_{last} \gets \topieces(m)[0]$
        \STATE \COMMENT{$\$^{\prime}$, $\$^{\prime\prime}$ denote whether the interval starts open/closed}
        \FOR{$\$^{start},l,u,\$^{end},p \in \gets \topieces(m)[1:]$}
            \IF{$p = p_{last}$}
                \STATE $u_{last} \gets u$
                \STATE $\$^{\prime\prime} \gets \$^{end}$
            \ELSE
                \STATE $m^\prime|_{\$^{\prime}l_{last},\,u_{last}\$^{\prime\prime}} \gets p^\prime$
                \STATE $\$^{\prime}, l_{last},u_{last}, \$^{\prime\prime},p_{last} \gets \$^{start},l,u,\$^{end},p$
            \ENDIF
        \ENDFOR
        \STATE $m^\prime|_{\$^{\prime}l_{last},\,u_{last}\$^{\prime\prime}} \gets p^\prime$
        \RETURN $m^\prime$
    \ENDIF
    \end{algorithmic}
\end{algorithm}

\FloatBarrier

\section{Particle, Constraint-Aware Adam Optimizer}%
\label{sec:app_pcadam}
In this section, we provide details on \pcadam{}, the particle-based, constraints-aware version of \adam{} that we present as a side contribution.
We use \pcadam{} both as a baseline to compare with, as well as a convex-polytope optimizer combined with \pamap{} in the data imputation experiments (\cref{sec:experiments}).
Given the constraints $\formula$ and a density $p$ to be maximized, and an initial batch of points $\mathcal{X}_0 = \{\vx_1, \dots, \vx_n\}$, \pcadam{} keeps track of the best feasible solution found so far, initialized to $-\infty$ (lines~\ref{line:pcadam:init_m}--\ref{line:pcadam:init_xstar}). Then, for a fixed number of iterations, it updates the batch of points using a gradient-based optimizer (line~\ref{line:pcadam:update}). After each update, it checks which of the new points satisfy the constraints (line~\ref{line:pcadam:check_sat}) and updates the best solution found so far accordingly (lines~\ref{line:pcadam:update_m}--\ref{line:pcadam:update_xstar}). Finally, it returns the best solution found (line~\ref{line:pcadam:return}).
\begin{algorithm}
    \caption{\textbf{\pcadam}$(\formula, p, \mathit{it}, \mathcal{X}_0)$}%
    \label{alg:pcadam}
    \begin{algorithmic}[1]
        \INPUT $\formula$: \smtlra{} formula, $p$: density function, $\mathcal{X}_0$: initial batch of points, $\mathit{it}$: iterations
        \OUTPUT $\vx^*$: global best point, $m^*$: maximum density found
        \STATE $m^* \gets -\infty$ \alglinelabel{line:pcadam:init_m}
        \STATE $\vx^* \gets \emptyset$ \alglinelabel{line:pcadam:init_xstar}
        \STATE $\mathcal{X} \gets \mathcal{X}_0$ 
        \FOR{$i=1$ to $\textit{it}$}
            \STATE $\mathcal{X} \gets \mathsf{update}(\mathbf{X}, \nabla p)$ \alglinelabel{line:pcadam:update} \COMMENT{Gradient-based step}
            \FOR{\textbf{each} $\vx \in \mathcal{X}$}
                \IF{$\vx\models\formula$ \textbf{and} $p(\vx) > m^*$} \alglinelabel{line:pcadam:check_sat}
                    \STATE $m^* \gets p(\vx)$ \alglinelabel{line:pcadam:update_m}
                    \STATE $\vx^* \gets \vx$ \alglinelabel{line:pcadam:update_xstar}
                \ENDIF
            \ENDFOR
        \ENDFOR
        \RETURN $\vx^*, m^*$ \alglinelabel{line:pcadam:return}
    \end{algorithmic}
\end{algorithm}

\newpage

\section{Experiments}%
\label{sec:app_experiments}
\subsection{Implementation}%
\label{sec:app_implementation}
We have implemented our algorithms in Python, building on top of several existing libraries. 
For \pcadam{}, we have used Pytorch's Adam optimizer for parallel unconstrained optimization.
For \pamap{}, we have used the SAE4WMI enumeration algorithm~\cite{spallitta2024enhancing} implemented in wmpy\footnote{\url{https://github.com/unitn-sml/wmpy}}; for numerical constrained optimization over convex polytopes, we have used SciPy's optimization routines; for Lasserre's method, we implemented the moment hierarchy on top of SumOfSquares.py\footnote{\url{https://github.com/yuanchenyang/SumOfSquares.py}}.
For MP-MAP, we have implemented our message-passing using SymPy for the symbolic computations over the polynomials and pySMT in order to query SMT solvers for logical operations. Parts of the code are adapted from MP-WMI \cite{ZengICML20}, which has a similar high-level structure.

\subsection{Details for STAR, SNOW and PATH}
\label{sec:app_details_tree_shaped_primal}

We start by sampling a random, $N$-variable SMT formula of the shape of either \emph{STAR} (star-shaped primal graph), \emph{SNOW} (ternary-tree shaped primal graph) or \emph{PATH} (linear-chain shaped primal graph). On the resulting support, we generate a non-negative, piecewise polynomial function by first randomly selecting literals to attach the polynomials to. Our weight-functions then looks like this: $\prod_{l \in \text{random-literals}} (\text{if $\vx \models l$ then $q_l(\vx)$ else 1})$. As each literal can at most mention two variables, the polynomial can also only be over those two variables, as it has to respect the scope of the literal it is attached to. Furthermore, as we want to have separable polynomials, we generate the polynomials by generating two univarate polynomials of degree $deg$ and form the product. Each univariate polynomial is generated as follows:

\begin{algorithm}
    \caption{$\bm\randompolyunivar(var, deg, bound_{lower}, bound_{upper}, pareto_{upper})$)
    }
    \label{alg:rand_poly_univar_tree_primal_graph}
    \begin{algorithmic}[1]
    \STATE $generateDegree \gets int(deg / 2)$
    \STATE $numRoots \gets generateDegree - 1$
    \STATE $roots \gets [r \sim U[bound_{lower}, bound_{upper}] \mid \_  \in [1, numRoots]]$
    \STATE $\mathsf{sample\text{-}coeff} \gets \lambda. \textbf{ return } s \sim \mathsf{TruncatedPareto}(min=2.0, max=pareto_{upper}, eps=0.01)$
    \STATE $mons \gets [(\mathsf{sample\text{-}coeff}()\cdot (var - r)) \mid r \in roots]$
    \STATE $poly^{unsquared}_{derivative} \gets \prod_{m \in mons} m$
    \STATE $poly^{unsquared} \gets \mathsf{integrate}(poly^{unsquared}_{derivative}, var)$
    \RETURN $(poly^{unsquared}(var))^2+1$
    \end{algorithmic}
\end{algorithm}

Additionally, for the literals specifying the global bounds (so $x_i \geq bound_{lower}$ and $x_i \leq bound_{upper}$), we attach the polynomial $q_{global}(x_i) = (x - bound_{lower})*(bound_{upper} - x)$, as otherwise the maximum is too commonly found at the global bounds.

We generate the inequalities by sampling two random points inside our global bounds, compute the connecting line which forms our decision boundary and randomly choosing either the left or right as the valid half-space.

We generate the datasets with the following configurations. For each shape $s \in \{\text{STAR}, \text{SNOW}, \text{PATH}\}$ we generate problems over the combination over the following parameters:
\begin{itemize}
    \item $N$: $2$, $4$, $6$, $8$, $10$
    \item $deg$ (squared, per univariate poly): $2$, $3$, $4$
    \item $n_{clauses}$: $2$, $3$
    \item $n_{literals}$: $2$, $4$
    \item $pareto_{upper}$: $N \leq 6 \Rightarrow 15$, $N \geq 8 \Rightarrow 2.5$, 
    \item $bound_{lower}$: $-1$
    \item $bound_{upper}$: $1$
\end{itemize}
Of each configuration, we generate $2$ random problems. 

Here, the number of random literals $n_{literals}$ is per clause (so times $n_{clauses}$). Additionally, we have the inequalities specifying the global bounds.

\begin{figure}[ht]
\centering
\begin{tabular}{ccc}
  \includegraphics[width=0.3\linewidth]{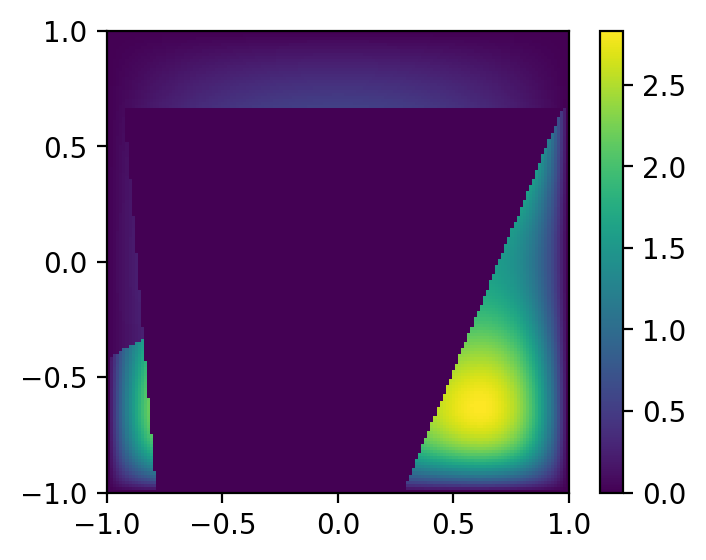} &
  \includegraphics[width=0.3\linewidth]{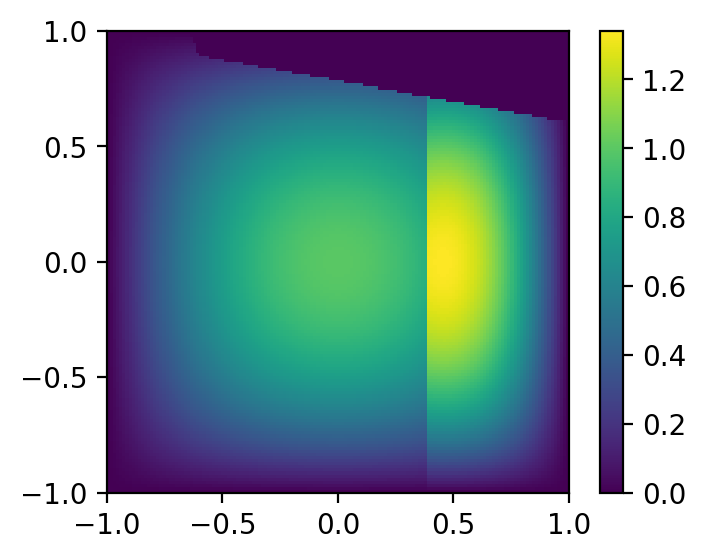} &
  \includegraphics[width=0.3\linewidth]{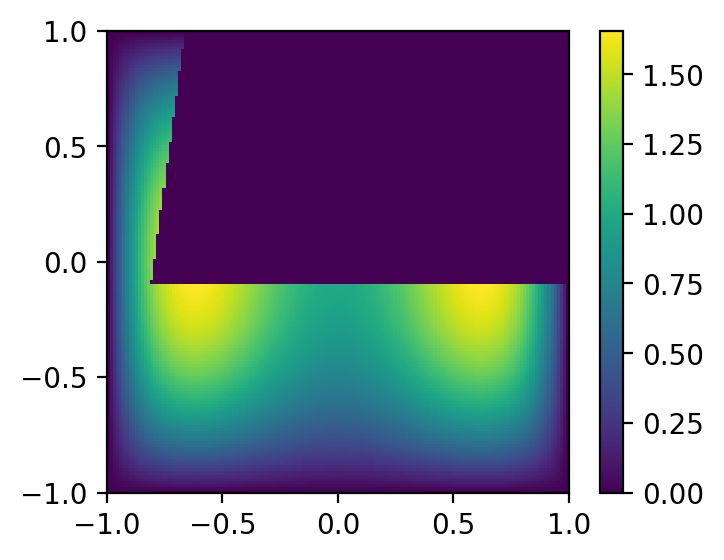} \\
  \includegraphics[width=0.3\linewidth]{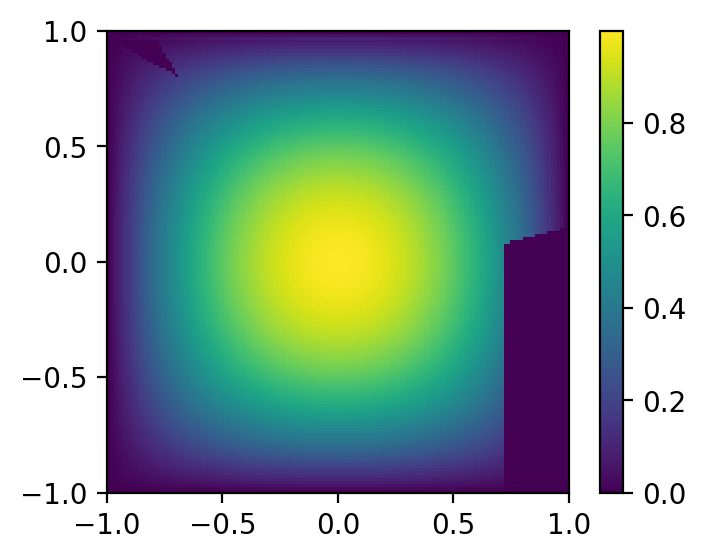} &
  \includegraphics[width=0.3\linewidth]{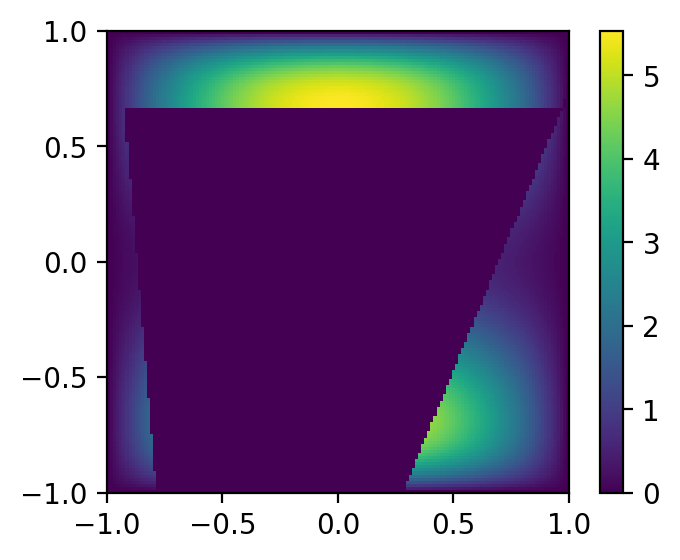} &
  \includegraphics[width=0.3\linewidth]{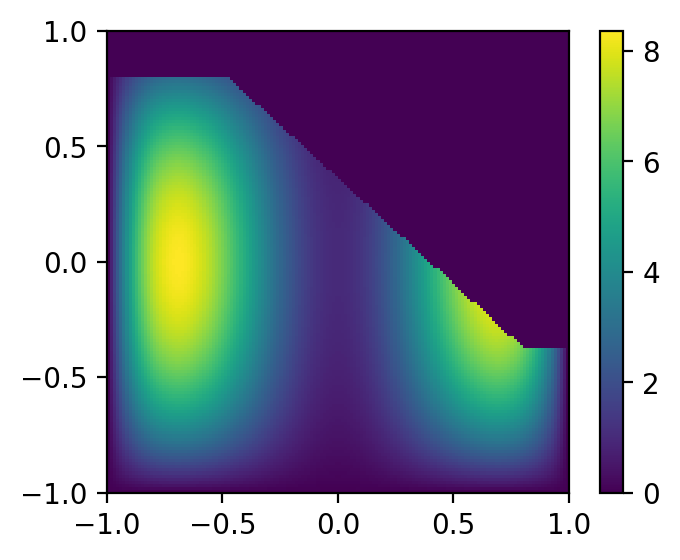} \\\includegraphics[width=0.3\linewidth]{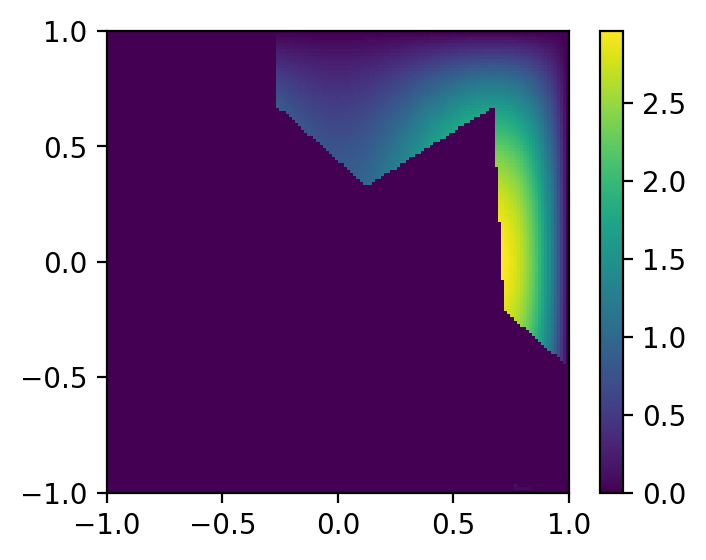} &
  \includegraphics[width=0.3\linewidth]{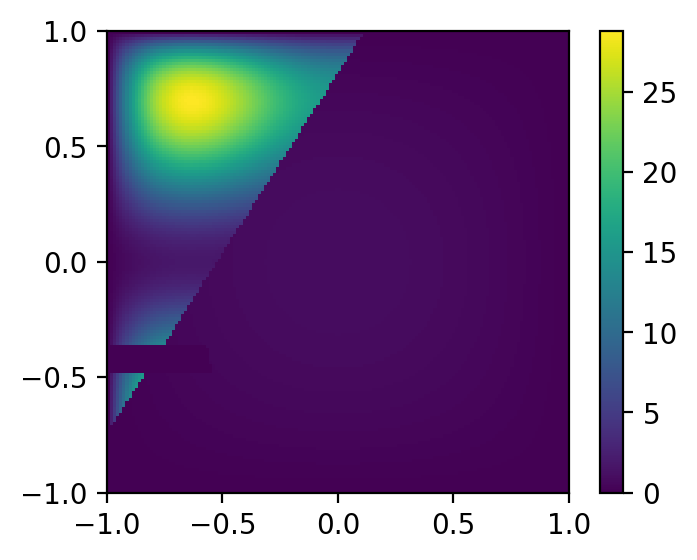} &
  \includegraphics[width=0.3\linewidth]{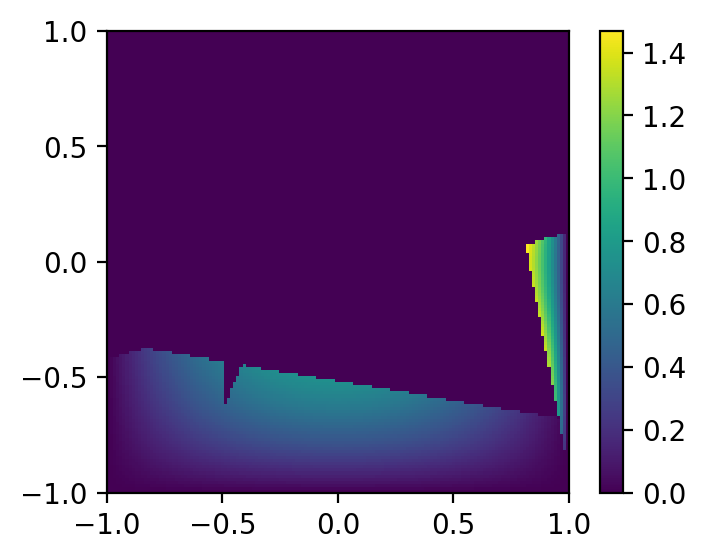} \\
\end{tabular}
\caption{Random examples of the resulting density for problems of shape \emph{PATH} in the $2d$-case.}
\label{fig:tree_primal_density_path}
\end{figure}

\begin{figure}[ht]
\centering
\begin{tabular}{ccc}
  \includegraphics[width=0.32\linewidth]{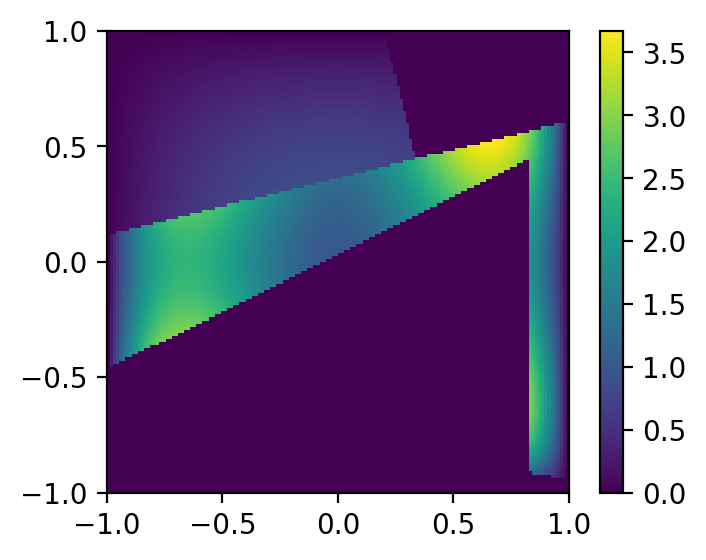} &
  \includegraphics[width=0.32\linewidth]{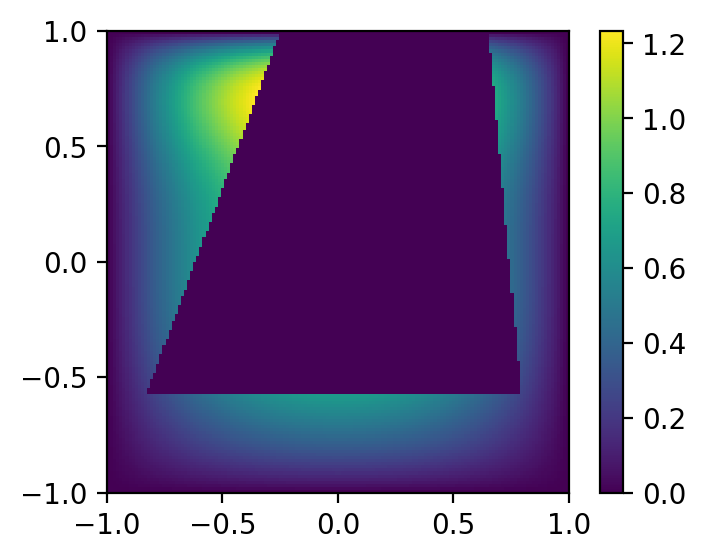} &
  \includegraphics[width=0.32\linewidth]{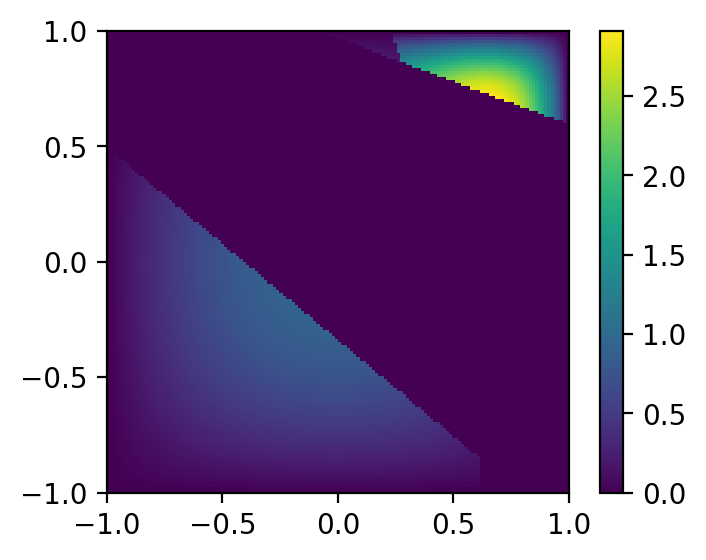} \\
  \includegraphics[width=0.32\linewidth]{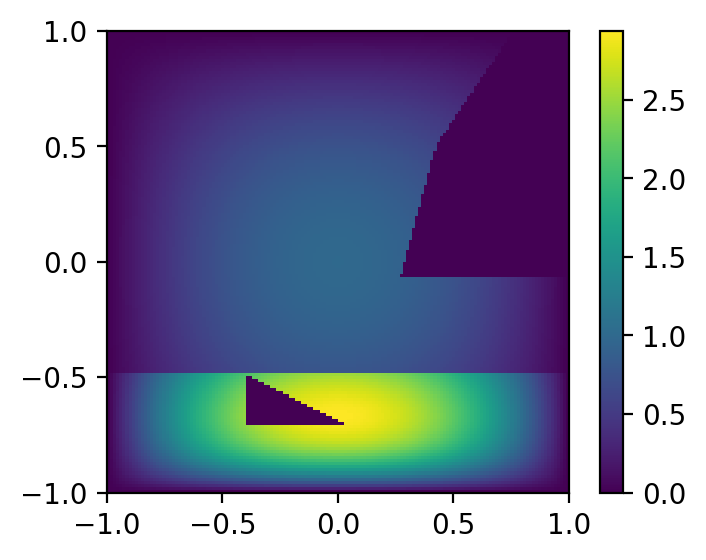} &
  \includegraphics[width=0.32\linewidth]{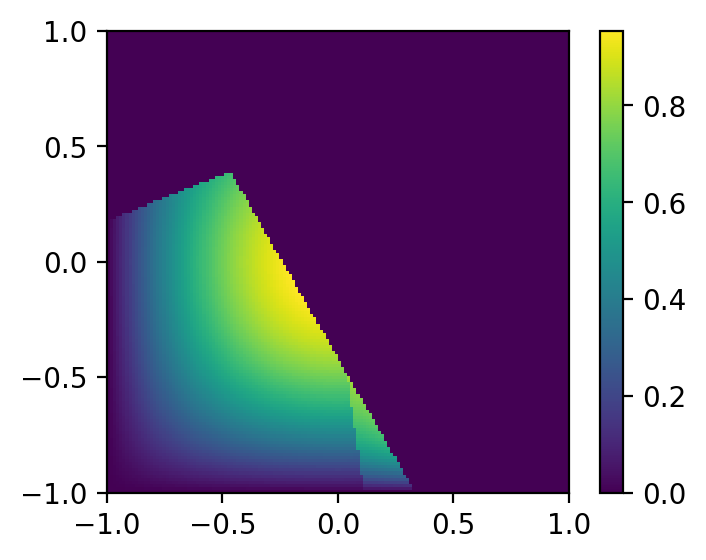} &
  \includegraphics[width=0.32\linewidth]{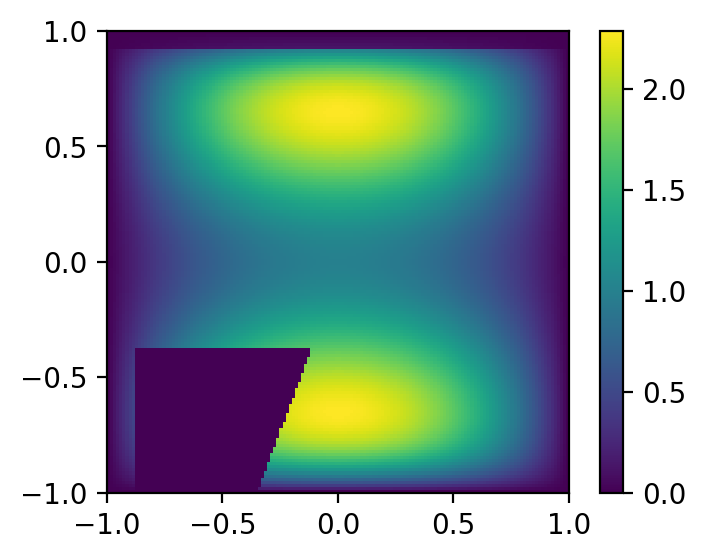} \\\includegraphics[width=0.32\linewidth]{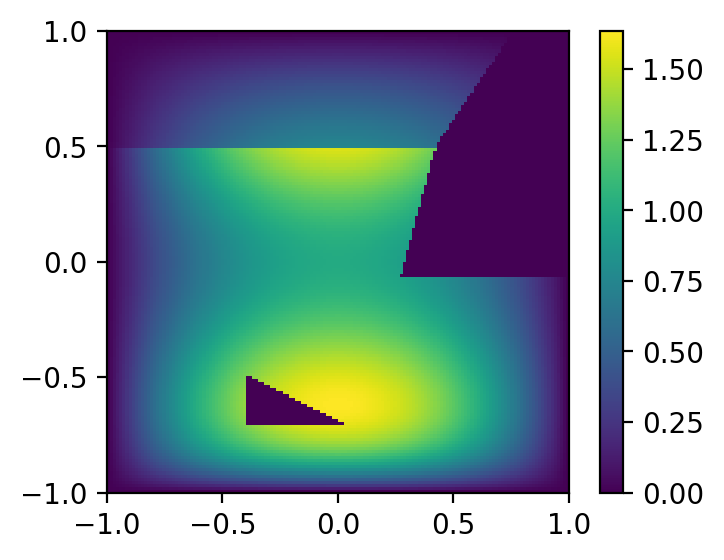} &
  \includegraphics[width=0.32\linewidth]{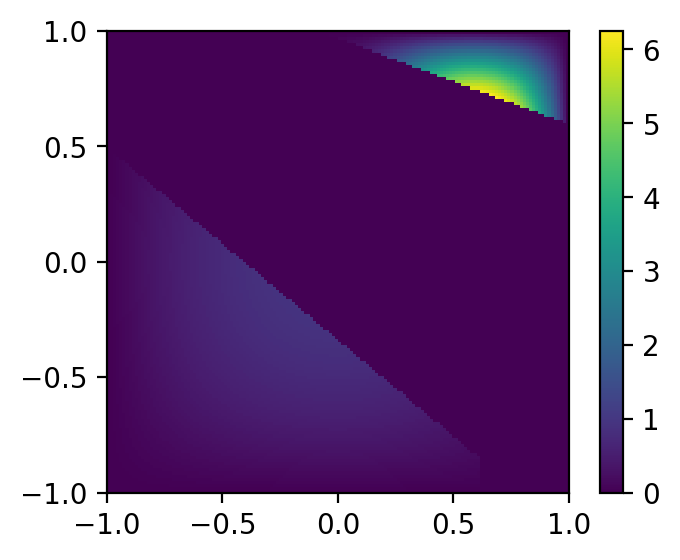} &
  \includegraphics[width=0.32\linewidth]{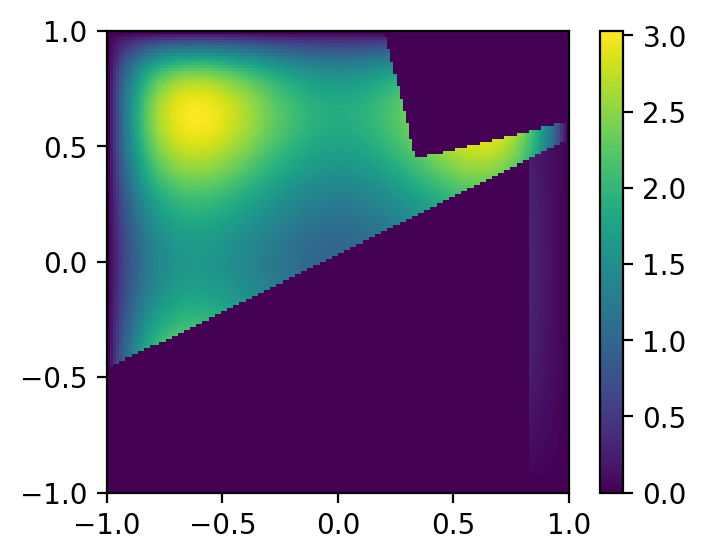} \\
\end{tabular}
\caption{Random examples of the resulting density for problems of shape \emph{SNOW} in the $2d$-case.}
\label{fig:tree_primal_density_snow}
\end{figure}

\begin{figure}[ht]
\centering
\begin{tabular}{ccc}
  \includegraphics[width=0.32\linewidth]{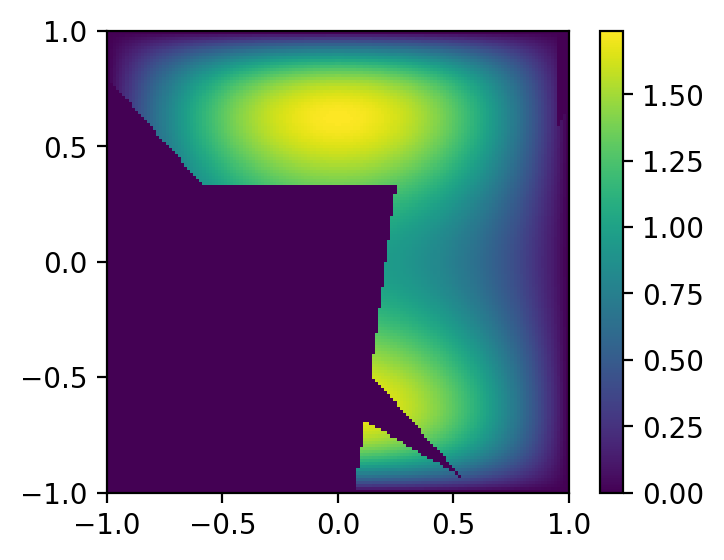} &
  \includegraphics[width=0.32\linewidth]{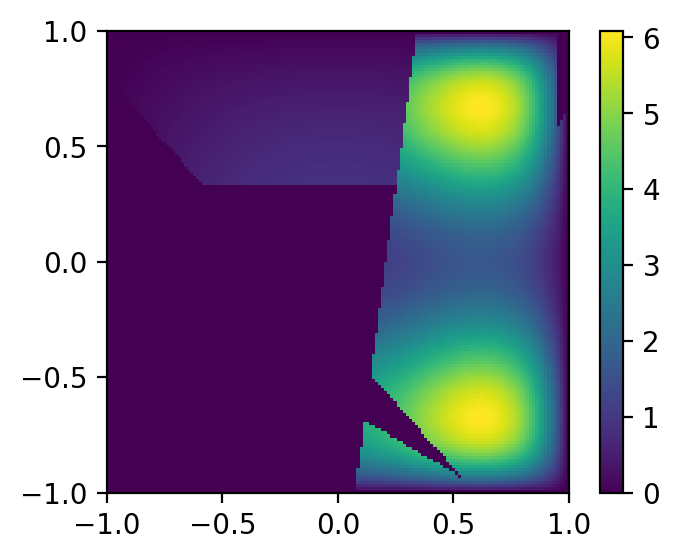} &
  \includegraphics[width=0.32\linewidth]{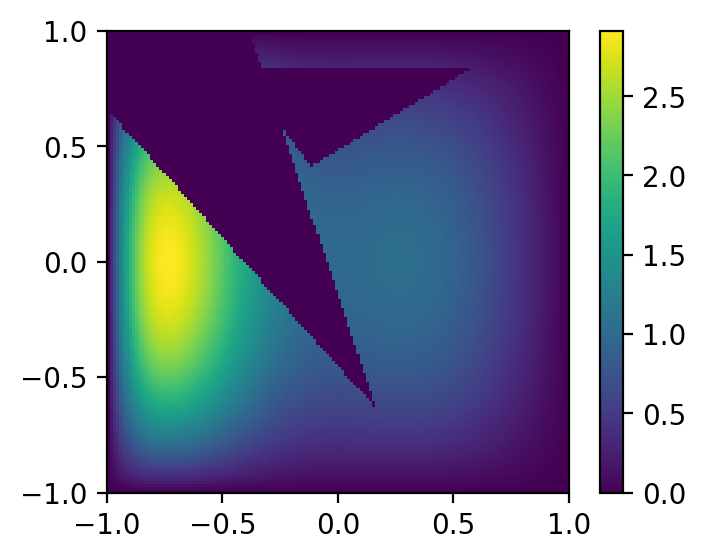} \\
  \includegraphics[width=0.32\linewidth]{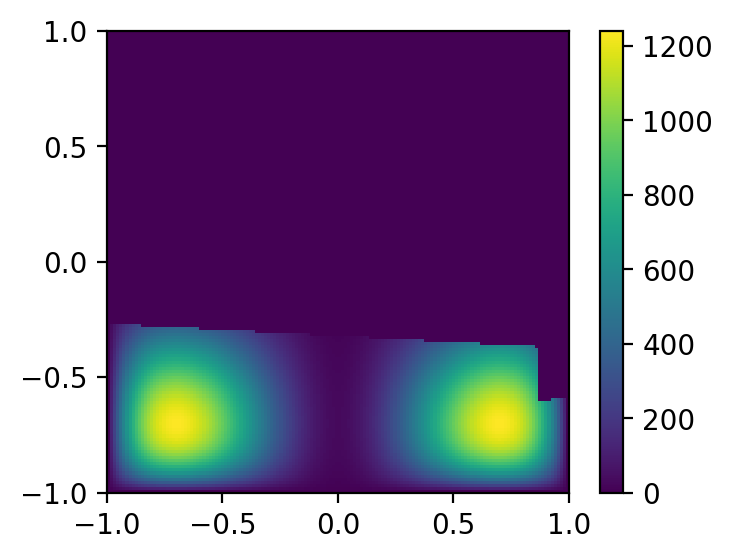} &
  \includegraphics[width=0.32\linewidth]{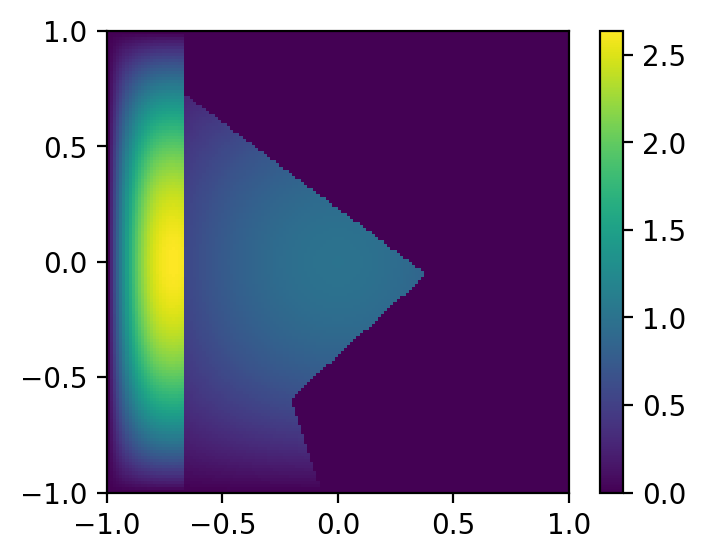} &
  \includegraphics[width=0.32\linewidth]{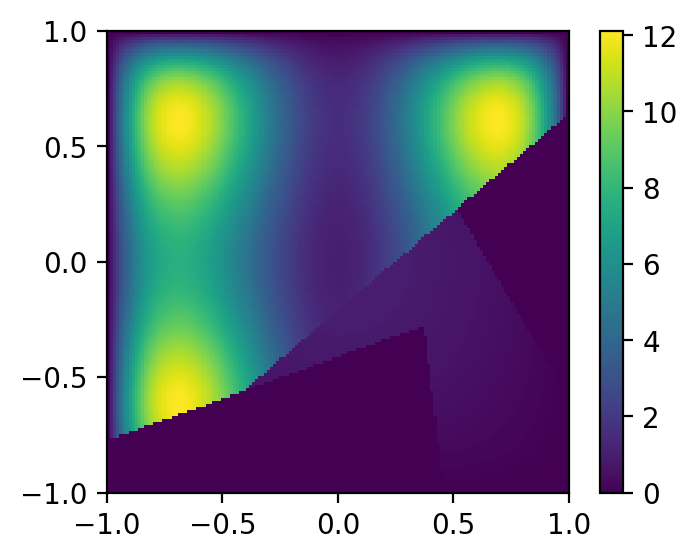} \\\includegraphics[width=0.32\linewidth]{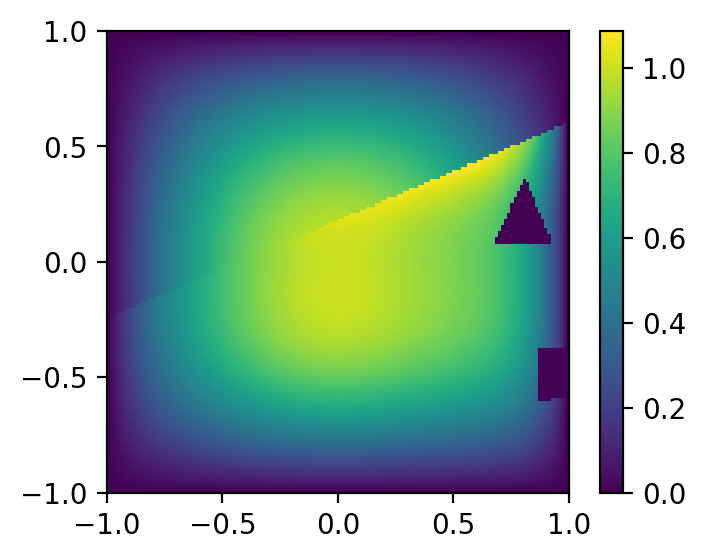} &
  \includegraphics[width=0.32\linewidth]{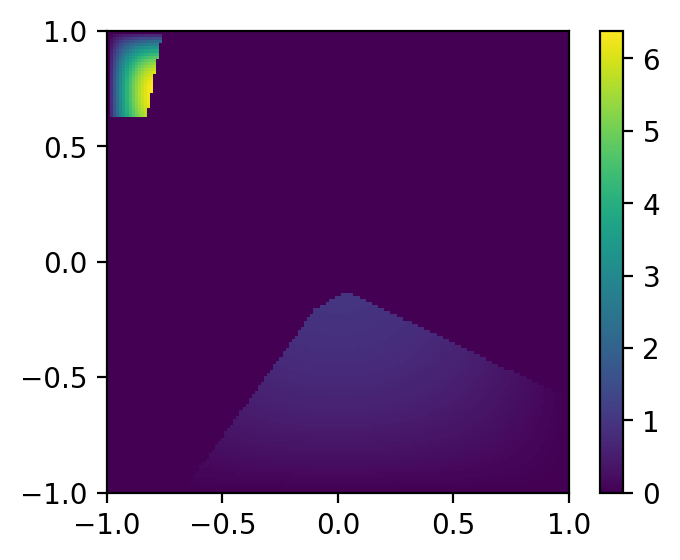} &
  \includegraphics[width=0.32\linewidth]{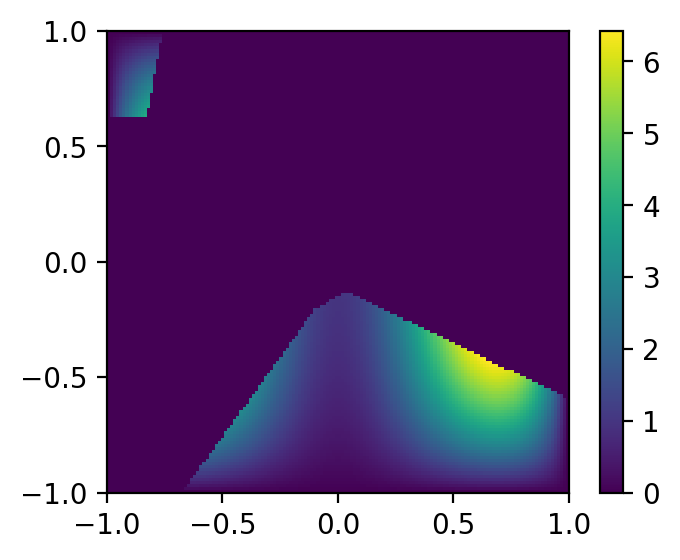} \\
\end{tabular}
\caption{Random examples of the resulting density for problems of shape \emph{STAR} in the $2d$-case.}
\label{fig:tree_primal_density_star}
\end{figure}

\FloatBarrier

\subsection{Hyperparameters for the optimizers for STAR, SNOW and PATH}
\label{sec:appendix_hyperparames_opt_tree_primal}

In general, the optimizers we compare the MP-MAP to are run with increasing budget until either we have a time-out or we have have a relative error of $0.01$ compared to the grid-search baseline.

\textbf{PA-SHGO} We run PA-SHGO with the following hyperparameters:
\begin{itemize}
    \item \textbf{local-optimiers}  cobyla
    \item \textbf{budget (iters)} 1 2 4 16 32 64 128 256
\end{itemize}

\textbf{ParticleAdam} We run ParticleAdam with the following hyperparameters:
\begin{itemize}
    \item \textbf{lr} 0.001 
    \item \textbf{max-iterations} 2500
    \item \textbf{budget (particles)} 100 1000 10000 100000 1000000
\end{itemize}

\textbf{Baseline Recursive Grid-Search} We run the baseline (grid search with a recursion into the best found point) with the following hyperparameters:
\begin{itemize}
    \item \textbf{grid size (per dim)} 10
    \item \textbf{precision} double
    \item \textbf{number of recursions} 30
    \item \textbf{recursion shrink factor} 0.2 (shrinks the domain in the recursion)
\end{itemize}
This grid search is run per polytope enumerted, via the domain of the bounding box of the poytope. It is therefure used as an local optimizer in \cref{alg:pa_map}. It constitutes a simple global search baseline that is intentionally easy to outperform. As \mpmap{} returns the global optimum, it always beats the this baseline and is therefore always accepted.

\subsection{Results on STAR, SNOW, PATH}

\begin{table}[t]
\centering
\begin{tabular}{llll}
\toprule
 & MP-MAP & PA-SHGO & ParticleAdam \\
Num Variables &  &  &  \\
\midrule
2 & 0.05 \(\pm\) 0.03 & \textbf{0.01 }\(\pm\) 0.00 & 5.03 \(\pm\) 9.44 \\
4 & \textbf{0.14 }\(\pm\) 0.12 & 0.33 \(\pm\) 0.36 & 11.90 \(\pm\) 12.80 \\
6 & \textbf{1.77 }\(\pm\) 6.05 & 5.07 \(\pm\) 9.57 & 22.74 \(\pm\) 11.49 \\
8 & \textbf{3.85 }\(\pm\) 8.25 & 15.51 \(\pm\) 14.51 & 18.79 \(\pm\) 12.20 \\
10 & \textbf{8.46 }\(\pm\) 12.48 & 23.27 \(\pm\) 11.40 & 23.28 \(\pm\) 10.47 \\
12 & \textbf{5.25 }\(\pm\) 9.55 & 30.00 \(\pm\) 0.00 & 23.45 \(\pm\) 9.72 \\
14 & \textbf{7.05 }\(\pm\) 10.27 & - & - \\
16 & \textbf{6.56 }\(\pm\) 10.32 & - & - \\
18 & \textbf{12.01 }\(\pm\) 12.84 & - & - \\
20 & \textbf{10.46 }\(\pm\) 12.87 & - & - \\
22 & \textbf{17.01 }\(\pm\) 14.14 & - & - \\
24 & \textbf{13.34 }\(\pm\) 13.01 & - & - \\
26 & \textbf{18.90 }\(\pm\) 13.25 & - & - \\
\bottomrule
\end{tabular}

\caption{Runtime comparison on the SNOW-dataset in mean and standard derivations for our approaches.}
\label{tab:tree_primal_results_snow_long}
\end{table}

\begin{table}[t]
\centering
\begin{tabular}{llll}
\toprule
 & MP-MAP & PA-SHGO & ParticleAdam \\
Num Variables &  &  &  \\
\midrule
2 & 0.07 \(\pm\) 0.06 & \textbf{0.02 }\(\pm\) 0.01 & 3.85 \(\pm\) 7.88 \\
4 & 1.39 \(\pm\) 5.97 & \textbf{0.29 }\(\pm\) 0.30 & 10.83 \(\pm\) 12.31 \\
6 & \textbf{1.53 }\(\pm\) 6.07 & 2.83 \(\pm\) 3.50 & 17.19 \(\pm\) 12.82 \\
8 & \textbf{1.70 }\(\pm\) 5.92 & 10.34 \(\pm\) 11.15 & 22.50 \(\pm\) 11.35 \\
10 & \textbf{1.83 }\(\pm\) 5.90 & 19.59 \(\pm\) 11.39 & 24.26 \(\pm\) 9.95 \\
12 & \textbf{2.01 }\(\pm\) 6.13 & 25.70 \(\pm\) 9.35 & 23.29 \(\pm\) 10.16 \\
14 & \textbf{0.84 }\(\pm\) 0.57 & - & - \\
16 & \textbf{7.56 }\(\pm\) 12.20 & - & - \\
18 & \textbf{3.56 }\(\pm\) 8.20 & - & - \\
20 & \textbf{2.68 }\(\pm\) 6.21 & - & - \\
22 & \textbf{11.08 }\(\pm\) 13.48 & - & - \\
24 & \textbf{10.05 }\(\pm\) 12.93 & - & - \\
26 & \textbf{13.69 }\(\pm\) 13.85 & - & - \\
\bottomrule
\end{tabular}

\caption{Runtime comparison on the STAR-dataset in mean and standard derivations for our approaches.}
\label{tab:tree_primal_results_star_long}
\end{table}

\begin{table}[t]
\centering
\begin{tabular}{llll}
\toprule
 & MP-MAP & PA-SHGO & ParticleAdam \\
Num Variables &  &  &  \\
\midrule
2 & 0.04 \(\pm\) 0.02 & \textbf{0.02 }\(\pm\) 0.00 & 9.79 \(\pm\) 12.97 \\
4 & 1.08 \(\pm\) 2.22 & \textbf{0.19 }\(\pm\) 0.16 & 14.75 \(\pm\) 13.37 \\
6 & \textbf{6.56 }\(\pm\) 9.54 & 8.18 \(\pm\) 10.70 & 19.04 \(\pm\) 12.66 \\
8 & \textbf{11.54 }\(\pm\) 12.55 & 18.81 \(\pm\) 11.81 & 17.78 \(\pm\) 12.23 \\
10 & \textbf{12.08 }\(\pm\) 12.50 & 21.57 \(\pm\) 11.62 & 23.40 \(\pm\) 10.28 \\
12 & \textbf{14.55 }\(\pm\) 12.91 & 30.00 \(\pm\) 0.00 & 21.07 \(\pm\) 10.58 \\
14 & \textbf{17.97 }\(\pm\) 13.14 & - & - \\
16 & \textbf{12.51 }\(\pm\) 12.55 & - & - \\
18 & \textbf{20.05 }\(\pm\) 12.93 & - & - \\
20 & \textbf{20.89 }\(\pm\) 11.91 & - & - \\
22 & \textbf{18.78 }\(\pm\) 11.85 & - & - \\
24 & \textbf{22.71 }\(\pm\) 10.17 & - & - \\
26 & \textbf{24.38 }\(\pm\) 8.60 & - & - \\
\bottomrule
\end{tabular}

\caption{Runtime comparison on the PATH-dataset in mean and standard derivations for our approaches.}
\label{tab:tree_primal_results_path_long}
\end{table}

\FloatBarrier

\subsection{Details for SDD experiments}%
\label{sec:app_details_sdd_experiments}

We provide additional experiments on the SDD dataset, and to the configuration of the optimization algorithms used in our experiments.
We recall that for this dataset, we learn the densities with PAL~\cite{kurscheidt2025probabilistic}. PAL densities are non-negative piecewise polynomials, each piece being an axis-aligned box 
$\Phi = [l_1, u_1] \times \cdots \times [l_N, u_N] \subseteq \mathbb{R}^N$
over which the density is defined as
\begin{equation}%
    \label{eq:pal_density}
    p_\Phi(\vx) = \sum_{i=1}^d \alpha_i \prod_{j=1}^N s_j(x_j)^2,
\end{equation}
where each $s_j(x_j)$ is a cubic polynomial. 
As such, they can be seen as simple squared probabilistic circuits (PCs) \citep{choi2020pc,vergari2021compositional} which have been recently investigated in the PC literature for their expressiveness properties
\citep{loconte2024subtractive,loconte2025sum,loconte2025square}.

\subsubsection{\pcadam}%
\label{sec:app_details_particle_adam}

In order to better understand the performance \pcadam{}, we conduct experiments over different hyperparameter configurations. 
We fix the learning rate to $0.1$ and vary:
\begin{itemize}
    \item $N$: number of parallel particles used in the optimization;
    \item $\mathit{it}$: number of iterations for each particle;
\end{itemize}
The results are shown in \cref{fig:sdd_ablation_optimizers}.
As expected, all configurations report feasible solutions, but the performance varies significantly. 
Since the SDD dataset is low-dimensional (2D) and the feasible region is broad, \pcadam{} can find good solutions if enough particles and iterations are used.
However, this comes at the cost of increased computation time.

\begin{figure}
    \centering
    \begin{subfigure}[T]{0.36\linewidth}
        \centering
        \includegraphics[width=.9\linewidth]{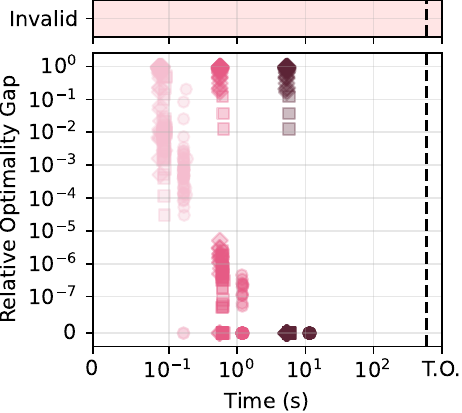}
        \includegraphics[width=\linewidth]{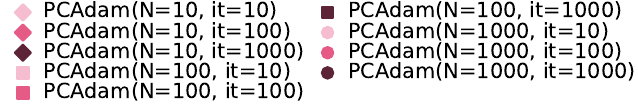}
        \caption{\pcadam{} configurations varying number of particles and iterations on SDD.}%
        \label{fig:sdd_pcadam_performance}
    \end{subfigure}
    \begin{subfigure}[T]{0.61\linewidth}
        \centering
        \includegraphics[width=.53\linewidth]{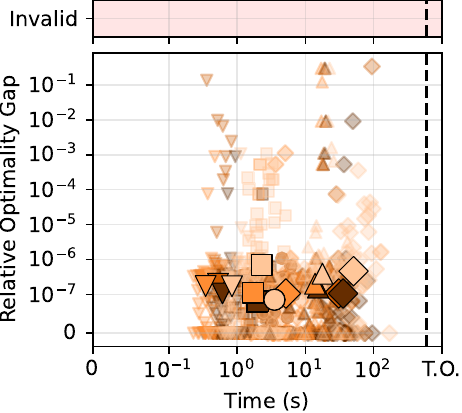}
        \includegraphics[width=\linewidth]{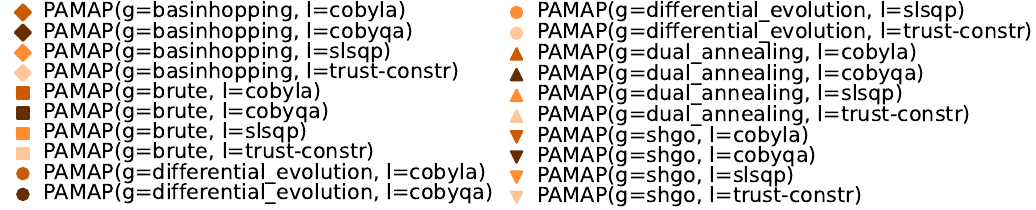}
        \caption{\pamap{} with differenc numerical optimizers from Scipy  on SDD. Since here the best configuration is not eye-catching, we also show the median point for each configuration.}%
        \label{fig:sdd_scipy_performance}
    \end{subfigure}
    \caption{Experiments on SDD dataset varying \pcadam{} hyperparameters (left) and \pamap{} numerical optimizers (right).}%
    \label{fig:sdd_ablation_optimizers}
\end{figure}

\subsubsection{\pamap}%
\label{sec:app_details_pa_map}

\paragraph{Enumeration and upper bounds.}
For these densities, we implement the enumerator as follows.
First, we compute the upper bound for each spline piece (see below).
Then we sort the pieces in decreasing order of their upper bounds, so to increase the chances of pruning suboptimal pieces early.
Finally, we enumerate polytopes corresponding to each piece in turn, terminating when the upper bound of the current piece is lower than the current best value.

An upper bound of the maximum of~\eqref{eq:pal_density} over a piece $\Phi$ can be computed as follows:
\begin{align}
\max_{\vx \models \Phi} p_\Phi(\vx)
&\leq \sum_{i=1}^d \alpha_i  \cdot \max_{\vx \models \Phi} \prod_{j=1}^N s_j(x_j)^2\\
&= \sum_{i=1}^d \alpha_i \prod_{j=1}^N \max_{x_j \in [l_j, u_j]} s_j(x_j)^2
\label{eq:bound_above_nd}
\end{align}

The critical points of $s_j(x_j)^2$ are the same as those of $s_j(x_j)$, which can be computed in closed form. 
Thus, we evaluate $s_j(x_j)$ at its critical points within $[l_j, u_j]$, as well as at the interval boundaries $l_j$ and $u_j$. 
The maximum squared value among these points provides the desired bound.
\cref{fig:trajectory_pruning_grid} shows how pruning works in practice on 10 sample trajectories from the SDD dataset.

\begin{figure*}[t]
    \centering
    \newcommand{\trajgroup}[2]{%
        \begin{minipage}{0.48\textwidth}
            \centering
            \begin{subfigure}{0.3\linewidth}
                \centering
                \includegraphics[width=\linewidth]{figures/upper-bound/trajectory_#1_densities.pdf}
            \end{subfigure}
            \begin{subfigure}{0.3\linewidth}
                \centering
                \includegraphics[width=\linewidth]{figures/upper-bound/trajectory_#1_upper_bound.pdf}
            \end{subfigure}
            \begin{subfigure}{0.3\linewidth}
                \centering
                \includegraphics[width=\linewidth]{figures/upper-bound/trajectory_#1_densities_shgo_polytopes.pdf}
            \end{subfigure}
            \caption*{Traj. #1. Enumerated polytopes: #2/257}
        \end{minipage}%
    }

    \trajgroup{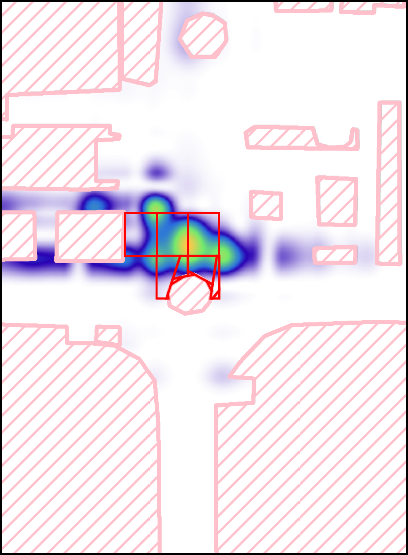}{12}\hfill\trajgroup{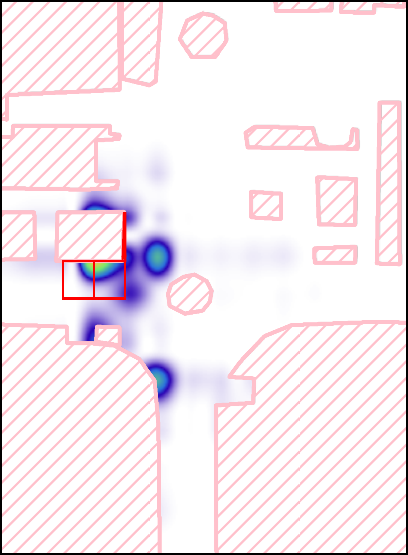}{4}
    \vspace{0.4cm}

    \trajgroup{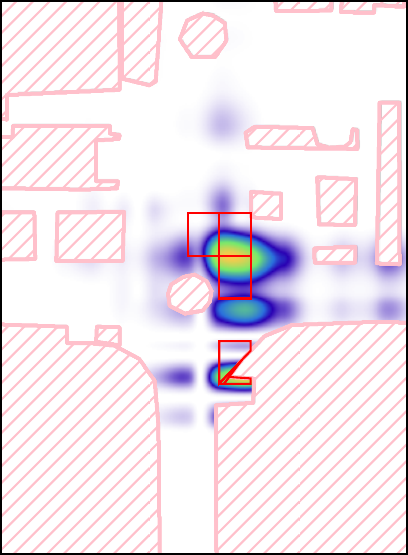}{6}\hfill\trajgroup{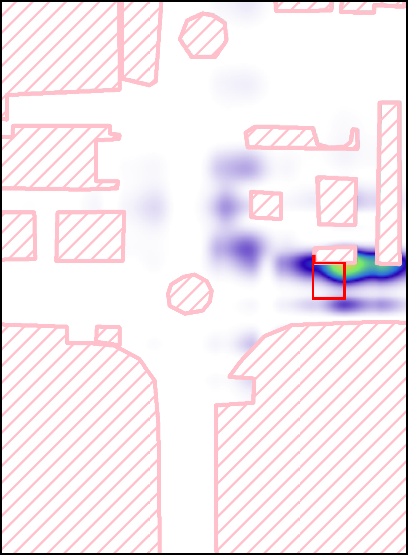}{2}
    \vspace{0.4cm}

    \trajgroup{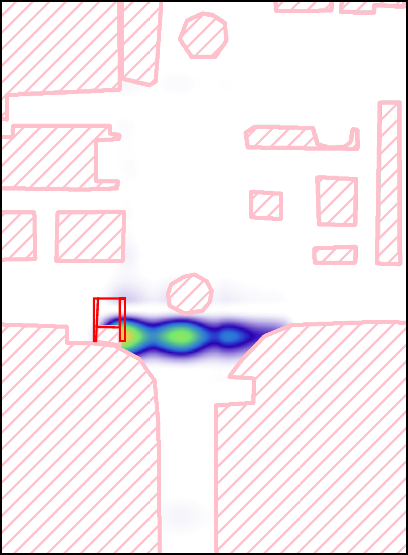}{3}\hfill\trajgroup{17000}{9}
    \vspace{0.4cm}

    \trajgroup{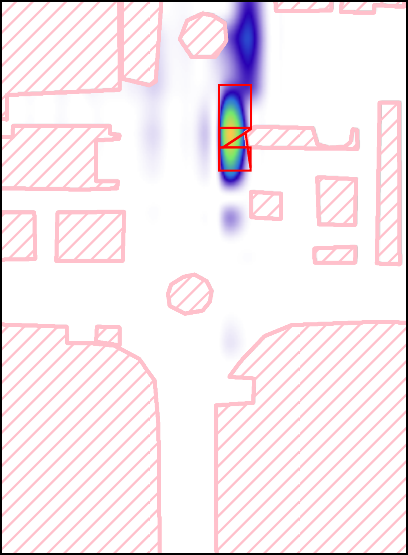}{6}\hfill\trajgroup{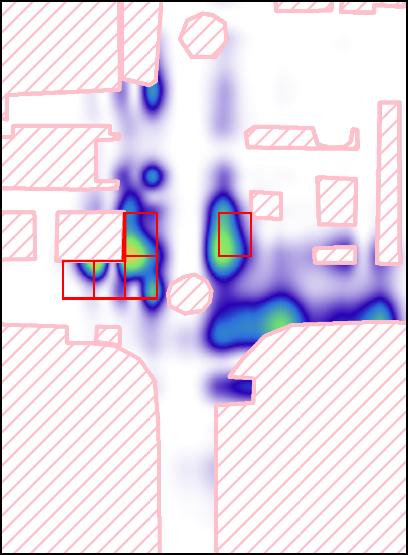}{7}
    \vspace{0.4cm}

    \trajgroup{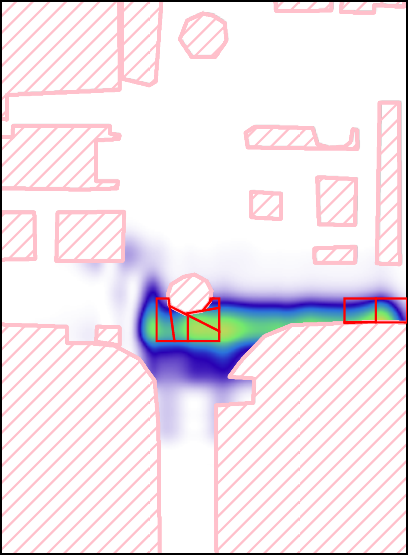}{10}\hfill\trajgroup{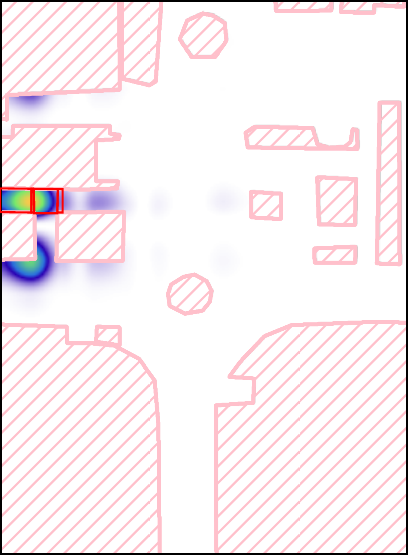}{4}

    \caption{Visualization of the \pamap{} pruning process across 10 sample trajectories. Each group shows the predictive density (left), the computed upper bounds used for pruning (center), and the polytopes analyzed by \pamap(\shgo) thanks to pruning (right).}%
    \label{fig:trajectory_pruning_grid}
\end{figure*}

\paragraph{SciPy's optimizers.}
For \pamap{}(SciPy), we tried all the combinations of global and local constrained optimizers available in SciPy's optimize library. The results are shown in \cref{fig:sdd_scipy_performance}.

From the plot, we can see that the fastest configuration is the combination of \shgo{} as global optimizer and SLSQP as local optimizer, achieving comparable relative error as the other configurations, hence the one we used in our experiments.

\subsubsection{OMT encoding}%
\label{sec:app_details_omt_encoding}
OMT(\Tnlarat) solvers require the optimization problem to be encoded as a pair of a logical formula and an objective function encoded as an \Tnlarat-{term}.
While the logical formula is directly given by the problem, multiple choices are possible for encoding the objective function~\eqref{eq:pal_density}.

A first possibility is to encode it as a nested if-then-else expression:
\begin{equation}
    \text{Ite}(\Phi_1, p_{\Phi_1}(\vx), \text{Ite}(\Phi_2, p_{\Phi_2}(\vx), \ldots))
\end{equation}
where each $\Phi_i$ is a piece of the PAL density.
An alternative is to encode the objective as a sum of if-then-else expressions:
\begin{equation}
    \sum_{\Phi_i} \text{Ite}(\Phi_i, p_{\Phi_i}(\vx), 0).
\end{equation}
To ensure that this encoding is well defined, the regions $\Phi_i$ must be mutually exclusive. We enforce this by defining each region using left-closed, right-open intervals, except for the final region along each dimension, which is closed on the right as well.

We tested both encodings using OptiMathSAT and CDCL-OCAC. The first encoding was challenging for both OMT solvers, and both timed out without finding a solution. With the second encoding, OptiMathSAT in anytime mode occasionally found a solution within the time limit, exiting with an error the remaing times. In contrast, CDCL-OCAC timed out on both encodings without finding any solution.

\FloatBarrier

\subsection{Details For Constrained MAP-prediction For Imputation On Tabular Data}
\label{app:details_tabular_dataset}

\subsubsection{The TVAE Optimization Objective}

In order to train a TVAE \cite{DBLP:conf/nips/XuSCV19}, our optimization objective is the common ELBO-style optimization objective extended to handle categorical data:

\begin{align}
    \mathsf{loss\text{-}item}(x_i, \hat{x}_i, \hat{\sigma}_i) &= 
    \begin{cases}
  \frac{1}{2\hat{\sigma_i^2}}(x_i - \hat{x}_i)^2 - \log \sigma_i, & \text{if } \neg (\mathsf{is\text{-}categorical}(x_i))\\
  \mathsf{CrossEntropy}(x_i, \hat{x}_i)
    \end{cases}\\
  \mathcal{L}_{rec}(\vx, \hat{\vx},\hat{\sigma}) &= \frac{1}{N}\sum_{b=1}^N \sum_i \mathsf{loss\text{-}item}(x_i^b, \hat{x}_i^b, \hat{\sigma}_i^b)\\
  \mathcal{L}_{KL}(\vmu_{latent}, \vsigma_{latent}) &= \frac{1}{N}(\sum_{b=1}^N - \frac{1}{2}\sum_i (1 + \log (((\sigma_{latent})^b_i)^2) - ((\mu_{latent})^b_i)^2 -  ((\sigma_{latent})^b_i)^2))\\
   \mathcal{L}(\vx) &= \mathcal{L}_{rec}(\vx, \hat{\vx}) + \mathcal{L}_{KL}(\vmu_{latent}, \vsigma_{latent})\\
   \text{where } & (\vmu_{latent}, \vsigma_{latent}) = \mathsf{encoder}_{NN}(\vx)\\
   & \vz \sim \mathcal{N}(\vmu_{latent}, \vsigma_{latent})\\
   & (\hat{\vx},\hat{\sigma}) = \mathsf{decoder}_{NN}(\vz)
\end{align}

In order compute $\mathit{cMAP}(p, \formula) = \argmax_{\vx\models\formula}{p(\vx)}$, we have to decide for a $p$ to optimize. We essentially have two options: We can directly optimize the ELBO-relaxation of $p$ that is detailed above, which is a stochastic objective, or we can optimize over $\vx$ and $\vz$ jointly and then discard $\vz$. The second objective is a common starting point to escape the stochastic nature of the first \cite{gonzalez2022solving}.

\subsubsection{Discrete Variables}

Similar to \citet{stoianbeyond}, we treat the discrete variables as continuous from the point of view of our model.

\FloatBarrier

\subsubsection{Results}
\label{app:tabular_results}

We benchmark our methods on the \emph{House-Price} prediction dataset with the constraints provided by \citet{stoianbeyond}. We train an TVAE-model according to the hyperparameters provided by \citet{stoianbeyond}, so with $150$ epochs, batch size $70$, l2scale $0.0002$, learning rate $0.0002$ (we use Adam) and loss-factor $2$.

In order to perform our MAP-prediction we use $100$-samples from the latent in order to estimate $p(x_m)$, and use a learning rate of $0.1$. We provide detailed results in table \ref{tab:imputation_stats}.

In order to generate the starting-points, for the unconstrained baselines we sample from the model and for Pa(PCAdam) we first sample unconstrained and then project into the current enumerated polytope. 

\begin{table}
\centering
\begin{tabular}{lcrrrrr}
\toprule
Method
& Particles
& Mean
& Median
& Std.\ Dev.
& Trimmed Mean (5\%)
& Sec./Sample \\
\midrule
Pa(PCAdam) - constrained
& 10
& 5.71
& 0.0023
& 51.71
& 0.16
& 20.33 \\

Adam - unconstrained
& 10
& 42.68
& 0.262
& 220.10
& 11.19
& 4.17 \\

Adam - unconstrained
& 100
& 33.30
& 0.253
& 196.33
& 7.72
& 4.48 \\
\bottomrule
\end{tabular}
\caption{Imputation error statistics (aggregated over the dataset). We use a two-sided $5\%$ trimmed mean, so with the highest $2.5\%$ and lowest $2.5\%$ of our results removed.}
\label{tab:imputation_stats}
\end{table}

\end{document}